\newcommand{\noun}[1]{\textsc{#1}}
\providecommand{\algorithmname}{Algorithm}
\theoremstyle{plain}
\newtheorem{thm}{\protect\theoremname}[section]
\theoremstyle{definition}
\newtheorem{defn}[thm]{\protect\definitionname}
\theoremstyle{definition}
\newtheorem{example}[thm]{\protect\examplename}
\theoremstyle{plain}
\newtheorem{assumption}[thm]{\protect\assumptionname}
\theoremstyle{plain}
\newtheorem{prop}[thm]{\protect\propositionname}
\theoremstyle{remark}
\newtheorem{rem}[thm]{\protect\remarkname}
\theoremstyle{remark}
\newtheorem*{acknowledgement*}{\protect\acknowledgementname}
\theoremstyle{plain}
\newtheorem{lem}[thm]{\protect\lemmaname}
\providecommand{\acknowledgementname}{Acknowledgement}
\providecommand{\assumptionname}{Assumption}
\providecommand{\definitionname}{Definition}
\providecommand{\examplename}{Example}
\providecommand{\lemmaname}{Lemma}
\providecommand{\propositionname}{Proposition}
\providecommand{\remarkname}{Remark}
\providecommand{\theoremname}{Theorem}
\begin{document}
	

\title{Online non-convex learning for river pollution source identification}

\author{
\name{Wenjie Huang\textsuperscript{a,b}, 
Jing Jiang\textsuperscript{c}\thanks{} and Xiao Liu\textsuperscript{d,e}\thanks{}}
\affil{\textsuperscript{a}HKU-Musketeers Foundation Institute of Data Science, The University of Hong Kong, Hong Kong SAR; \\ \textsuperscript{b}Department of Industrial and Manufacturing Systems Engineering, The University of Hong Kong, Hong Kong SAR; \\
\textsuperscript{c}JD.com, Inc., Beijing, China;\\
\textsuperscript{d}Department of Industrial Engineering and Management, Shanghai Jiao Tong University, Shanghai, China;\\
\textsuperscript{e}Department of Industrial Systems Engineering and Management, National University of Singapore, Singapore.}
}
\maketitle

\noindent
$*$ Corresponding author: Professor Xiao Liu\\
\#15-02, CREATE Tower, 1 CREATE Way, Singapore 138602\\
Email: x\_liu@sjtu.edu.cn\\
Phone: +65-6601-6129\\
Fax: +65-6601-6129

\newpage
\textbf{\large Online non-convex learning for river pollution source identification}
\vspace{0.5cm}
\maketitle

\begin{abstract}
	In this paper, novel gradient-based online learning algorithms are developed to investigate an important environmental application: real-time river pollution source identification, which aims at estimating the released mass, location, and time of a river pollution source based on downstream sensor data monitoring the pollution concentration. {\color{black}{The pollution is assumed to be instantaneously released once.}} The problem can be formulated as a non-convex loss minimization problem in statistical learning, and our online algorithms have vectorized and adaptive step sizes to ensure high estimation accuracy in three dimensions which have different magnitudes. In order to keep the algorithm from stucking to the saddle points of non-convex loss, the ``escaping from saddle points'' module and multi-start setting are derived to further improve the estimation accuracy by searching for the global minimizer of the loss functions. This can be shown theoretically and experimentally as the $O(N)$ local regret of the algorithms and the high probability cumulative regret bound $O(N)$ under a particular error bound condition in loss functions. A real-life river pollution source identification example shows the superior performance of our algorithms compared to existing methods in terms of estimation accuracy. Managerial insights for the decision maker to use the algorithms are also provided.
	
\end{abstract}

\begin{keywords}
online learning; non-convex optimization; gradient descent; pollution source identification
\end{keywords}

\section{Introduction}
Rivers are one of the most important natural resources; they can be used as sources for water supply, for recreational use, to provide nourishment and a habitat for organisms, and to be viewed as majestic
scenery \citep{sanders_toward_1990}. However, they appear to be vulnerable to water pollution because of
their openness and accessibility to agricultural, industrial,
and municipal processes \citep{thibault_facing_2009}. The release of water pollutants into rivers,
such as from accidental or deliberate sewage leakages from factories or
treatment plants or the intentional release of toxic chemicals, threatens
human health, the living environment, and ecological security. According
to statistics in \cite{ji_accidents_2017}, 373 water pollution
accidents occurred from 2011 to 2015, some of which had serious
consequences. Additionally, the number of water pollution accidents has
been rising \citep{shao_city_2006, qu_multi-stage_2016, guo_mathematical_2019}. To mitigate the negative impacts caused
by such accidents, there is an urgent need to develop an efficient approach
for identifying water pollution sources (i.e., released mass, location, and released time of pollution accidents), which is the foundation for
quick emergency response and timely post-accident remediation.

\subsection{Pollution Source Identification}
The identification of pollution sources in rivers poses a great challenge for two main reasons.
First, some elements in the migration process, such as velocity and dispersivity, that are key factors for the migration process are, time-varying and environmental-condition-based parameters. The problem of the identification of pollution sources, which aims to trace the pollution source based on the observation of the concentration of pollution from downstream sensors, is thus an inverse problem of analyzing the migration process. Such a problem is complex because of the non-linearity and uncertainties of the migration process \citep{wang_new_2018}. 
Second, as streaming observations can be collected, ``online'' decisions on identifying pollution sources need be made to improve the speed of the emergency response \citep{preis_contamination_2006}. 


The existing approaches for identifying pollution sources are mainly categorized into the three streams described below. 
1) \textit{The analytic approach}, which directly solves the ``inverse'' problems (see
\cite{li_global_2011} and \cite{li_estimation_2016}). This uses the downstream monitoring sensor data to identify the pollution source information based on the advection-dispersion equation (ADE) model. Specifically, \cite{li_global_2011}
propose the global multi-quadratic collocation method to identify
multi-point sources in a groundwater system. \cite{li_estimation_2016}
analyze the inverse model for the identification of pollutant sources
in rivers by the global space-time radial basis collocation method, which directly induces the problem to a single-step solution of a system of linear algebraic equations in the entire space-time domain. 2) \textit{The optimization approach} mainly aims to minimize the difference between
the actual observation values and the theoretical values of the pollutant
concentration. The implemented algorithms for analyzing the optimization model are the genetic algorithm \citep{zhang_pollutant_2017},
artificial neural network \citep{srivastava_breakthrough_2014}, simulated
anneal algorithm \citep{jha_three-dimensional_2012}, and differential
evolution algorithm \citep{wang_new_2018}. Both the analytic and optimization approaches are sensitive
to uncertainties and observation noise. 3) \textit{The statistical approach} includes the backward probability method \citep{wang_new_2018} and Bayesian and Markov Chain Monte Carlo
methods \citep{hazart_inverse_2014,yang_multi-point_2016}. \cite{wang_new_2018} obtain the source location probability based on the use of equations governing the pollutant migration process and integrated the linear regression model for the source identification.
\cite{yang_multi-point_2016} convert the problem by computing the posterior probability
distribution of source information. 
The statistical approach has an advantage in dealing with noisy and incomplete prior
information (Hazart et al., 2014).

{\color{black}{These approaches for identifying pollution
	sources fall into the ``offline'' fashion, which means that the identification
	has to be made only after all the monitoring data from sensors are collected. However, in this paper, a different approach, the ``online'' learning algorithm, is developed and analyzed to conduct a real-time estimation of pollution source information, based on streaming sensor data.}}

\subsection{Online Learning}
In a world where automatic data collection is ubiquitous, statisticians
must update their paradigms to cope with new problems. {\color{black}{For an internet network or a financial market,
a common feature emerges: Huge amounts of dynamic data need to
be understood and quickly processed.}} In real-life applications, the monitoring data are not adequate for
accurate estimation at the early implementation stage of the sensors
(i.e., the cold-start problem for recommender systems). Prompt and real-time
action should be made to resolve the economic loss from the pollution as
much as possible once the information of the pollution source has
been estimated. A new approach is required to conduct ``learning''
and ``optimization'' simultaneously. This way,  the real-time pollution sources' identification can be conducted with
streaming data, and the identification result will converge to the true information
as the data size grows. The algorithm will thus have plausible
asymptotic performance. 

Data storage is another reason to limit
offline approaches (mentioned in Section 1.1) for real-life applications. The hardware may
not be able to store the huge amount of monitoring data produced from the sensors,
and the data will become uninformative after the identification has
been made. In terms of the optimization point of view, most of the ADE models (see \cite{van_genuchten_analytical_1982,de_smedt_analytical_2005,wang_new_2018}) involve complex mathematical
structures (e.g. non-convex or non-smooth), such that {\color{black}{the optimality can be attained by heuristic algorithms, but it is difficult to derive the theoretical
guarantee of the performance of the algorithm}}. Explicitly, it will be difficult to figure out the size of the streaming monitoring data such that the identification
error (the gap between the estimated information and the true information
of the pollution source) will be reduced to a certain precision. In this paper, our online pollution source identification techniques are based on online learning and are significantly different from those in current studies. In the
next section, a review of online learning (a.k.a online
optimization) approaches will be given.

An online learning technique
is developed, which consists of a sequence of alternating phases of
observation and optimization, where data are used dynamically to make
decisions in real time. In \cite{shalev-shwartz_online_2012} and \cite{hazan_introduction_2016},
models and algorithms for online optimization are proposed, including: 
online convex optimization (OCO), online classification, online stochastic
optimization, and the limit feedback (Bandit) problem. The goal of the OCO
algorithm is normally to minimize the \emph{cumulative regret} defined
by the following procedures. At each period $t\in\{1,\,2,\,...,\,T\}$,
an online decision maker chooses a feasible strategy ($x_{t}$) from
a decision set ($\mathcal{X}\subset\mathbb{R}^{d}$) and suffers a loss
given by $f_{t}(x_{t})$, where $f_{t}(\cdot)$ is a convex loss function.
One key feature of online optimization is that the decision maker
must make a decision for period $t$ without knowing the loss function
$f_{t}(\cdot)$. The cumulative regret compares the accumulated loss
suffered by the player with the loss suffered by the best fixed strategy.
Specifically, the cumulative regret for algorithm $\mathcal{A}$ is
defined as $\mathfrak{R}{}_{T}^{\mathcal{A}}(\{x_{t}\}_{1}^{T})=\sum_{t=1}^{T}f_{t}(x_{t})-\min_{x\in\mathcal{X}}\sum_{t=1}^{T}f_{t}(x)$. In \cite{hazan_introduction_2016}, it is shown that the lower and upper regret bounds
of OCO are to be $\Omega(\sqrt{T})$ and $O(\sqrt{T})$, respectively.
If the loss functions are strongly convex, logarithmic bounds on the
regret, that is, $O(\log T)$, can be established. As suggested by its name,
the loss functions in the OCO are assumed to be ``convex.'' Only a handful of
papers study online learning with non-convex loss functions. In
\cite{gao_online_2018}, a non-stationary regret is proposed as a performance
metric, a gradient-based algorithm is proposed for online non-convex
learning, and the complexity bound is also derived $O(\sqrt{T})$ under the
pseudo-convexity condition. In \cite{hazan_efficient_2017}, a local regret measure
is proposed, and the regret bound of the gradient-based algorithm is derived
under mild conditions on loss functions. In \cite{maillard_online_2010},
they consider the problem of online learning in an adversarial environment
when the reward functions chosen by the adversary are assumed to be
Lipschitz. The cumulative regret is upper bounded by $O(\sqrt{T})$
under special geometric considerations. In \cite{suggala_online_2019} and \cite{agarwal_generalization_2009},
Follow-the-Leader algorithms are proposed, which can convert online
learning into an offline optimization oracle. By slightly strengthening
the oracle model, they make the online learning and offline models computationally
equivalent. In \cite{zhang_online_2015}, the bandit algorithm for non-convex
learning is developed where the loss function specifically works on
the domain of the products of the decision maker's action and the adversary's action.

Other than standard gradient-based methods, in \cite{yang_recursive_2017, yang_optimal_2018},
online exponential weighting algorithms are developed for the online non-convex
learning problem even though the decision set is non-convex. The regret bound
$O(\sqrt{T\log(T)})$ is the best of our knowledge in the literature.
In \cite{ge_escaping_2015}, \cite{jin_how_2017}, and \cite{levin_choice_2006}, the authors mention 
that even for the offline non-convex optimization problem, the stationary
points of the non-convex objective may be the local minimum, local maximum,
and the saddle point. They design algorithms that enable escaping
from the saddle point and reaching the local minimum with high probability.
The main idea is to use random perturbed gradient and check the second-order (eigenvalue of Hessian matrix) condition.

\subsection{Contributions and Organization}

In this paper, online learning
approaches specifically for real-time river pollution source identification are developed and analyzed. {\color{black}{The main contributions are as follows: }}
{\color{black}{
\begin{enumerate}
	\item The properties of the specific ADE model studied in \cite{wang_new_2018} are analyzed. Our loss
	function satisfies certain properties to quantify the estimation
	error between monitoring data and the output of the ADE model, analogous to
	the objective of statistical learning. It can be found that the loss function also exhibits Lipschitz continuity for its first and second derivatives.
	This observation inspires us to develop a gradient-based online algorithm
	to conduct pollution source identification. Based on \cite{hazan_efficient_2017}, new online non-convex learning algorithms are developed with modified
	step sizes. The step sizes are set to be vectors, meaning that the step sizes
	on different dimensions (released mass, location, and released time) are
	different. This setting avoids the algorithm's slow update on some
	dimensions and its divergence on others, which improves the robustness
	of normal gradient-descent-type algorithms on the pollution source
	identification problems. Moreover, our algorithm is equipped with the backtracking line-search-type technique to improve the convergence performance of the algorithm
	to the stationary points. Our algorithm also enables escaping from saddle points and finding a local minimum with a high
	probability, which will improve the quality of its real-time pollution source estimation.
	\item The performance analysis of our online
	learning algorithms is accomplished by deriving the explicit $O(N/w^{2}+N/w+N)$ ``local''
	regret bound ($N$
	is the size of the monitoring data, and $w$ is the window size), and
	also deriving the total number of gradient estimation required as 
	$O(N/w)$ when the algorithm terminates. It can be found that the theoretical guarantee of the algorithms
	is independent of the line search method applied to adjust the step sizes;
	thus, an analytic framework for gradient-based online
	non-convex optimization algorithms with adjustable step-sizes is developed. The optimal
	window size can be shown such that the local regret bound can be minimized under
	fixed computational resources, that is, a fixed number of gradient estimations.
	The optimal window size (i.e., the size of historical data storage)
	is dependent on the specific properties of the pollution source (choice
	of the ADE model), the parameters of the river (choice of the ADE model
	parameters), and the accuracy tolerance of the algorithm. {\color{black}{A mechanism is developed to determine the minimal number of sensors such that the measurement error can be controlled.}} 
	For the extended algorithm with the ``escaping from saddle points'' module, its local regret bound has the same complexity as the basic algorithm. The total number of gradient estimations of the extended algorithm to find a local minimum
	with a high probability is also shown. When the losses follow a specific error
	bound condition, the local regret bound can be linked with cumulative regret, which has been widely used as a metric for OCO with the global optimum guarantee when the loss function
	and ADE model are specifically selected. 
\end{enumerate}
}}

Our algorithms are implemented on Rhodamine WT dye concentration data from a travel time study on
the Truckee River between Glenshire Drive near Truckee, California and
Mogul, Nevada \cite{crompton_traveltime_2008}. The experiment results validate all our theoretical regret bounds and show that our algorithms are superior to existing online algorithms on all dimensions (e.g., released mass, location, and time). This shows that the multi-start module and ``escaping from saddle point'' module can achieve a significantly low estimation error on certain dimensions. 

The rest of the paper is organized as follows. Section 2 introduces the key notations used throughout the paper
and problem settings for pollution source identification. \textcolor{black}{The problem is formulated, and its properties are derived}. Section 3 contains
the development of our online learning algorithm, the performance
metrics (i.e., local and cumulative regret) of the algorithm, and various performance
analyses. Section 4 serves as a remark on the issue of the ``escaping from saddle points'' module, which includes a modified algorithm that enables
finding the local minimum with a high probability. In Section 5, our algorithms are applied to a real-life river
pollution source identification example, and we describe how we experimentally test the regret
bounds and compare the performance of variants of online algorithms and existing methods in literature. The paper
concludes in Section 6 with future research directions. 

\section{Preliminaries}

In this section, the key notations throughout the paper
and problem settings for pollution source identification are developed, along with the dispersion properties and model of the pollution sources.

\subsection{Notations }

For vectors, $\|\cdot\|$ denotes the $l_{2}$-norm,
$\|\cdot\|_{\infty}$ denotes the infinity norm, and $\|\cdot\|_{\min}$
outputs the minimum element of the vector. The symbol $\otimes$ denotes the Hadamard
product of vectors, and $\oslash$ denotes the element-wise division of between two vectors. For matrices, $\lambda_{\textrm{min}}(\cdot)$ denotes the smallest eigenvalues.
For a function $f:\,\mathbb{R}^{d}\rightarrow\mathbb{R}$, $\nabla_{x}f(\cdot)$ and $\nabla_{x}^{2}f(\cdot)$ denote the
gradient and Hessian with respect to $x$, and we use $\partial f(\cdot)/\partial x$
to denote the partial derivative on $x$. We use $\lceil\cdot\rceil$ to denote rounding up the value to an integer, and the computational complexity
notation $O(\cdot)$ to hide only absolute constants that do not
depend on any problem parameter. Let $\mathbb{B}_{0}(r)$ denote
the $d$-dimensional ball centered at the origin with the radius $r$. We use $\Pi_{\mathcal{F}}$ to denote the projection onto the set $\mathcal{F}$
defined in a Euclidean distance sense. 

\subsection{Problem Settings } \label{problem settings}

Figure 1 shows the geometric illustration of our problem. {\color{black}{The figure presents a part of the Yangtze river in Jiangsu Province, China}}. There are facilities including
factories and hospitals (marked by blue triangles) located alongside
the river. These facilities are the potential sources of discharged
pollutants. The set of sensors (denoted by $M$) placed in downstream
cross sections are designed to monitor the $\emph{concentration}$
of the pollutant source, which is expressed in terms of mass per-unit
volume. A ``sampling'' is defined to be a collection of pollutant concentrations
detected by a sensor {\color{black}{$m\in M$}} at one time. Let $N$
denote the number of samplings. For each sensor, the set of samplings detected by it is denoted $N_m$. For the $n$th ($n\in \{1,...,N\}$) sampling,
the concentration detected by sensor $m$ is denoted by $c_{m}^{n}$,
and the time for the concentration data collected by sensor $m$ is
denoted by $t_{m}^{n}$. Let $l_{m},\,m\in M$ denote the location
of the sensor $m$ and $L:=\{l_{m}\}_{m\in M}$ represent the set
of the location of all the sensors. Let $C^{n}:=\{c_{m}^{n}\}_{m\in M}$
denote the set of concentrations collected by all sensors for the $n$th
sampling. Once water pollution accident occurs, sensors $m\in M$
can detect a series of pollutant concentrations ({\color{black}{$c_{m}^{n}$}}) over the dynamic
sampling process. Throughout the paper, assume that there is only one pollution source, and it is instantaneously released once.

{\color{black}{The identification problem studied in this paper has an instantaneously released pollution source with initial value.}} The objective of this research is to develop online
learning algorithms, using the collected data ($\{c_{m}^{n}\}_{m\in M}$) for each sampling $n\in N$ to estimate the information of the true pollution source in real time, including the released mass ($s$), the source location ($l$), and
the released time ($t$). Throughout the paper, let $(s^{n}_{m},\,l^{n}_{m},\,t^{n}_{m})$
denote the estimated pollution source information at period $n$ by sensor $m$ and $(s^{\ast},\,l^{\ast},\,t^{\ast})$
denote the true pollution source information.

\begin{figure}[H]
\begin{centering}
\includegraphics[scale=.3]{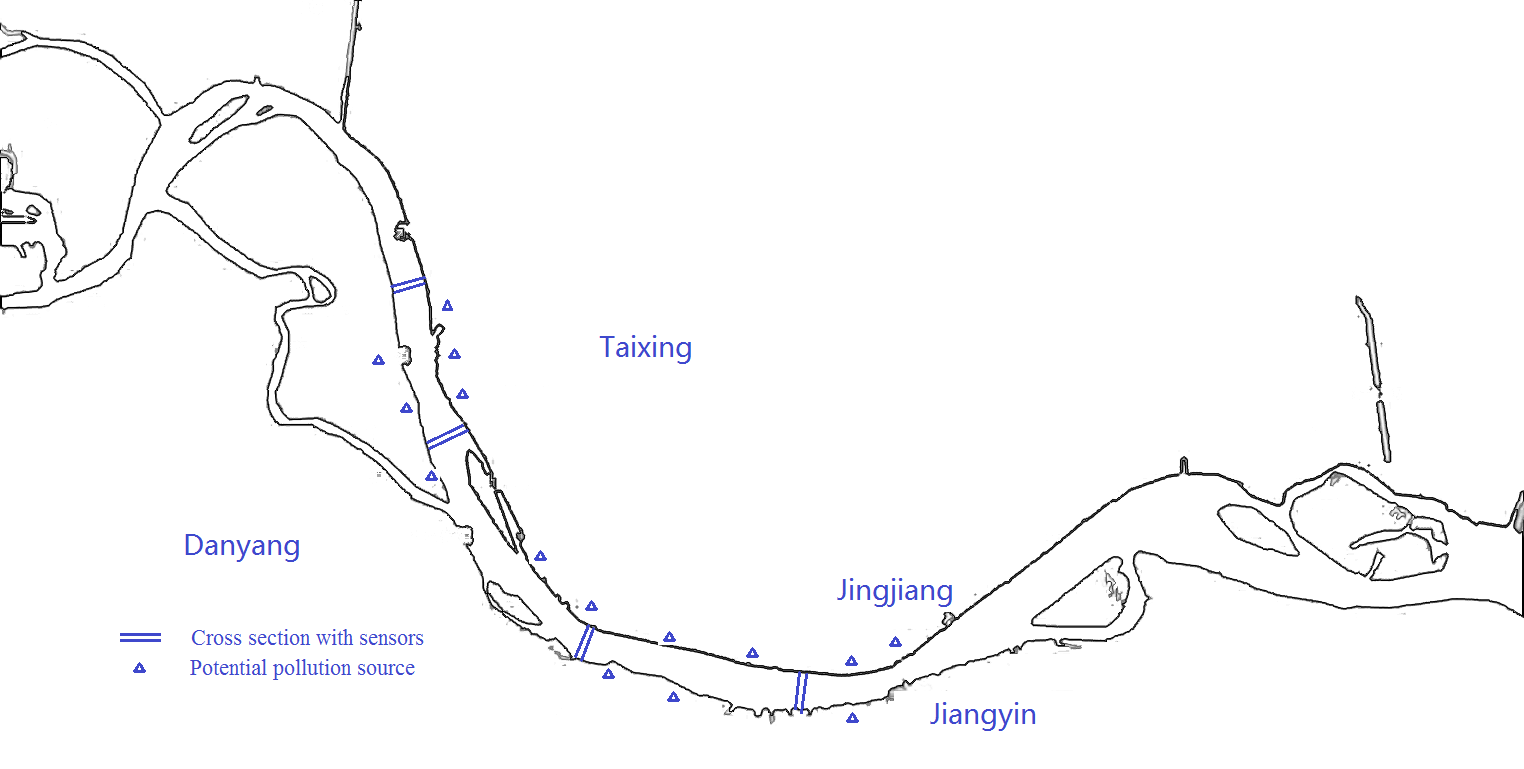}
\par\end{centering}
\caption{The river layout}
\end{figure}

\subsection{Advection-Dispersion Equation}\label{ADE}

In this paper, the water pollutants are assumed to be released instantaneously from pollution sources. Once pollutants with the initial concentration
$s$ are discharged in location $l$ and at time $t$, pollutants migrate
and diffuse along the direction of water flow, and the pollutant concentration
varies at different downstream cross sections considering the hydrodynamic characteristics.
ADE (see \cite{wang_new_2018}) is
commonly used to explain how the pollutants migrate and diffuse in
the river. The analytical expression of ADE is defined as
{\color{black}{\begin{equation}
	C(l_{m},\,t_{m}^{n}|s,\,l,\,t):=\frac{s}{A\sqrt{4\pi D\left(t_{m}^{n}-t\right)}}\exp\left[-\frac{\left(l_{m}-l-v\left(t_{m}^{n}-t\right)\right)^{2}}{4D\left(t_{m}^{n}-t\right)}\right]\exp\left(-k\left(t_{m}^{n}-t\right)\right),\label{IM-1}
\end{equation}}}where $A$ is the water area perpendicular to the river flow direction
($\textrm{meter}^{2}$); $D$ is the dispersion coefficient in the flow direction,
which is generally evaluated by empirical equations or based on experiments
($\textrm{meter}^{2}/\textrm{second}$); $v$ is the mean flow velocity of the cross section
($\textrm{meter}/\textrm{second}$); and $k$ is the decay coefficient ($\textrm{second}^{-1}$). {\color{black}{In Equation (1), the term $l_{m}-l$ computes the length of $m$ cross section to the pollution source, along the river (geometrically, it is a curve). 

In this case, our method is general and not limited to a one-dimensional problem.}} The physical
meaning of Equation (\ref{IM-1}) is explained as follows: Given
the known pollution source information $(s,\,l,\,t)$, the theoretical estimation of
the concentration level monitored by sensor $m$ (at location $l_{m}$)
at time $t_{m}^{n}$ equals {\color{black}{$C(l_{m},\,t_{m}^{n}|s,\,l,\,t)$}}.

{\color{black}{The ``reverse'' formulation of Equation (1) is denoted as $C(s,\,l,\,t|l_{m},\,t_{m}^{n})$, where the parameters and variables are exchanged. For this formulation, given sensor $m$ (at location $l_{m}$)
at time $t_{m}^{n}$, the downstreaming concentration level can be predicted, given the estimated $(s,\,l,\,t)$. }}

\subsection{Problem Formulation}
\textcolor{black}{As described in Section \ref{problem settings}, the goal of the proposed method in this paper is to estimate the source information $(s,\,l,\,t)$, given the
streaming concentration data collected by each sensor $c_{m}^{n}$. Equation (\ref{IM}) is then used to measure the gap between the theoretical
estimation $C(s,\,l,\,t|l_{m},t_{m}^{n})$ value and the real data $c_{m}^{n}$: }
\begin{align}
\tilde{C}(s,\,l,\,t|l_{m},\,t_{m}^{n},\,c_{m}^{n}):=&\frac{s}{A\sqrt{4\pi D\left(t_{m}^{n}-t\right)}}\exp\left[-\frac{\left(l_{m}-l-v\left(t_{m}^{n}-t\right)\right)^{2}}{4D\left(t_{m}^{n}-t\right)}\right] \nonumber
\\
&\times\exp\left(-k\left(t_{m}^{n}-t\right)\right)-c_{m}^{n}.\label{IM}
\end{align}
For simplicity, use $\tilde{C}_{n,m}(s,\,l,\,t)$ to represent $\tilde{C}(s,\,l,\,t|l_{m},\,t_{m}^{n},\,c_{m}^{n})$
as defined by Eq. (\ref{IM}) throughout the paper. Instead of providing a theoretical estimation
of the downstream concentration level, given the known information
$(s,\,l,\,t)$, the ADE model is applied in a reverse-engineered manner, using $\tilde{C}_{n,m}(s,\,l,\,t)$ to indicate the estimation error
of an estimated information $(s,\,l,\,t)$. Given any pollution source
information estimation $(s,\,l,\,t)$, suppose $\tilde{C}_{n,m}(s,\,l,\,t)>0$.
Then, the estimated concentration monitored by sensor $m$ at period
$n$ is \emph{overestimated}, and $\tilde{C}_{n,m}(s,\,l,\,t)<0$ means
the estimated concentration is \emph{underestimated}. 

{\color{black}{Given the estimated information $(s,\,l,\,t)$, the square function is used to quantify the estimation error of sensor $m$ at period $n$, denoted as $\left(\tilde{C}_{n,m}(s,\,l,\,t)\right)^{2}$. Namely, the estimation error of the $n$th period over sensor $m$ is defined as 
\begin{align}
	\Psi(s,\,l,\,t|l_{m},\,t_{m}^{n},\,c_{m}^{n}) & =\left(\tilde{C}_{n,m}(s,\,l,\,t)\right)^{2}.\label{Loss}
\end{align}}}Use $\Psi^{n}_{m}(s,l,t)$ to represent $\Psi(s,\,l,\,t|l_{m},\,t_{m}^{n},\,c_{m}^{n})$
for simplicity. {\color{black}{The function $\Psi^{n}_{m}(s,\,l,\,t)$ on $(s,\,l,\,t)$ is called the \emph{loss function}.  In statistical learning, when the data $t^{n}_{m},\,c^{n}_{m}$ for all $n\in N$ are known in advance, the optimal source information can be identified by minimizing the average loss function $\sum_{n\in N}\Psi_{m}^{n}(s,\,l,\,t)$. In contrast, in online learning cases, the data $t^{n}_{m},\,c^{n}_{m}$ as well as the corresponding loss function $\Psi_{m}^{n}(s,\,l,\,t)$ are not completely known in advance but are sequentially accessible to the decision maker. Thus, \emph{regret} the minimization method will be served as a new performance evaluation criterion other than the average loss minimization such that the estimation error of $(s,\,l,\,t)$ can be measured dynamically once the new sampling of concentration data is incorporated. In this paper, two ``regret'' definitions are presented, which measure
the performance of online learning algorithms.}} They are called ``cumulative regret'' and ``local regret.'' In the well-established framework of OCO, numerous algorithms can efficiently achieve the optimal cumulative 
regret in the sense of converging in the total accumulated loss
toward the best fixed decision in hindsight (see \cite{hazan_introduction_2016} and \cite{shalev-shwartz_online_2012}). The ``cumulative regret'' is defined as follows.
{\color{black}{\begin{defn}
	(Cumulative regret) Given an online learning algorithm
	$\mathcal{A}$, its cumulative regret with respect to sensor $m$ after
	$N$ iterations is denoted by
	\begin{equation}
		\mathfrak{R}_{m}^{\mathcal{A}}(N):=\sum_{n=1}^{N}\Psi^{n}_{m}(x^{n}_{m})-\inf_{x\in\mathcal{F}}\sum_{n=1}^{N}\Psi^{n}_{m}(x).  \label{Cumulative}
	\end{equation}
\end{defn}
In the well-established framework of OCO, numerous algorithms can efficiently achieve optimal regret in the sense of converging in the average
loss toward the best fixed decision in hindsight. That is, one can iterate $x_{m}^{1},\,...,\,x_{m}^{N}$ such that the \emph{long run average cumulative regret} of $\mathcal{A}$,
\[\mathfrak{R}_{m}^{\mathcal{A}}(N)/N = \frac{1}{N}\sum_{n=1}^{N}\left[\Psi^{n}_{m}(x_{m}^{n})-\Psi^{n}_{m}(x)\right]=o(1).\]
}}
However, even in the offline non-convex optimization case,
it is too ambitious to converge toward a global minimizer in hindsight.
The global convergence of offline non-convex optimization is normally NP-hard {\color{black}{(see \cite{hazan_efficient_2017})}}. In the
existing literature, it is common to state convergence guarantees toward
an $\epsilon$-approximate stationary point; that is, there exists
some iteration $x^{n}_{m}$ for which $\|\nabla\Psi^{n}_{m}(x^{n}_{m})\|\leq\epsilon$.
Given the computational intractability of direct analogues of
convex regret, the definition of ``local regret'' is given; it is a new notion of regret that
quantifies the objective of predicting points with small gradients
on average.

Throughout this paper, for convenience, the following
notation from \cite{hazan_efficient_2017} is used to denote the sliding-window
time average of functions, parameterized by some window size $1\leq w\leq N$:
\[
F^{n}_{m,\,w}(x):=\frac{1}{w}\sum_{i=0}^{w-1}\Psi^{n-i}_{m}(x). 
\]
For simplicity, define $\Psi^{n}_{m}(x)$ to be identically
zero for all $n\leq0$.
\begin{defn}
(Local regret) \cite[Definition 2.5]{hazan_efficient_2017} Fix some $\eta>0$. Define the $w$-local regret of
an online algorithm with respect to sensor $m$ as
\begin{equation}
\mathfrak{R}_{m, w}(N):=\sum_{n=1}^{N}\left\Vert \nabla_{\mathcal{F},\eta}F^{n}_{m,w}(x^{n}_{m})\right\Vert ^{2}.\label{Local regret}
\end{equation}
\end{defn}
In \cite{hazan_efficient_2017}, the necessity of the time-smoothing term $w$ is
argued such that for any online algorithm, an adversarial sequence of loss
functions can force the local regret incurred to scale with $N$ as
$\Omega(N/w^{2})$. {\color{black}{This result indicates that ``acceptable'' online non-convex algorithms will only be able to achieve the linear local regret $O(N)$. 
}} 
In addition, the
cumulative regret of online non-convex learning algorithms can be measured if the local properties on the local minimizer
of $\Psi^{n}_{m}$ and $\sum_{n=1}^{N}\Psi^{n}_{m}$ are attained. 

Furthermore, for some online learning problem where
$F^{n}_{m,\,w}(x)\approx\Psi^{n}_{m}(x)$, a bound on the local regret truly captures
a guarantee of playing points with small gradients. {\color{black}{However, for any algorithm treating $F^{n}_{m,w}$ as the follow-the-leader objective when
estimating $x^{n+1}_{m}$, the objective considers recording and
summing up $w$ previous loss functions rather than all the historical
loss functions up to iteration $n$. This setting saves the storage
when the online algorithm is implemented.}}

\subsection{Properties of Loss Functions}
\textcolor{black}{In this section, the
mathematical properties of functions $\tilde{C}_{n,m}$ and $\Psi_{m}^{n}$ are figured, which underlines the development of the online algorithms.} 
Throughout this paper, we denote the decision variables $x=(s,\,l,\,t)$ and $x^{\prime}=(s^{\prime},\,l^{\prime},\,t^{\prime})$
for simplicity. We use $\mathcal{S}$, $\mathcal{L}$ and $\mathcal{T}$ to denote the closed feasible set of $(s,\,l,\,t)$ and $\mathcal{F}:=\mathcal{S}\times\mathcal{L}\times\mathcal{T}$. The following assumptions on the
set $\mathcal{F}$ are given first. 
\begin{assumption}
\label{Assumption_1} (i) $\mathcal{F}$ is a convex and compact set.
(ii) For any $t\in\mathcal{T}$, $\min_{n\in N,m\in M}\left\{ t_{m}^{n}\right\} >t$
holds. (iii) There exists $B>0$ such that $|\Psi_{m}^{n}(x)|\leq B$
for any $x\in\mathcal{F}$;
\end{assumption}

Assumption \ref{Assumption_1} (ii) indicates that any sensor monitors data collected after the pollution is discharged. {\color{black}{Based on Assumption
\ref{Assumption_1}, the following Proposition \ref{prop_C_lipschitz_continuous} shows the ``Lipschitz continuity'' property of  $\tilde{C}_{n,m}$ and its gradient on $\mathcal{F}$ by taking the first derivatives. Here, ``Lipschitz continuity'' is first defined  in {\color{black}{Definition 2.4}} as follows.
\begin{defn}
	A function $f$ from $\mathcal{D} \subset \mathbb{R}^{d}$ to $\mathbb{R}$ is ``Lipschitz continuous'' at $x\in \mathcal{D}$ if there is a constant $K$ such that $|f(y)-f(x)|\leq K\|y-x\|$ for all $y\in\mathcal{D}$ sufficiently near $x$. 
\end{defn}

\begin{prop}
	\label{prop_C_lipschitz_continuous} (i) $\tilde{C}_{n,m}$ is bounded
	and Lipschitz continuous on the set $\mathcal{F}$: Given any $x,\,x^{\prime}\in\mathcal{F}$,
	there exists $\sigma>0$ such that $\left|\tilde{C}_{n,m}(x)-\tilde{C}_{n,m}(x^{\prime})\right|\leq\sigma\|x-x^{\prime}\|$.
	(ii) The gradient of $\tilde{C}_{n,m}(x)$ is bounded and
	Lipschitz continuous on the set $\mathcal{F}$: Given any $x,\,x^{\prime}\in\mathcal{F}$,
	there exists $\gamma>0$ such that $\left\Vert \nabla_{x}\tilde{C}_{n,m}(x)-\nabla_{x}\tilde{C}_{n,m}(x^{\prime})\right\Vert \leq \gamma\|x-x^{\prime}\|$.
\end{prop}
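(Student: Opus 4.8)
The plan is to reduce both claims to the elementary fact that a continuously differentiable function on a convex compact set is Lipschitz, with a Lipschitz constant controlled by the supremum of the norm of its derivative. Concretely, once I establish that $\tilde{C}_{n,m}$ is smooth (indeed $C^{\infty}$, though $C^{2}$ would suffice) on an open neighborhood of $\mathcal{F}$, part~(i) follows by bounding $\nabla_{x}\tilde{C}_{n,m}$ on the compact set $\mathcal{F}$ and applying the mean value inequality along the segment joining $x$ and $x^{\prime}$, which lies in $\mathcal{F}$ by convexity (Assumption~\ref{Assumption_1}(i)); part~(ii) follows in exactly the same way with the Hessian $\nabla_{x}^{2}\tilde{C}_{n,m}$ in place of the gradient.

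The crux, and the first step I would carry out, is to verify that the apparent singularities of the expression in~(\ref{IM})---the prefactor $1/\sqrt{4\pi D(t_{m}^{n}-t)}$ and the two occurrences of $t_{m}^{n}-t$ in the denominators of the exponents---stay uniformly away from the singular value $t=t_{m}^{n}$ on $\mathcal{F}$. By Assumption~\ref{Assumption_1}(ii), $t_{\min}:=\min_{n\in N,\,m\in M}\{t_{m}^{n}\}>t$ for every admissible $t$; since $\mathcal{F}=\mathcal{S}\times\mathcal{L}\times\mathcal{T}$ is compact, $\mathcal{T}$ is compact and $\bar{t}:=\max_{t\in\mathcal{T}}t$ is attained, whence $\tau:=t_{\min}-\bar{t}>0$ and $t_{m}^{n}-t\geq\tau$ uniformly on $\mathcal{F}$. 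Every denominator above is therefore bounded below by a positive constant, so $\tilde{C}_{n,m}$ is $C^{\infty}$ on the open set $\{(s,l,t):t<t_{m}^{n}\}$, which contains $\mathcal{F}$.

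With smoothness established, the remaining steps are routine. Boundedness of $\tilde{C}_{n,m}$ is immediate from continuity on the compact set $\mathcal{F}$. Likewise $\nabla_{x}\tilde{C}_{n,m}$ and $\nabla_{x}^{2}\tilde{C}_{n,m}$, being continuous on $\mathcal{F}$, attain finite suprema there; setting $\sigma:=\sup_{x\in\mathcal{F}}\|\nabla_{x}\tilde{C}_{n,m}(x)\|$ and $\gamma:=\sup_{x\in\mathcal{F}}\|\nabla_{x}^{2}\tilde{C}_{n,m}(x)\|$ (operator norm) and invoking the mean value inequality yields both estimates. If explicit constants are wanted, I would record the partial derivatives---$s$ enters linearly, $l$ only through the quadratic argument of the first exponential, and $t$ through the square-root prefactor and both exponents---and observe that every resulting term is a product of the bounded factor $\tilde{C}_{n,m}+c_{m}^{n}$ with a polynomial in $(l_{m}-l)$ and in $(t_{m}^{n}-t)^{-1}\leq\tau^{-1}$, all bounded on $\mathcal{F}$.

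The main difficulty is thus concentrated entirely in the first step: Assumption~\ref{Assumption_1}(ii) is indispensable, since as $t\to t_{m}^{n}$ both the prefactor and the exponents diverge and no uniform Lipschitz constant can exist. The substance of the argument is to convert this qualitative observation into the quantitative uniform lower bound $t_{m}^{n}-t\geq\tau$ that renders the function smooth on a neighborhood of $\mathcal{F}$. Beyond that the work is mechanical---the (lengthy but elementary) check that the explicit gradient and Hessian entries remain finite, which the bound $\tau$ guarantees---so I expect the bookkeeping of the derivative expressions, rather than any genuine conceptual hurdle, to be the most laborious part.
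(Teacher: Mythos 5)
Your proposal is correct and follows essentially the same route as the paper's own proof: both arguments use Assumption~\ref{Assumption_1}(ii) together with compactness of $\mathcal{F}$ to keep $t_{m}^{n}-t$ away from zero, conclude that the first and second derivatives of $\tilde{C}_{n,m}$ are continuous and hence bounded on the compact convex set, and then deduce Lipschitz continuity of the function and of its gradient from those bounds. If anything, your explicit uniform lower bound $t_{m}^{n}-t\geq\tau>0$ makes the singularity-avoidance step slightly more quantitative than the paper's statement that one may ``set $t_{m}^{n}-t\neq0$,'' but the substance is identical.
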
}}
The following proposition shows the Lipschitz continuity properties
of function $\Psi_{m}^{n}$.
\begin{prop}
\label{Proposition 2.5} (i) $\Psi_{m}^{n}$ is Lipschitz continuous on $\mathcal{F}$: Given any $x,\,x^{\prime}\in\mathcal{F}$, there exists
$\kappa>0$ such that $\left\Vert \Psi_{m}^{n}(x)-\Psi_{m}^{n}(x^{\prime})\right\Vert \leq\kappa\|x-x^{\prime}\|$.
(ii) $\Psi_{m}^{n}$ is $\beta$-smooth on $\mathcal{F}$: Given any $x,\,x^{\prime}\in\mathcal{F}$,
there exists $\beta>0$ such that $\left\Vert \nabla_{x}\Psi_{m}^{n}(x)-\nabla_{x}\Psi_{m}^{n}(x^{\prime})\right\Vert \leq\beta\left\Vert x-x^{\prime}\right\Vert$.
(iii) $\Psi_{m}^{n}$ is also $\iota$-Hessian Lipschitz: Given any $x,\,x^{\prime}\in\mathcal{F}$,
there exists $\iota>0$ such that $\|\nabla_{(s,\,l,\,t)}^{2}\Psi_{m}^{n}(x)-\nabla_{(s,\,l,\,t)}^{2}\Psi_{m}^{n}(x^{\prime})\|\leq \iota\left\Vert x-x^{\prime}\right\Vert$.
\end{prop}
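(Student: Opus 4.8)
The plan is to treat all three claims as consequences of the square structure $\Psi_m^n = (\tilde{C}_{n,m})^2$ combined with Proposition \ref{prop_C_lipschitz_continuous}, lifting the regularity of $\tilde{C}_{n,m}$ to $\Psi_m^n$ one derivative order at a time through the product rule and telescoping bounds. Throughout, let $M_C$ and $G_C$ denote the uniform bounds on $|\tilde{C}_{n,m}|$ and $\|\nabla_x\tilde{C}_{n,m}\|$ over the compact set $\mathcal{F}$ supplied by Proposition \ref{prop_C_lipschitz_continuous}, and let $\sigma,\gamma$ be the Lipschitz constants of $\tilde{C}_{n,m}$ and of $\nabla_x\tilde{C}_{n,m}$ from the same result.

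For part (i) I would factor $\Psi_m^n(x) - \Psi_m^n(x') = \bigl(\tilde{C}_{n,m}(x) - \tilde{C}_{n,m}(x')\bigr)\bigl(\tilde{C}_{n,m}(x) + \tilde{C}_{n,m}(x')\bigr)$, bound the second factor by $2M_C$ using boundedness and the first by $\sigma\|x-x'\|$ using Lipschitz continuity, giving $\kappa = 2M_C\sigma$. For part (ii), differentiate to obtain $\nabla_x\Psi_m^n = 2\tilde{C}_{n,m}\nabla_x\tilde{C}_{n,m}$, form the difference at $x$ and $x'$, insert the cross term $2\tilde{C}_{n,m}(x)\nabla_x\tilde{C}_{n,m}(x')$, and apply the triangle inequality to split into a piece controlled by $M_C\gamma$ (from the gradient-Lipschitz bound) and a piece controlled by $\sigma G_C$ (from the function-Lipschitz bound), yielding $\beta = 2(M_C\gamma + \sigma G_C)$. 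Both of these steps are routine once Proposition \ref{prop_C_lipschitz_continuous} is in hand.

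Part (iii) is where the real work lies. Writing $\nabla_x^2\Psi_m^n = 2(\nabla_x\tilde{C}_{n,m})(\nabla_x\tilde{C}_{n,m})^{\top} + 2\tilde{C}_{n,m}\nabla_x^2\tilde{C}_{n,m}$ and repeating the telescoping argument (now also on the outer-product term, controlled in operator norm by $2G_C\gamma$) reduces the Hessian-Lipschitz bound for $\Psi_m^n$ to two inputs: boundedness and Lipschitzness of $\nabla_x\tilde{C}_{n,m}$, which Proposition \ref{prop_C_lipschitz_continuous} already provides, and \emph{boundedness and Lipschitzness of the Hessian} $\nabla_x^2\tilde{C}_{n,m}$, which it does not. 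The hard part will therefore be establishing that $\tilde{C}_{n,m}$ is $C^3$ on $\mathcal{F}$ with uniformly bounded third-order partial derivatives. To obtain this I would return to the explicit expression \eqref{IM}: $\tilde{C}_{n,m}$ is a finite product of $(t_m^n - t)^{-1/2}$, an exponential of a rational function of $(s,l,t)$, and a decaying exponential in $t$. By Assumption \ref{Assumption_1}(ii) the quantity $t_m^n - t$ is bounded strictly away from zero on $\mathcal{F}$, so the only possible singularity is excluded and $\tilde{C}_{n,m}$ is in fact $C^{\infty}$ there. Boundedness of the third derivative on the convex set $\mathcal{F}$ (which follows from continuity and compactness) then yields Lipschitzness of $\nabla_x^2\tilde{C}_{n,m}$ via the mean-value inequality, and substituting the resulting bounds into the telescoped expansion produces the constant $\iota$ and completes the proof. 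The only subtlety to watch is the $(t_m^n - t)^{-1/2}$ factor, whose higher derivatives introduce increasingly negative powers of $t_m^n - t$; Assumption \ref{Assumption_1}(ii) is precisely what keeps these uniformly bounded on $\mathcal{F}$.
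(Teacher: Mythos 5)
Your proposal is correct and follows essentially the same route as the paper: both exploit the square structure $\Psi_m^n=(\tilde{C}_{n,m})^2$, telescope the product-rule expressions, and feed in the boundedness and Lipschitz constants of $\tilde{C}_{n,m}$ and $\nabla_x\tilde{C}_{n,m}$ from Proposition \ref{prop_C_lipschitz_continuous}. If anything, your treatment of part (iii) is more complete than the paper's, which simply asserts ``the same arguments as (ii)''; you correctly identify that the argument additionally needs boundedness and Lipschitz continuity of $\nabla_x^2\tilde{C}_{n,m}$, and you supply it via the explicit formula, Assumption \ref{Assumption_1}(ii) keeping $t_m^n-t$ away from zero, and compactness of $\mathcal{F}$.
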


Next, the definition of ``projected gradient'' is provided, and its properties are derived. 
These properties will be useful in designing and analyzing our online
algorithms. The properties are naturally extended from \cite{hazan_efficient_2017},
where the step size $\eta$ is univariate rather than a vector, as in our
case.
\begin{defn}
\label{Definition 2.4} (Projected gradient) Let $\Psi_{m}^{n}:\,\mathcal{F}\rightarrow\mathbb{R}$
be a differentiable function on the compact and 
convex set $\mathcal{F}$. Let vector-valued step size be $\eta\in\mathbb{R}_{+}^{d}$
(throughout the paper, $d=3$ for pollution source identification problems), and we define $\nabla_{\mathcal{F},\eta}\Psi_{m}^{n}:\,\mathcal{F}\rightarrow\mathbb{R}^{d}$,
the $(\mathcal{F},\,\eta)$-projected gradient of $\Psi_{m}^{n}$ by $\nabla_{\mathcal{F},\eta}\Psi_{m}^{n}(x):=\left(x-\Pi_{\mathcal{F}}\left[x-\eta\otimes\nabla_{x}\Psi_{m}^{n}(x)\right]\right)\oslash\eta$,
where $\Pi_{\mathcal{F}}[\cdot]$ denotes the orthogonal projection
onto $\mathcal{F}$, the symbol $\otimes$ denotes the Hadamard
product, and $\oslash$ denotes the element-wise division of two vectors. 
\end{defn}

The following proposition shows that there always exists a point with a vanishing
projected gradient.

\begin{prop}
\label{Proposition 2.7} Let $\mathcal{F}$ be a compact and convex set
and suppose $\Psi_{m}^{n}:\mathcal{F}\rightarrow\mathbb{R}$ satisfies
the properties in Proposition \ref{Proposition 2.5}. Then, there
exists some point $x^{\ast}\in\mathcal{F}$ for which $\nabla_{\mathcal{F},\eta}\Psi_{m}^{n}(x^{\ast})=0$.
\end{prop}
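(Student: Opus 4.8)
The plan is to recast the vanishing-projected-gradient condition as a fixed-point problem and then invoke Brouwer's fixed-point theorem. Since the step size $\eta\in\mathbb{R}_+^d$ has strictly positive entries, element-wise division by $\eta$ preserves zeros; hence $\nabla_{\mathcal{F},\eta}\Psi_{m}^{n}(x^{\ast})=0$ is equivalent to
\[
x^{\ast}=\Pi_{\mathcal{F}}\!\left[x^{\ast}-\eta\otimes\nabla_{x}\Psi_{m}^{n}(x^{\ast})\right].
\]
It therefore suffices to exhibit a fixed point of the map $G:\mathcal{F}\to\mathbb{R}^{d}$ defined by $G(x):=\Pi_{\mathcal{F}}\!\left[x-\eta\otimes\nabla_{x}\Psi_{m}^{n}(x)\right]$.

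First I would check the hypotheses of Brouwer's theorem. By Assumption \ref{Assumption_1}(i), $\mathcal{F}$ is a nonempty compact convex subset of $\mathbb{R}^{d}$. Because $\Pi_{\mathcal{F}}$ takes values in $\mathcal{F}$ by construction, $G$ in fact maps $\mathcal{F}$ into $\mathcal{F}$, so $G:\mathcal{F}\to\mathcal{F}$ is a well-defined self-map.

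The remaining step, and the only one requiring care, is the continuity of $G$. I would decompose $G$ as the inner map $x\mapsto x-\eta\otimes\nabla_{x}\Psi_{m}^{n}(x)$ followed by the outer map $\Pi_{\mathcal{F}}$. The inner map is continuous: the gradient $\nabla_{x}\Psi_{m}^{n}$ is globally Lipschitz on $\mathcal{F}$ by the $\beta$-smoothness established in Proposition \ref{Proposition 2.5}(ii), hence continuous, while the Hadamard product against the fixed vector $\eta$ is linear and therefore continuous. The outer map $\Pi_{\mathcal{F}}$ is the Euclidean projection onto a closed convex set, which is nonexpansive (1-Lipschitz) and \emph{a fortiori} continuous. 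As a composition of continuous maps, $G$ is continuous.

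Brouwer's fixed-point theorem then yields a point $x^{\ast}\in\mathcal{F}$ with $G(x^{\ast})=x^{\ast}$, that is, $x^{\ast}=\Pi_{\mathcal{F}}[x^{\ast}-\eta\otimes\nabla_{x}\Psi_{m}^{n}(x^{\ast})]$. Dividing the identity $x^{\ast}-\Pi_{\mathcal{F}}[x^{\ast}-\eta\otimes\nabla_{x}\Psi_{m}^{n}(x^{\ast})]=0$ element-wise by the positive vector $\eta$ gives $\nabla_{\mathcal{F},\eta}\Psi_{m}^{n}(x^{\ast})=0$, as required. The main obstacle is purely this continuity verification, which hinges on two standard facts—the nonexpansiveness of projection onto a convex set and the continuity of the gradient furnished by Proposition \ref{Proposition 2.5}(ii); once these are in hand the conclusion is an immediate application of Brouwer. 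This mirrors the univariate argument of \cite{hazan_efficient_2017}, the only modification being the vector step size $\eta$, whose strict positivity is precisely what renders the element-wise division by $\eta$ harmless.
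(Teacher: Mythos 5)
Your proof is correct and follows essentially the same route as the paper: both define $g(x)=\Pi_{\mathcal{F}}[x-\eta\otimes\nabla_{x}\Psi_{m}^{n}(x)]$, verify it is a continuous self-map of the compact convex set $\mathcal{F}$, and apply Brouwer's fixed-point theorem, with the vanishing projected gradient read off from the fixed point. Your write-up is simply more explicit about the nonexpansiveness of the projection and the harmlessness of the element-wise division by the strictly positive $\eta$.
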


The following proposition states that an approximate
local minimum, as measured by a small projected gradient, is robust
with respect to small perturbations. {\color{black}{This proposition will be applied to derive the local regret bound of our proposed algorithms in Section 3.}}
\begin{prop}
\label{Proposition 2.6} Let $x$ be any point in $\mathcal{F},$ and
let $\Psi$ and $\Phi$ be differentiable functions $\mathcal{F}\rightarrow\mathbb{R}$.
Then, for any $\eta\in\mathbb{R}_{+}^{d}$ and $x\in\mathcal{F}$, $\left\Vert \nabla_{\mathcal{F},\,\eta}[\Psi+\Phi](x)\right\Vert \leq\left\Vert \nabla_{\mathcal{F},\,\eta}\Psi(x)\right\Vert +\left\Vert \nabla\Phi(x)\right\Vert$.
\end{prop}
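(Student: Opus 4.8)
The plan is to reduce the inequality to a coordinatewise non-expansiveness estimate, exploiting the product structure $\mathcal{F}=\mathcal{S}\times\mathcal{L}\times\mathcal{T}$. By the triangle inequality it suffices to establish the Lipschitz-type bound
\[
\left\Vert \nabla_{\mathcal{F},\eta}[\Psi+\Phi](x)-\nabla_{\mathcal{F},\eta}\Psi(x)\right\Vert \leq\left\Vert \nabla\Phi(x)\right\Vert ,
\]
since combining this with $\left\Vert \nabla_{\mathcal{F},\eta}[\Psi+\Phi](x)\right\Vert \leq\left\Vert \nabla_{\mathcal{F},\eta}\Psi(x)\right\Vert +\left\Vert \nabla_{\mathcal{F},\eta}[\Psi+\Phi](x)-\nabla_{\mathcal{F},\eta}\Psi(x)\right\Vert $ yields the claim. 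Writing $g=\nabla_x\Psi(x)$ and $h=\nabla_x\Phi(x)$ and recalling Definition \ref{Definition 2.4}, the two projected gradients differ only through the argument of the projection, namely $x-\eta\otimes g$ versus $x-\eta\otimes(g+h)$.

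First I would observe that, since $s$, $l$, and $t$ are scalar quantities, each of $\mathcal{S},\mathcal{L},\mathcal{T}$ is a closed interval of $\mathbb{R}$ by Assumption \ref{Assumption_1}(i), so $\mathcal{F}$ is a box and the Euclidean projection factorizes coordinatewise, $\Pi_{\mathcal{F}}[y]=(\Pi_{\mathcal{S}}[y_{1}],\Pi_{\mathcal{L}}[y_{2}],\Pi_{\mathcal{T}}[y_{3}])$, with each one-dimensional clamp $\Pi_i$ being $1$-Lipschitz. This factorization is what renders the vector-valued step size harmless. Fixing a coordinate $i$, the $i$th entry of the difference equals $\eta_i^{-1}\bigl(\Pi_i[x_i-\eta_i g_i]-\Pi_i[x_i-\eta_i(g_i+h_i)]\bigr)$; applying one-dimensional non-expansiveness bounds the parenthesis by $\lvert\eta_i h_i\rvert=\eta_i\lvert h_i\rvert$, and the factor $\eta_i^{-1}$ cancels the $\eta_i$ exactly, leaving $\lvert h_i\rvert$. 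Squaring and summing over $i$ then gives $\sum_i \lvert h_i\rvert^2=\left\Vert \nabla\Phi(x)\right\Vert ^{2}$, which is precisely the desired bound.

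The step deserving the most care -- and the reason the scalar-$\eta$ argument of \cite{hazan_efficient_2017} does not transfer verbatim -- is the interaction between the element-wise division $\oslash\eta$ and the Hadamard scaling $\eta\otimes$. Treating $\Pi_{\mathcal{F}}$ as a single Euclidean non-expansive map and simply factoring out $\eta$ would fail, because the operator $v\mapsto\bigl(x-\Pi_{\mathcal{F}}[x-\eta\otimes v]\bigr)\oslash\eta$ need not be non-expansive in the Euclidean norm when $\eta$ has unequal components and $\mathcal{F}$ is a general convex set: the unequal rescaling distorts the geometry in which the projection acts as a contraction. The resolution is exactly the box structure above, which decouples the coordinates so that each scale $\eta_i$ is applied and undone within its own coordinate. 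I would therefore state the coordinatewise factorization of $\Pi_{\mathcal{F}}$ as the opening step and present the remaining estimate as the short computation sketched above.
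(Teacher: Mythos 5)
Your proof is correct, and it takes a genuinely different route from the paper's. The paper follows the argument of \cite[Proposition 2.4]{hazan_efficient_2017}: it sets $u=x-\eta\otimes\nabla\Psi(x)$, $v=u-\eta\otimes\nabla\Phi(x)$, and deduces $\|(u'-v')\oslash\eta\|\leq\|(u-v)\oslash\eta\|$ for the projections $u',v'$ from two obtuse-angle (generalized Pythagorean) inequalities written in the $\eta$-weighted inner product $\langle a\oslash\eta,\,b\oslash\eta\rangle$, then finishes with the triangle inequality exactly as you do. You instead exploit the factorization $\Pi_{\mathcal{F}}=(\Pi_{\mathcal{S}},\Pi_{\mathcal{L}},\Pi_{\mathcal{T}})$ and the $1$-Lipschitzness of each scalar clamp, which gives the same non-expansiveness bound coordinate by coordinate. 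The two arguments prove the identical intermediate estimate, but yours is more elementary and, more importantly, makes explicit a point the paper glosses over: for a Euclidean projection onto a \emph{general} convex set, the obtuse-angle inequality holds in the standard inner product, not automatically in the $\mathrm{diag}(\eta^{-2})$-weighted one, so the paper's step is itself only justified because $\mathcal{F}$ decouples coordinatewise (each weighted term $(z_i-u_i')(u_i-u_i')/\eta_i^2\leq 0$ separately for interval projections). Your observation that the scalar-$\eta$ argument does not transfer verbatim and that the box structure is the resolving ingredient is exactly right; the only thing to add is a one-line justification that $\mathcal{S},\mathcal{L},\mathcal{T}$ are compact intervals (which follows from Assumption \ref{Assumption_1}(i) and the product form of $\mathcal{F}$), which you essentially do.
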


{\color{black}{Note that the proofs of Proposition \ref{prop_C_lipschitz_continuous}, \ref{Proposition 2.5}, \ref{Proposition 2.7},  and \ref{Proposition 2.6} are all provided in Appendix B. }}
\section{The Algorithm}
In this section, our basic algorithm (Algorithm
1), adaptive time-smoothed online gradient descent (ATGD), is developed. The key
idea of this algorithm is to implement follow-the-leader iterations approximated
to a suitable tolerance using projected gradient descent. Apart from the standard gradient descent paradigm, the improvements of our algorithm are: (i)
adjusting the step sizes to be vectors, meaning that the algorithm
performs different step sizes on different dimensions (released mass, location and released time). This setting avoids potentially slow updates on some dimensions
and divergence on other dimensions, owing to the different magnitude
of the gradient on different dimensions (i.e., a steep surface will have high
gradient magnitude, and a flat surface will have low gradient magnitude). The vectorized step-size
setting thus improves the robustness of the algorithm more than the univariant
step-size version; (ii) the step-size is improved by line search and leads to faster convergence near the stationary points. 

In Algorithm 1, $\mathfrak{S}(F^{n}_{m,w},\,K)$ is an operator choosing the initial step sizes, whose explicit definition will be given later. The explicit description of the normalized Backtracking-Armijo
line search is shown in Algorithm 2, where a properly chosen $\beta$ prevents the step size
from getting too large along the gradient descent direction, and the
backtracking method prevents the step size from getting too small. Algorithm 2 is equipped with adaptive
step sizes modified from the Backtracking-Armijo line search, which is one of the most widely investigated exact line search
methods (see \cite{dennis_jr_numerical_1996,armijo_minimization_1966,nocedal_numerical_2006}). 
Our new Armijo condition is imposed on the normalized gradient to further
reduce the negative effect of a gradient magnitude difference on each dimension, in addition to the vector step-size setting. Term $\eta^{(l)}\beta\|\nabla_{\mathcal{F},\eta^{(l)}}F^{n}_{m,\,w}(x_{m}^{n+1})\|_{2}$,
at the right side of the Armijo inequality, will approach zero when
the norm of the projected gradient approaches zero, and then the condition
will become the standard Backtracking line search. This setting leads to faster convergence near the stationary points. {\color{black}{The following theorem indicates that Algorithm 2 terminates in a finite number of iterations. Its proof is given in Appendix C. }}

{\color{black}{\begin{thm} \label{Theorem 3.1}
	Algorithm 2 terminates in a finite number of iterations. 
\end{thm}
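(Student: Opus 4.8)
The plan is to combine the $\beta$-smoothness of the window objective with the geometric shrinkage of the step size produced by backtracking. First I would note that $F^{n}_{m,w}$ is itself $\beta$-smooth on $\mathcal{F}$: it is the average of the functions $\Psi^{n-i}_{m}$, each of which is $\beta$-smooth by Proposition~\ref{Proposition 2.5}(ii), so the quadratic upper bound $F^{n}_{m,w}(y)\le F^{n}_{m,w}(x)+\langle\nabla F^{n}_{m,w}(x),y-x\rangle+\tfrac{\beta}{2}\|y-x\|^{2}$ holds for all $x,y\in\mathcal{F}$. I would also record two boundedness facts on the compact set $\mathcal{F}$: the gradient $\nabla F^{n}_{m,w}$ is uniformly bounded (a consequence of the Lipschitz property in Proposition~\ref{Proposition 2.5}(i)), and, since $\Pi_{\mathcal{F}}$ is nonexpansive and $x\in\mathcal{F}$, the displacement obeys $\|x-\Pi_{\mathcal{F}}[x-\eta\otimes\nabla F^{n}_{m,w}(x)]\|\le\|\eta\otimes\nabla F^{n}_{m,w}(x)\|$, so the projected gradient of Definition~\ref{Definition 2.4} stays bounded along the whole search.

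Next I would turn the projection step into a descent inequality. Writing $x^{n+1}_{m}=\Pi_{\mathcal{F}}[x^{n}_{m}-\eta^{(l)}\otimes\nabla F^{n}_{m,w}(x^{n}_{m})]$, $g=\nabla F^{n}_{m,w}(x^{n}_{m})$ and $G=\nabla_{\mathcal{F},\eta^{(l)}}F^{n}_{m,w}(x^{n}_{m})$, so that the displacement is $H:=x^{n}_{m}-x^{n+1}_{m}=\eta^{(l)}\otimes G$, the variational inequality for the projection (tested against $y=x^{n}_{m}$) yields $\sum_{i}(\eta^{(l)}_{i})^{2}G_{i}^{2}\le\sum_{i}(\eta^{(l)}_{i})^{2}g_{i}G_{i}$, while substituting $y=x^{n+1}_{m}$ into the smoothness bound gives $F^{n}_{m,w}(x^{n+1}_{m})-F^{n}_{m,w}(x^{n}_{m})\le-\sum_{i}\eta^{(l)}_{i}g_{i}G_{i}+\tfrac{\beta}{2}\sum_{i}(\eta^{(l)}_{i})^{2}G_{i}^{2}$. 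Because $\mathcal{F}=\mathcal{S}\times\mathcal{L}\times\mathcal{T}$ is a product of intervals, the Euclidean projection decouples coordinatewise and the gradient-mapping components satisfy $G_{i}^{2}\le g_{i}G_{i}$ for every $i$; hence $\sum_{i}\eta^{(l)}_{i}g_{i}G_{i}\ge\sum_{i}\eta^{(l)}_{i}G_{i}^{2}=\langle G,\eta^{(l)}\otimes G\rangle$. Combining this with $\sum_{i}(\eta^{(l)}_{i})^{2}G_{i}^{2}\le\|\eta^{(l)}\|_{\infty}\langle G,\eta^{(l)}\otimes G\rangle$ produces $F^{n}_{m,w}(x^{n+1}_{m})-F^{n}_{m,w}(x^{n}_{m})\le-\big(1-\tfrac{\beta}{2}\|\eta^{(l)}\|_{\infty}\big)\langle G,\eta^{(l)}\otimes G\rangle$, so there is a threshold $\bar\eta$ of order $1/\beta$, depending only on $\beta$ and the uniform gradient bound rather than on the current iterate, beyond which the sufficient-decrease part of the normalized Armijo test is forced.

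I would then dispatch the extra right-hand-side slack $\eta^{(l)}\beta\|\nabla_{\mathcal{F},\eta^{(l)}}F^{n}_{m,w}(x^{n+1}_{m})\|_{2}$: by the boundedness recorded above it is nonnegative and $O(\|\eta^{(l)}\|)$, so it only relaxes the inequality and never obstructs it, and the degenerate case in which $x^{n}_{m}$ is already stationary is immediate, since then $G=0$, $x^{n+1}_{m}=x^{n}_{m}$, and the condition holds with equality. Finite termination then follows from the geometry of backtracking: each failed test rescales $\eta^{(l)}$ by a fixed factor $\tau\in(0,1)$, so after at most $\lceil\log_{\tau}(\bar\eta/\|\eta^{(0)}\|_{\infty})\rceil$ reductions one has $\|\eta^{(l)}\|_{\infty}\le\bar\eta$, at which point the Armijo condition is met and the loop exits; hence Algorithm~2 halts after finitely many iterations.

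The step I expect to be the main obstacle is the reconciliation forced by the \emph{vector-valued} step size: the projection's variational inequality naturally produces quantities weighted by $(\eta^{(l)}_{i})^{2}$, whereas the first-order descent term is weighted by $\eta^{(l)}_{i}$, and these two weightings coincide automatically only in the scalar case treated in \cite{hazan_efficient_2017}. The key device for closing this gap is the product structure $\mathcal{F}=\mathcal{S}\times\mathcal{L}\times\mathcal{T}$, under which the projection acts coordinatewise and delivers the per-component alignment $G_{i}^{2}\le g_{i}G_{i}$; the remaining care is in bounding the residual cross terms through $\|\eta^{(l)}\|_{\infty}$ so that the contraction factor $1-\tfrac{\beta}{2}\|\eta^{(l)}\|_{\infty}$ and the threshold $\bar\eta$ can be chosen uniformly over all admissible iterates, independently of the particular $x^{n}_{m}$.
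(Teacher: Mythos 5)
Your proof is correct in substance and follows essentially the same route as the paper's: both arguments show that the sufficient-decrease test must pass once the step size is small enough, and then invoke the geometric shrinkage $\eta^{(l+1)}=\tau\otimes\eta^{(l)}$ to conclude finite termination. The paper does this with a one-line Taylor expansion in the limit $\eta^{(l)}\to 0$, whereas you make the second-order remainder explicit via $\beta$-smoothness and extract a concrete threshold $\bar\eta=O(1/\beta)$ together with the iteration count $\lceil\log_{\tau}(\bar\eta/\|\eta^{(0)}\|_{\infty})\rceil$; your coordinatewise alignment $G_i^2\le g_iG_i$ from the box structure of $\mathcal{F}$ is a genuine addition, since the paper's Lemma~\ref{Lemma C1 } only supplies the $\eta^2$-weighted version of that inequality and the paper's proof of Theorem~\ref{Theorem 3.1} does not address the vector-step-size weighting at all. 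One slip to flag: you derive the descent inequality for the trial point $x-\eta^{(l)}\otimes G$, but the while-condition in Algorithm~2 evaluates $F^{n}_{m,w}$ at the \emph{normalized} point $x-\eta^{(l)}\otimes G/\|G\|_2$, so your inequality does not literally bound the quantity being tested. The same smoothness computation goes through for the normalized displacement, but then the quadratic remainder becomes $\tfrac{\beta}{2}\|\eta^{(l)}\|_{\infty}^{2}$ while the first-order decrease carries a factor $\|G\|$, so the threshold is no longer uniform in the iterate unless one uses the nonnegative Armijo slack $\beta\|\eta^{(l)}\otimes G\|_2$ or a lower bound on $\|G\|$ (the outer loop's guard $\|\nabla_{\mathcal{F},\eta}F^{n}_{m,w}\|_2>\delta$); the paper's own proof elides exactly the same point, so this is a fixable bookkeeping issue rather than a gap in the idea.
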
}

\begin{algorithm}
\caption{ATGD}

\textbf{Input}: sensor $m\in M$; window size $w\geq1$, tolerance $\delta>0$, constant $K>0$, and a convex set $\mathcal{F}$;

\textbf{Set} $x^{1}_{m}\in\mathcal{F}$ arbitrarily

\textbf{for} $n=1,\,...,\,N$ \textbf{do}

$\quad$Observe the cost function $\Psi^{n}_{m}:\,\mathcal{F}\rightarrow\mathbb{R}$.

$\quad$Compute the initial step-size: $\eta^{0} = \mathfrak{S}(F^{n}_{m,w},\,K)$. 

$\quad$Initialize $x^{n+1}_{m}:=x^{n}_{m}$ and  $\eta^{n}:=\eta^{0}$.

$\quad\quad$ \textbf{while} $\|\nabla_{\mathcal{F},\eta^{n}}F^{n}_{m,\,w}(x^{n+1}_{m})\|_{2}>\delta$
\textbf{do}

$\quad\quad\quad$Determine $\eta^{n}$ using Normalized Backtracking-Armijo line
search (Algorithm 2).

$\quad\quad\quad$Update $x^{n+1}_{m}:=x^{n+1}_{m}-\eta^{n}\otimes\nabla_{\mathcal{F},\eta^{n}}F^{n}_{m,\,w}(x^{n+1}_{m})$.

$\quad\quad$\textbf{end while}

\textbf{end for}
\end{algorithm}

\begin{algorithm}
\caption{Normalized Backtracking-Armijo line search}

\textbf{Input}: $\eta^{0}>0$, $x^{n+1}_{m}$ and $F^{n}_{m,\,w}$, let $\eta^{(0)}=\eta^{0}$
and $l=0$.

\textbf{while $F^{n}_{m,\,w}(x^{n+1}_{m}-\eta^{(l)}\otimes\nabla_{\mathcal{F},\eta^{(l)}}F^{n}_{m,\,w}(x^{n+1}_{m})/\|\nabla_{\mathcal{F},\eta^{(l)}}F^{n}_{m,\,w}(x^{n+1}_{m})\|_{2})>F^{n}_{m,\,w}(x^{n+1}_{m})+\beta\|\eta^{(l)}\otimes\nabla_{\mathcal{F},\eta^{(l)}}F^{n,\,w}_{m}(x^{n+1}_{m})\|_{2}$ do}

$\quad$Set $\eta^{(l+1)}=\tau\otimes\eta^{(l)}$, where $\tau\in(0,\,1)^{d}$
is fixed (e.g., $\tau=(1/2,\,1/2,\,1/2)$),

$\quad$Set $l=l+1$

\textbf{end while}

Set $\eta^{n}=\eta^{(l)}$
\end{algorithm}

In terms of how proper initial step sizes can be chosen, the main idea is to set the initial step sizes on each dimension inversely proportional to the Lipschitz modulus of the loss on that dimension such that the algorithm can adjust the identification change on each dimension to a similar magnitude. A simple partition method is first derived to derive a lower bound on the Lipschitz modulus on each dimension from {\cite{gimbutas_global_2016,sergeyev_introduction_2013,pinter_global_2013}. {\color{black}{The initial step-size selection is presented by Algorithm 3. For simplicity, use the operator $\mathfrak{S}$ to encode the process of finding an initial step size in Algorithm 3. Given the function $F^{n}_{m,w}$ and constant $K>0$, the initial step size is $\eta^{0} = \mathfrak{S}(F^{n}_{m,w},\,K)$. 
		
		\begin{algorithm}
			\caption{Initial step-sizes construction}
			
			\textbf{Input}: Function $F^{n}_{m,\,w}$ and $K>0$. Let $\eta^{0}\in\mathbb{R}^{d}_{+}$ denote initial step sizes at iteration $n$ with the step sizes on each dimension as $\eta_{s}^{0}$, $\eta_{l}^{0}$ and $\eta_{t}^{0}$. 
			
			
			$\quad$ Generate $K^{3}$ points on cube $\mathcal{F}$, denoted as  $(s^{1},...,\,s^{K})$, $(l^{1},...,\,l^{K})$ and $(t^{1},...,\,t^{K})$. In each dimension, {\color{black}{the points are uniformly spaced in cube $\mathcal{F}$.}}
			
			$\quad$ Compute
			\[
			\kappa^{n}_{s} = \max_{i,j,k,p = 1,...,K} \left\{\frac{|F^{n}_{m,w}(s^{i},\,l^k,t^{p})-F^{n}_{m,w}(s^{j},\,l^k,t^{p})|}{|s^{i}-s^{j}|}:\,i\neq j\right\},
			\]
			\[
			\kappa^{n}_{l} = \max_{i,j,k,p = 1,...,K} \left\{\frac{|F^{n}_{m,w}(s^{k},\,l^i,t^{p})-F^{n}_{m,w}(s^{k},\,l^j,t^{p})|}{|l^{i}-l^{j}|}:\,i\neq j\right\},
			\]
			and
			\[
			\kappa^{n}_{t} = \max_{i,j,k,p = 1,...,K} \left\{\frac{|F^{n}_{m,w}(s^{k},\,l^p,t^{i})-F^{n}_{m,w}(s^{k},\,l^p,t^{j})|}{|t^{i}-t^{j}|}:\,i\neq j\right\},
			\]
			to be the Lipschitz modulus estimation on dimension $s$, $l$, and $t$ respectively.
			
			$\quad$ Update $\eta^{0}$ such that it satisfies the following two conditions: (i) for the validation and analysis of the algorithm, choose a sufficiently large initial step size such that $\|\eta^{0}\|_{\min}/\|\eta^{0}\|_{\infty}^{2}\gg \beta/2$, and (ii) the equations 
			$\eta^{0}_{s}/\kappa^{n}_{s} = \eta^{0}_{l}/\kappa^{n}_{l} = \eta^{0}_{t}/\kappa^{n}_{t}$ hold.
			
			
		\end{algorithm}
}}

\subsection{Performance Analysis}

In this subsection, the analysis of the regret bound {\color{black}{(i.e., the upper bound on the local regret)}} for Algorithm 1 is shown
in addition to the derivation of the optimal window size to
minimize the regret bound within a limited number of gradient estimations.

The following Theorem \ref{Theorem 4.1} presents the local regret
and the total number of required gradient estimations. The proof of
Theorem \ref{Theorem 4.1} is given in Appendix C.
{\color{black}{\begin{thm}
			\label{Theorem 4.1} Let $N$ denote the number of samplings. Given sensor $m\in M$, let $\Psi^{1}_{m},\,...,\,\Psi^{N}_{m}$ be the sequence
			of loss functions presented to Algorithm 1, satisfying the properties
			in Proposition \ref{Proposition 2.5}. Then,
			
			(i) The $w$-local regret incurred satisfies
			\begin{equation}
				\mathfrak{R}_{m,w}(N)\leq(\delta+\frac{2\kappa}{w})^{2}N.\label{Regret Bound}
			\end{equation}
			
			(ii) There exists $\eta^{\prime}\leq\eta^{0}$ such that the total
			number of gradient steps taken by Algorithm 1 is bounded by
			\begin{equation}
				\frac{2B}{\delta^{2}\left(\|\eta^{\prime}\|_{\min}-\frac{\beta\,\|\eta_{0}\|_{\infty}^{2}}{2}\right)w}N,\label{Gradient I}
			\end{equation}
			where $B$ is the bound of the loss function stated in Assumption \ref{Assumption_1}.
		\end{thm}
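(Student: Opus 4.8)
The plan is to treat the two parts separately: part (i) rests on the robustness of the projected gradient (Proposition~\ref{Proposition 2.6}) together with the two-term telescoping structure of the sliding window, and part (ii) is a potential-function argument driven by a per-step descent lemma for the vectorized projected gradient step.

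For part (i), the starting point is that consecutive windowed averages differ in only two summands,
\[
F^{n}_{m,w}(x)-F^{n-1}_{m,w}(x)=\frac{1}{w}\left(\Psi^{n}_{m}(x)-\Psi^{n-w}_{m}(x)\right).
\]
I would apply Proposition~\ref{Proposition 2.6} with $\Psi=F^{n-1}_{m,w}$ and $\Phi=F^{n}_{m,w}-F^{n-1}_{m,w}$ (so that $\Psi+\Phi=F^{n}_{m,w}$) at the point $x^{n}_{m}$, giving
\[
\left\|\nabla_{\mathcal{F},\eta}F^{n}_{m,w}(x^{n}_{m})\right\|\leq\left\|\nabla_{\mathcal{F},\eta}F^{n-1}_{m,w}(x^{n}_{m})\right\|+\frac{1}{w}\left\|\nabla\Psi^{n}_{m}(x^{n}_{m})-\nabla\Psi^{n-w}_{m}(x^{n}_{m})\right\|.
\]
The first term is at most $\delta$, since $x^{n}_{m}$ is the output of outer iteration $n-1$, at which the inner \textbf{while} loop terminated (for $n=1$ it is $0$, as $F^{0}_{m,w}\equiv0$); the second term is at most $2\kappa/w$ because $\|\nabla\Psi^{n}_{m}\|,\|\nabla\Psi^{n-w}_{m}\|\leq\kappa$ by Proposition~\ref{Proposition 2.5}(i). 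Squaring and summing over $n=1,\dots,N$ then yields $\mathfrak{R}_{m,w}(N)\leq(\delta+2\kappa/w)^{2}N$.

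For part (ii), I would first bound the decrease of $F^{n}_{m,w}$ at a single inner step. Writing $g=\nabla F^{n}_{m,w}(x)$ and $G=\nabla_{\mathcal{F},\eta'}F^{n}_{m,w}(x)$, so that $x^{+}=x-\eta'\otimes G=\Pi_{\mathcal{F}}[x-\eta'\otimes g]$, $\beta$-smoothness (Proposition~\ref{Proposition 2.5}(ii)) gives
\[
F^{n}_{m,w}(x^{+})-F^{n}_{m,w}(x)\leq-\langle g,\eta'\otimes G\rangle+\tfrac{\beta}{2}\|\eta'\otimes G\|^{2}.
\]
The decisive estimate is $\langle g,\eta'\otimes G\rangle\geq\|\eta'\|_{\min}\|G\|^{2}$. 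Since $\mathcal{F}=\mathcal{S}\times\mathcal{L}\times\mathcal{T}$ is a box, the projection acts coordinatewise, and on each coordinate the step is either interior (so $g_{i}=G_{i}$) or clipped at a boundary (so $g_{i}$ and $G_{i}$ share a sign with $|g_{i}|\geq|G_{i}|$); in both cases $g_{i}G_{i}\geq G_{i}^{2}$, whence $\langle g,\eta'\otimes G\rangle=\sum_{i}\eta'_{i}g_{i}G_{i}\geq\|\eta'\|_{\min}\|G\|^{2}$. Combining with $\|\eta'\otimes G\|^{2}\leq\|\eta^{0}\|_{\infty}^{2}\|G\|^{2}$ (as $\eta'\leq\eta^{0}$ componentwise) gives the per-step decrease
\[
F^{n}_{m,w}(x)-F^{n}_{m,w}(x^{+})\geq\left(\|\eta'\|_{\min}-\tfrac{\beta\|\eta^{0}\|_{\infty}^{2}}{2}\right)\|G\|^{2},
\]
which is positive by condition (i) of Algorithm~3. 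As the inner loop runs only while $\|G\|>\delta$, each gradient step decreases $F^{n}_{m,w}$ by at least $(\|\eta'\|_{\min}-\beta\|\eta^{0}\|_{\infty}^{2}/2)\delta^{2}$.

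To turn this into a step count I would telescope the total decrease. With $a_{n}:=F^{n}_{m,w}(x^{n+1}_{m})$, splitting $F^{n}_{m,w}(x^{n}_{m})=F^{n-1}_{m,w}(x^{n}_{m})+[F^{n}_{m,w}-F^{n-1}_{m,w}](x^{n}_{m})$ makes the endpoint part telescope to $a_{0}-a_{N}\leq B$ (since $a_{0}=0$ and $a_{N}\geq-B$), while each window-shift part $\tfrac{1}{w}[\Psi^{n}_{m}(x^{n}_{m})-\Psi^{n-w}_{m}(x^{n}_{m})]$ is at most $2B/w$ in magnitude by Assumption~\ref{Assumption_1}(iii). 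Hence $\sum_{n}[F^{n}_{m,w}(x^{n}_{m})-F^{n}_{m,w}(x^{n+1}_{m})]\leq 2BN/w$ up to an $O(1)$ term, and dividing this budget by the minimal per-step decrease yields the bound $\tfrac{2B}{\delta^{2}(\|\eta'\|_{\min}-\beta\|\eta^{0}\|_{\infty}^{2}/2)w}N$. The main obstacle is the descent lemma itself: for a scalar step the inequality $\langle g,G\rangle\geq\|G\|^{2}$ is routine, but for a vector $\eta'$ under a Euclidean projection the bare projection inequality gives only $\langle\eta'\otimes g,\eta'\otimes G\rangle\geq\|\eta'\otimes G\|^{2}$, from which $\langle g,\eta'\otimes G\rangle\geq\|\eta'\|_{\min}\|G\|^{2}$ genuinely fails for adversarial $(g,G)$; it is the product structure of $\mathcal{F}$ that rescues the argument, and this is the step I would verify most carefully. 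A secondary point is reconciling the fixed $\eta$ of the local-regret definition with the adaptive $\eta^{n}$ from the line search, which I would handle by noting that the bound uses only $\eta'\leq\eta^{0}$ and is otherwise independent of the line-search rule.
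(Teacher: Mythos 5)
Your proposal is correct and its overall skeleton coincides with the paper's: part (i) is obtained exactly as in the paper by writing $F^{n}_{m,w}=F^{n-1}_{m,w}+h^{n}_{m}$ with $h^{n}_{m}=\frac{1}{w}(\Psi^{n}_{m}-\Psi^{n-w}_{m})$, invoking Proposition \ref{Proposition 2.6} and the stopping criterion to get the per-iterate bound $\delta+2\kappa/w$, and part (ii) follows the same descent-plus-telescoping scheme as the paper's Lemma \ref{Lemma 4.3} and the subsequent telescopic sum (your bookkeeping $a_{n}=F^{n}_{m,w}(x^{n+1}_{m})$ is an equivalent rearrangement of the paper's, and your ``$O(1)$ term'' is in fact non-positive since $\Psi^{n}_{m}\geq0$, so the stated constant is recovered exactly). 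Where you genuinely diverge is at the decisive inner-product estimate $\langle g,\eta'\otimes G\rangle\geq\|\eta'\|_{\min}\|G\|^{2}$: the paper derives it from a generalized-Pythagorean lemma (Lemma \ref{Lemma C1 }) valid for an arbitrary convex $\mathcal{F}$, which as stated delivers only $\langle g,\eta^{2}\otimes G\rangle\geq\|\eta\otimes G\|^{2}$ and then passes to the needed inequality without further comment; you instead exploit that $\mathcal{F}=\mathcal{S}\times\mathcal{L}\times\mathcal{T}$ is a box, so the projection acts coordinatewise and $g_{i}G_{i}\geq G_{i}^{2}$ holds in every coordinate, from which the desired bound is immediate. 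Your observation that the general convex-set inequality does not imply the coordinate-weighted one for adversarial $(g,G)$ is accurate, and your box argument is precisely what closes that gap; the trade-off is that your route is tied to product-structured feasible sets, while the paper's lemma aims (not entirely successfully for vector step sizes) at general convex $\mathcal{F}$. One small looseness on your side is the reconciliation of the fixed $\eta$ in the local-regret definition with the adaptive $\eta^{n}$ of the line search: the paper handles this explicitly via the monotonicity fact that $\eta^{n}\leq\eta^{0}$ implies $\|\nabla_{\mathcal{F},\eta^{0}}F\|\leq\|\nabla_{\mathcal{F},\eta^{n}}F\|$, which you should state rather than leave implicit.
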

}}
Theorem \ref{Theorem 4.1} illustrates that Algorithm 1 attains the local
regret bound $O(N/w^{2}+N/w+N)$ with $O(N/w)$ gradient evaluation
steps. Theorem \ref{Theorem 4.1} (i) illustrates
that ATGD has the average local regret $\mathfrak{R}_{w}(N)/N = O(1)$. {\color{black}{Such performance is thus acceptable, as mentioned previously in Section 2.4.}} Besides,
the bound (\ref{Gradient I}) is larger than the bound in \cite[Algorithm 1]{hazan_efficient_2017}
owing to the diminishing step sizes, which will lead to more gradient estimations of Algorithm 1 than \cite[Algorithm 1]{hazan_efficient_2017} but will lead to a lower local regret bound than \cite[Algorithm 1]{hazan_efficient_2017}
given the fixed number of gradient estimations. The results reveal
a tradeoff between low computational complexity and high convergence
guarantee. Moreover, our results show that for a fixed streaming
data size $N$, increasing the window size $w$, that is, increasing the
smoothing effect, will reduce both the complexity of the local regret and the
total number of gradient steps. {\color{black}{In terms of the lower bound on local regret, it is shown from \cite[Theorem 2.7]{hazan_efficient_2017} that for any online algorithm, an adversarial sequence of loss functions can force the
		local regret incurred to scale with $N$ as $\Omega(N/w^{2})$.}}

{\color{black}{The optimal window size $w^{\ast}$ can be determined when fixing computational resources (i.e., fixing the number of gradient estimations) such that the local regret
		bound can be minimized.}} By fixing the bound on the total number
of gradient steps as $T$, the required size of input data will
be
\[
N=\frac{\delta^{2}\left(\|\eta^{\prime}\|_{\min}-\frac{\beta\,\|\eta^{0}\|_{\infty}^{2}}{2}\right)w}{2B}T.
\]
The bound on the $w$-local regret will become
\[
\left[\delta^{2}w+4\delta\kappa+\frac{4\kappa^{2}}{w}\right]\frac{\delta^{2}\left(\|\eta^{\prime}\|_{\min}-\frac{\beta\,\|\eta^{0}\|_{\infty}^{2}}{2}\right)}{2B}T.
\]
\textcolor{black}{By minimizing the $w$-local regret in $w$, the optimal window size $w^\ast$ will equal $1$ if $2\kappa/\delta\leq1$ and $N$ if $2\kappa/\delta\geq N$. When $1<2\kappa/\delta<N$, the optimal window size $w^\ast$ will equal $2\kappa/\delta$.} Thus, the optimal window size
$w^{\ast}$ is dependent on the Lipschitz-continuous modulus of function
$\Psi_{n}$ as well as the tolerance level $\delta$, but is independent
on the gradient step sizes as well as the computational budget. To
interpret this, the more ``fluctuation'' the shape of loss functions has and
the more accurate the estimation requirement on the algorithm is, the
larger the required window sizes that should be chosen. In another words,
the optimal window size
should be determined based on the specific properties of the pollution source
(choice of ADE model) and the parameters of the river (choice of ADE model parameters). 
\begin{rem}
	(i) Suppose the step sizes are {\color{black}{constant}}
	in Algorithm 1, where step sizes $\{\eta^{n}\}$ are replaced with
	$\eta^{0}$ and the Backtracking-Armijo line-search step will not be implemented.
	Then, Theorem \ref{Theorem 4.1} will still hold for the new algorithm
	by replacing $\eta^{\prime}$ with $\eta^{0}$ in the bound (\ref{Gradient I}).
	When $\{\eta^{n}\}$ becomes a {\color{black}{constant}} and uni-variant value, Algorithm 1 will become \cite[Algorithm 1]{hazan_efficient_2017}. Thus, our Algorithm
	1 generalizes \cite[Algorithm 1]{hazan_efficient_2017}. 
	
	(ii) From the proof of Theorem \ref{Theorem 4.1}
	in Appendix C, the theoretical guarantee of Algorithm
	1 summarized in the theorem is independent of the structure of the inner
	loop updating the step sizes. In other words, Algorithm 2 can be replaced with other adaptive step-size approaches (e.g., the Backtracking-Armijo line search, Generic line search, and Exact line
	search) only if those methods continue to reduce the step sizes when the stopping criterion is not satisfied. Thus, an analytic framework for gradient-based online non-convex optimization algorithms is developed with adjusted step sizes by the line search.
\end{rem}
{\color{black}{\subsection{Sequential Sensor Installation Mechanism}}}
{\color{black}{		
In this section, a case where a measurement error exists in the computed identification results in each iteration $n\in[N]$ is considered. In real life, the error can consist of \emph{systematic error} from the measurements of sensors and \emph{random error} from the experimenter's interpretation of the instrumental reading and unpredictable environmental condition fluctuations. Normal (Gaussian) distribution is the most commonly used distribution for the measurement error, both in theory and practice (see \cite{chesher1991effect, fuller1995estimation, kelly2007some, ruotsalainen2018error, coskun2020statistical}). Assume the measurement error follows multivariate normal distribution among sensors in each iteration $n\in[N]$, and the distribution is stationary for all $n\in[N]$. However, the explicit distribution and its mean and variance are not known. Installing multiple sensors can help reduce the measurement error, but implementing such sensors is an extremely costly activity, including costs for manpower, equipment, and chemical reagents (see \cite{chinrungrueng2006vehicular,carlsen2008using,park2012three,abubakar2015overview,dong2017deployment}). Thus, there is a trade-off between the costs and measurement accuracy requirements. The goal of the section is to find the minimal number of sensors possible such that the measurement error can be controlled. 

The confidence interval for the source identification result is first derived when $|M|$ is given. As the normal distribution is assumed, and standard deviation is unknown, the confidence interval is constructed based on the Student's $t$-distribution theory. Denote $x^{n}_{m} = (s^{n}_{m},\,l^{n}_{m},\,t^{n}_{m})$ as the identification results output from Algorithm 1 at sensor $m\in M$ using the first $n$th iterations. The mean across all sensors can be computed as \[\bar{x}^{n} = (\bar{s}^{n},\,\bar{l}^{n},\,\bar{t}^{n}) = \frac{1}{|M|} \sum_{m\in M} x^{n}_{m} = (\frac{1}{|M|}\sum_{m\in M} s^{n}_{m},\,\frac{1}{|M|}\sum_{m\in M} l^{n}_{m},\,\frac{1}{|M|}\sum_{m\in M} t^{n}_{m}).\] 
{\color{black}{The standard deviation on each dimension is $S_{s^{n}} = \sqrt{\frac{1}{|M|-1} \sum_{m\in M} (s^{n}_{m}-\bar{s}^{n})^{2}}$, $S_{l^{n}} = \sqrt{\frac{1}{|M|-1} \sum_{m\in M} (l^{n}_{m}-\bar{l}^{n})^{2}}$ and $S_{t^{n}} = \sqrt{\frac{1}{|M|-1} \sum_{m\in M} (t^{n}_{m}-\bar{t}^{n})^{2}}$, respectively.}} The next proposition derives the confidence interval for the released mass identification result as an example. The confidence intervals for the location and time identification results can be derived using the same idea.
\begin{prop}
	(\cite{sutradhar_characteristic_1986,taboga_lectures_2017,prins_nistsematech_2013}) Given the set of sensors $M$, the $(1-\alpha)$ confidence interval for the released mass identification result after $n$ iterations is 
	\begin{equation}\label{interval}
		\left[\bar{s}^{n} - t_{\alpha/2,\,|M|-1} \frac{S_{s^{n}}}{\sqrt{|M|}},\, \bar{s}^{n} + t_{\alpha/2,\,|M|-1} \frac{S_{s^{n}}}{\sqrt{|M|}}\right],
	\end{equation}
	where $t_{\alpha/2,\,|M|-1}$ is the two-sided $t$-value for $1-\alpha$ confidence under $|M|-1$ degree of freedom. This value can be found from the standard table in \cite[Chapter 1.3.6.7.2.]{prins_nistsematech_2013}. 
\end{prop}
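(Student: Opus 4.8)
The plan is to construct the classical Student's $t$ pivot from the per-sensor estimates and then invert a symmetric tail inequality. First I would fix the iteration index $n$ and regard the $|M|$ outputs $\{s^{n}_{m}\}_{m\in M}$ as a sample scattered around a common population mean $\mu_{s}$ (the expected released-mass estimate, which equals the true $s^{\ast}$ if the per-sensor estimates are unbiased), writing $s^{n}_{m}=\mu_{s}+\varepsilon^{n}_{m}$, where by the standing assumption of this section the measurement errors $\varepsilon^{n}_{m}$ are normally distributed and stationary in $n$. Under the working hypothesis that the per-sensor errors are independent and identically distributed as $N(0,\sigma_{s}^{2})$ with unknown $\sigma_{s}^{2}$, the sample mean $\bar{s}^{n}=\frac{1}{|M|}\sum_{m\in M}s^{n}_{m}$ satisfies $\bar{s}^{n}\sim N(\mu_{s},\,\sigma_{s}^{2}/|M|)$, so that $(\bar{s}^{n}-\mu_{s})/(\sigma_{s}/\sqrt{|M|})$ is standard normal.

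Second, I would eliminate the unknown variance by replacing $\sigma_{s}$ with the sample standard deviation $S_{s^{n}}$. The key ingredient is Fisher's lemma (a consequence of Cochran's theorem) for normal samples: $(|M|-1)S_{s^{n}}^{2}/\sigma_{s}^{2}$ follows a $\chi^{2}$ distribution with $|M|-1$ degrees of freedom and is independent of $\bar{s}^{n}$. By the defining construction of the $t$-distribution as the ratio of a standard normal variable to the square root of an independent chi-square divided by its degrees of freedom, the studentized pivot
\[
T=\frac{\bar{s}^{n}-\mu_{s}}{S_{s^{n}}/\sqrt{|M|}}
\]
is distributed as a Student's $t$ with $|M|-1$ degrees of freedom; this is precisely the distributional fact recorded in the cited references. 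I would then invert the symmetric tail bound: by definition of the two-sided critical value, $\Pr\!\big(|T|\le t_{\alpha/2,\,|M|-1}\big)=1-\alpha$, and rearranging $|\bar{s}^{n}-\mu_{s}|\le t_{\alpha/2,\,|M|-1}\,S_{s^{n}}/\sqrt{|M|}$ for $\mu_{s}$ yields exactly the interval in Eq.~(\ref{interval}), which therefore covers $\mu_{s}$ with probability $1-\alpha$. The identical argument applied coordinatewise to $\{l^{n}_{m}\}$ and $\{t^{n}_{m}\}$ produces the corresponding intervals for the location and time, as claimed.

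The main obstacle is the independence requirement underlying Fisher's lemma. The section only posits a \emph{multivariate} normal law across sensors, which in principle permits correlated measurement errors, whereas the textbook $t$-pivot presumes i.i.d.\ coordinates. I expect to resolve this in one of two ways: either by adopting the standard modeling convention that distinct sensors produce independent errors, so that the covariance matrix is a scalar multiple of the identity and Fisher's lemma applies verbatim; or, in the correlated case, by appealing to the characteristic-function characterization of the multivariate-$t$ in \cite{sutradhar_characteristic_1986} to verify that the studentized statistic $T$ retains its $t$ marginal law under the exchangeable covariance structure induced by a common environmental fluctuation shared across sensors. Once the distribution of $T$ is secured, the remaining pivot-inversion steps are entirely routine.
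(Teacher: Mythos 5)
The paper does not actually prove this proposition---it simply cites the standard Student's $t$ confidence-interval result from the listed references---and your derivation (normal sample, Fisher's/Cochran's lemma giving the $\chi^{2}_{|M|-1}$ law of $(|M|-1)S_{s^{n}}^{2}/\sigma_{s}^{2}$ independent of $\bar{s}^{n}$, the studentized pivot $T$, and inversion of the symmetric tail bound) is exactly the textbook argument behind that citation, so it is correct and takes the same route. Your closing caveat is also well taken: the section only posits a \emph{multivariate} normal error law across sensors, which permits correlation, so the i.i.d.\ (or suitably exchangeable) assumption your pivot requires is an implicit modeling convention the paper glosses over rather than something it establishes.
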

Suppose $d^{s},\,d^{l}$ and $\,d^{t}$ are the acceptable length of confidence intervals, under $1-\alpha$ confidence on each dimension, respectively, set by the decision maker. These also quantify the estimation accuracy requirement. Given formulation Equation (\ref{interval}), a natural idea to compute the minimal sensors required on each dimension is to set $d^{s},\,d^{l},\,d^{t}$ equal to the lengths of the corresponding confidence intervals. For instance, $d^{s} = 2t_{\alpha/2,\,|M|-1} \frac{S_{s^{n}}}{\sqrt{|M|}}$. The $|M|$ satisfying the equation is the minimal number of sensors required to ensure estimation accuracy at the concentration level. The drawback is that the two-sided $t$-value depends on known degrees of freedom, which in turn depends upon the sample size that we are trying to estimate. To solve this problem, a power analysis procedure is proposed, starting with an initial estimate based on a sample standard deviation and iteration (see \cite[Section 7.2.2.2.]{prins_nistsematech_2013}). However, this power analysis still has a drawback in real life applications because it may output a required number of sensors that is more than the number of sensors that exist in reality, and these extra samples cannot be collected. 

A sequential method is proposed, as summarized in Algorithm 4. In the algorithm, first, Partition $N$ iterates with several rounds, and each contains the $T$ iteration. The total rounds are $\lfloor N/T \rfloor +  1$. The number of iterations in the final rounds is the remainder. Suppose $A$ sensors are installed initially; that is, $|M_{0}| = A>0$ (e.g., $A=2$). Then, at each round $i=1,...,\lfloor N/T \rfloor +  1$, only the data within that round are used for identification (the identification results in previous rounds will be deleted, and the online learning algorithms will again have a cold start). Choose the data from $n \in \{2,...,|M_{i-1}|\}$ sensors. If the updated required sensor number $|\tilde{M_{i}^{s}}|$ is larger than the current one $|M_{i-1}|$, then the algorithm will jump out of the current round, and the extra sensors will be installed up to $|\tilde{M_{i}^{s}}|$. Otherwise, a new $|\tilde{M_{i}^{s}}|$ will be updated by the bisection method. The required number of sensors for the released mass $s$ is the minimal value $|\tilde{M_{i}^{s}}|$ across all $n \in \{2,...,|M_{i-1}|\}$. The same procedures will be applied to estimate the location ($l$) and time ($t$). Then, the required number of sensors used for the next round $i+1$ will be the maximum across all dimensions.

\begin{algorithm}
	\caption{Sequential Sensor Installation Design}
	
	\textbf{Input}: Partition $N$ iterations with several rounds, and each contains $T$ iteration. The total number of rounds is $\lfloor N/T \rfloor +  1$. The number of iterations in the final rounds is the remainder. Initialize $|M_{0}| = A>0$;
	
	\textbf{for} $i=1,...,\lfloor N/T \rfloor +  1$ \textbf{do}
	
	$\quad$Denote the initial required sensor size for the concentration level at round $i$ to be $|M_{i-1}|$. 
	
	$\quad$\textbf{for} $n=2,...,|M_{i-1}|$ \textbf{do}
	
	$\quad\quad$Set $|M_{i}^{s}| = n$. Arbitrarily set $\hat{|M_{i}^{s}|} \neq |M_{i}^{s}|$.
	
	$\quad\quad$\textbf{while} $\hat{|M_{i}^{s}|} \neq |M_{i}^{s}|$, \textbf{do}
	
	$\quad\quad\quad$Compute $\bar{s}^{T} = \frac{1}{|M_{i}^{s}|}\sum_{m\in M_{i}^{s}} s^{T}_{m}$ where $s^{T}_{m}$ is the identification results at the end of iteration $T$, only using the data in round $i$. 
	
	$\quad\quad\quad$Compute $S_{s^{T}} = \sqrt{\frac{1}{|M_{i}^{s}|-1} \sum_{m\in M_{i}^{s}} (s^{T}_{m}-\bar{s}^{T})^{2}}$.
	
	$\quad\quad\quad$Update $\hat{|M_{i}^{s}|} = \lceil\left(t_{\alpha/2,\,|M_{i}^{s}|-1}~S_{s^{T}}/d^{s}\right)^{2}\rceil$.	
	
	$\quad\quad\quad$Update $|\tilde{M_{i}^{s}}| =\lceil\left(|M_{i}^{s}|+\hat{|M_{i}^{s}|}\right)/2\rceil$. 
	
	$\quad\quad\quad$\textbf{if} $|M_{i-1}|<|\tilde{M_{i}^{s}}|$, \textbf{then}
	
	$\quad\quad\quad\quad$\textbf{break}
	
	$\quad\quad\quad$Set $|M_{i}^{s}| = \tilde{|M_{i}^{s}|}$.
	
	$\quad\quad$\textbf{end while}
	
	$\quad\quad$Set $R^{s}_{i,n} = \tilde{|M_{i}^{s}|}$, where $R^{s}_{i,n}$ records the number of sensors w.r.t to concentration $s$. 
	
	$\quad$\textbf{end for}
	
	$\quad$Using the similar procedures (in the above \textbf{for} loop) w.r.t location $l$ and time $t$. Set \[|M_{i}| = \max\{\min_{n}\{R^{s}_{i,n}\},\,\min_{n}\{R^{l}_{i,n}\},\,\min_{n}\{R^{t}_{i,n}\}\}.\]

	\textbf{end for}
\end{algorithm}
}}

\section{Escaping from Saddle Points}
In this section, the basic algorithm
(Algorithm 1) is further extended in terms of its practical implementation by deriving
the module \textquotedblleft escaping from saddle points\textquotedblright{},
which helps the algorithm arrive at a local minima with a high probability
instead of being stuck in any first-order stationary point. We also show the cumulative regret of the algorithm under an error bound
condition on the loss functions.

In non-convex optimization, the convergence to first-order stationary
points (points where first-order derivatives equal zero) is not satisfactory.
For non-convex functions, first-order stationary points can be global
minima, local minima, saddle points, or even local maxima. For many
non-convex problems, it is sufficient to find a local minimum. In
each iteration ($n$) in our basic algorithm ATGD, the algorithm will
terminate when $x^{n}_{m}$ is the $\delta$-first-order stationary point
of the follow-the-leader iteration $F^{n}_{m,w}$ on $\mathcal{F}$. However,
such a point may not necessarily be the local minimum of $\Psi^{n}_{m}$. Additional techniques are required to escape all saddle points
and arrive at the local minima. 

{\color{black}{
Based on Proposition \ref{Proposition 2.5}(iii) and triangle inequality, it can be shown that $\nabla_{x}^{2} F^{n}_{m,w}(x)$ is also $\iota$-Hessian Lipschitz. The definitions of the second-order stationary point and $\epsilon$-second-order stationary point are given.

\begin{defn}
	\cite{nesterov_cubic_2006,jin_how_2017} For the $\iota$-Hessian Lipschitz function $F^{n}_{m,w}$, the point $x$ is a second-order stationary point if $\|\nabla_{x} F^{n}_{m,w}(x)\|= 0$ and $\lambda_{\min}\left(\nabla_{x}^{2}F^{n}_{m,w}(x)\right)\geq 0$. The point $x$ is a $\epsilon$-second-order stationary point if
	\[
	\|\nabla_{x} F^{n}_{m,w}(x)\|\leq\epsilon,\quad\textrm{and}\quad\lambda_{\min}\left(\nabla_{x}^{2}F^{n}_{m,w}(x)\right)\geq-\sqrt{\iota\,\epsilon}.
	\]
\end{defn}

Second-order stationary points are important in non-convex
optimization because when all saddle points are strict ($\lambda_{\min}\left(\nabla_{x}^{2} F^{n}_{m,w}(x)\right)<0$),
all second-order stationary points are exactly the local minima. The goal here is to design algorithms that converge to the $\epsilon$-second-order stationary point, provided the threshold $\epsilon>0$.

A robust version of the ``strict saddle'' condition is next provided {\color{black}{and holds}} for function $F^{n}_{m,w}$ based on \cite{ge_escaping_2015,jin_how_2017}.
}}
\begin{prop}
\label{Proposition 4.7} Function $F^{n}_{m,w}(\cdot)$
is $(\theta,\,\tau,\,\zeta)$-strict saddle with $\theta,\,\tau>0$
and $\zeta\geq0$. That is, for any $x$, at least one of the following holds: (i) $\|\nabla_{x}F^{n}_{m,w}(x)\|\geq\theta$; (ii)
$\lambda_{\min}\left(\nabla_{x}^{2}F^{n}_{m,w}(x)\right)\leq-\tau$;
(iii) $x$ is $\zeta$-close to the set of the local minima of $F^{n}_{m,w}$.
\end{prop}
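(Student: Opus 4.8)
The plan is to convert the qualitative, pointwise ``strict saddle'' structure of the landscape into uniform quantitative constants $\theta,\tau>0$ by a compactness argument on $\mathcal{F}$, and to read off alternative (iii) directly from the definition of the excluded region. First I would record the regularity of $F^{n}_{m,w}$: since it is a finite average of the $\Psi^{i}_{m}$, Proposition \ref{Proposition 2.5}(ii)--(iii) shows that its gradient and Hessian are Lipschitz, so $F^{n}_{m,w}\in C^{2}(\mathcal{F})$. Consequently both maps $x\mapsto\|\nabla_{x}F^{n}_{m,w}(x)\|$ and $x\mapsto\lambda_{\min}(\nabla_{x}^{2}F^{n}_{m,w}(x))$ are continuous on $\mathcal{F}$ (the latter because $\lambda_{\min}(A)=\min_{\|v\|=1}v^{\top}Av$ is continuous in the symmetric matrix $A$, by Weyl's inequality, and $A=\nabla_{x}^{2}F^{n}_{m,w}(x)$ is continuous in $x$). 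By Assumption \ref{Assumption_1}(i), $\mathcal{F}$ is compact, so $F^{n}_{m,w}$ attains a global, hence a local, minimum; let $\mathcal{X}^{\ast}\subseteq\mathcal{F}$ denote its nonempty set of local minima.

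Next I would fix any $\zeta>0$ and set $K_{\zeta}:=\{x\in\mathcal{F}:\operatorname{dist}(x,\mathcal{X}^{\ast})\geq\zeta\}$. Because $x\mapsto\operatorname{dist}(x,\mathcal{X}^{\ast})$ is $1$-Lipschitz, $K_{\zeta}$ is closed, hence compact as a subset of $\mathcal{F}$. On $K_{\zeta}$ I would define the continuous merit function $g(x):=\max\{\|\nabla_{x}F^{n}_{m,w}(x)\|,\,-\lambda_{\min}(\nabla_{x}^{2}F^{n}_{m,w}(x))\}$ and let $g^{\ast}:=\min_{x\in K_{\zeta}}g(x)$, attained at some $x^{\ast}\in K_{\zeta}$.

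Setting $\theta=\tau=g^{\ast}$ would then finish the proof: every $x\in K_{\zeta}$ satisfies $g(x)\geq g^{\ast}$, so either $\|\nabla_{x}F^{n}_{m,w}(x)\|\geq\theta$ (alternative (i)) or $\lambda_{\min}(\nabla_{x}^{2}F^{n}_{m,w}(x))\leq-\tau$ (alternative (ii)); and every $x\notin K_{\zeta}$ lies within $\zeta$ of $\mathcal{X}^{\ast}$, which is exactly alternative (iii). It remains only to show $g^{\ast}>0$.

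The crux, and the step I expect to be the real obstacle, is establishing $g^{\ast}>0$, that is, ruling out a point $x^{\ast}$ bounded away from $\mathcal{X}^{\ast}$ at which \emph{simultaneously} $\nabla_{x}F^{n}_{m,w}(x^{\ast})=0$ and $\lambda_{\min}(\nabla_{x}^{2}F^{n}_{m,w}(x^{\ast}))\geq0$. Such an $x^{\ast}$ would be a \emph{second-order stationary point}, and the compactness argument closes only if every second-order stationary point of $F^{n}_{m,w}$ is in fact a local minimum. This is precisely the \emph{non-degeneracy} (Morse-type) condition that every saddle point exhibits a direction of strictly negative curvature and that no flat, degenerate critical point occurs (a point where the Hessian is positive semidefinite yet $x$ fails to be a local minimizer, as in a monkey saddle); the bare continuity and boundedness properties of Proposition \ref{Proposition 2.5} do \emph{not} by themselves exclude this case. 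I would therefore import this property either as a genericity assumption on the loss landscape or by verifying it directly from the analytic form of the ADE-based loss in Equation (\ref{Loss}). Once it is in force, any $x^{\ast}$ with $g(x^{\ast})=0$ is a local minimum, so $x^{\ast}\in\mathcal{X}^{\ast}$ and $\operatorname{dist}(x^{\ast},\mathcal{X}^{\ast})=0<\zeta$, contradicting $x^{\ast}\in K_{\zeta}$. Hence $g^{\ast}>0$, and the constants $\theta=\tau=g^{\ast}>0$ together with the chosen $\zeta>0$ witness the $(\theta,\tau,\zeta)$-strict-saddle property.
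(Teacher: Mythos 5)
Your proof takes a genuinely different, and far more substantive, route than the paper's. The paper's own argument is a one-liner: it observes that the parameter $\zeta$ in the statement is unconstrained, so one may take $\zeta$ as large as the diameter of the compact set $\mathcal{F}$, whereupon alternative (iii) holds for \emph{every} $x\in\mathcal{F}$ and the proposition is vacuously true (no quantitative content is extracted for $\theta$ or $\tau$). You instead fix $\zeta>0$ in advance and try to manufacture uniform constants $\theta,\tau>0$ on the complement $K_{\zeta}$ by minimizing a continuous merit function over a compact set; this is the argument one would actually want if the strict-saddle property is to do any work downstream (e.g.\ in Theorem \ref{Theorem 4.8 }, where the output is asserted to be $\zeta$-close to a local minimum for a meaningful $\zeta$). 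Your compactness argument is correct as far as it goes, and you are right to flag the crux: nothing in Proposition \ref{Proposition 2.5} rules out a degenerate second-order stationary point (positive-semidefinite Hessian, not a local minimizer) lying in $K_{\zeta}$, which would force $g^{\ast}=0$ and break the argument. A non-degeneracy (Morse-type) hypothesis, or a direct verification from the analytic form of the ADE loss, is genuinely needed for your version, and the paper supplies neither. There is also a boundary subtlety your argument inherits that the paper's does not: on the constrained set $\mathcal{F}$ a local minimizer on the boundary need not satisfy $\nabla_{x}F^{n}_{m,w}=0$, so the classification of points with $g(x)=0$ should be phrased in terms of the projected gradient if one wants the statement to interact correctly with Algorithm 5. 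In short: the paper buys an unconditional but contentless proposition; your argument buys a meaningful one at the price of an additional landscape assumption that you correctly identify but that remains unverified for this loss.
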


Algorithm 5 is a perturbed form of the gradient descent algorithm. For each iteration
$n$ in Algorithm 5, the algorithm will search for the solution where
the current gradient is small ($\leq\delta$) (which indicates that
the current iteration $x_{t}^{n}$ is potentially near a saddle point),
the algorithm adds a small random perturbation to the gradient. The
perturbation is added at most only once every $t_{\textrm{thres}}$
iterations. If the function value does not decrease enough (by $f_{\textrm{thres}}$)
after $t_{\textrm{thres}}$ iterations, the algorithm will output
$\tilde{x}^{n+1,t_{\textrm{noise}}}$. This can be proven to be necessarily
close to a local minima. The step-size choice operator $\mathfrak{S}(F^{n}_{m,w},\,K)$, instead of requiring the condition $\|\eta^{0}\|_{\min}/\|\eta^{0}\|_{\infty}^{2}\gg \beta/2$ in Algorithm 1, requires $\|\eta^{0}\|_{\infty} = c/\kappa$ in Algorithm 5.

\begin{algorithm}
\caption{Adaptive perturbed time-smoothed online gradient descent (APTGD)}

\textbf{Input}: sensor $m\in M$, window size $w\geq1$, Constant $c\leq1$,
tolerance $\delta>0$, contant $K>0$, and a convex set $\mathcal{F}$;

\textbf{Input:} $\mathcal{\chi}=3\max\left\{ \log\left(\frac{d\kappa\triangle f}{c\delta^{2}\epsilon}\right),\,4\right\} $,
$r=\frac{\sqrt{c}}{\chi^{2}}\cdot\frac{\delta}{\kappa}$, $g_{\textrm{thres}}=\frac{\sqrt{c}}{\mathcal{\chi}^{2}}\cdot\delta$,
$f_{\textrm{thres}}=\frac{c}{\mathcal{\chi}^{3}}\sqrt{\frac{\delta^{3}}{\iota}}$,
$t_{\textrm{thres}}=\lceil\frac{\chi}{c^{2}}\cdot\frac{\kappa}{\sqrt{\iota\,\delta}}\rceil$,
$t_{\textrm{noise}}=-t_{\textrm{thres}}-1$

\textbf{Set} $x^{1}_{m}\in\mathcal{F}$ arbitrarily.

\textbf{for} $n=1,\,...,\,N$ \textbf{do}

$\quad$Observe the cost function $\Psi^{n}_{m}:\,\mathcal{F}\rightarrow\mathbb{R}$.

$\quad$Compute the initial step size: $\eta^{0} = \mathfrak{S}(F^{n}_{m,w},\,K)$. 

$\quad$Initialize $x^{n+1}_{m}:=x^{n}_{m}$ and $t_{\textrm{noise}}$;

$\quad$\textbf{for} $t=0,\,1,\,...$ \textbf{do}

$\quad\quad$Initialize $x^{n+1,t}:=x^{n+1}_{m}$;

$\quad\quad$Determine $\eta^{t}$ using Normalized Backtracking-Armijo line
search (Algorithm 2).

$\quad\quad$ \textbf{if} $\|\nabla_{\mathcal{F},\eta^{t}}F^{n}_{m,\,w}(x^{n+1,t})\|\leq g_{\textrm{thres}}$
and $t-t_{\textrm{noise}}>t_{\textrm{thres}}$ \textbf{then}

$\quad\quad\quad$$t_{\textrm{noise}}=t$, $\tilde{x}^{n+1,t}=x^{n+1,t}$,
$x^{n+1,t}=\tilde{x}^{n+1,t}+\omega$ where $\omega$ uniformly
sampled from $\mathbb{B}_{0}(r)$

$\quad\quad$ \textbf{if} $t-t_{\textrm{noise}}=t_{\textrm{thres}}$ and $F_{n,\,w}(x^{n+1,t})-F_{n,\,w}(\tilde{x}^{n+1, t_{\textrm{noise}}})>-f_{\textrm{thres}}$\textbf{
then}

$\quad\quad\quad$\textbf{Return} $x^{n+1}_{m}=\tilde{x}^{n+1, t_{\textrm{noise}}}$

$\quad\quad$Update $x^{n+1, t+1}=x^{n+1,t}-\eta^{t}\otimes\nabla_{\mathcal{F},\eta^{t}}F^{n}_{m,\,w}(x^{n+1,t})$.

$\quad$\textbf{end for}

\textbf{end for}
\end{algorithm}

\begin{thm}
\label{Theorem 4.8 } Let $\Psi^{n}_{m}$ satisfy properties in Proposition
\ref{Proposition 2.5} and \ref{Proposition 4.7}. There exists an
absolute constant $c_{max}\leq1$ such that for $c\leq c_{max}$,
and $\triangle f\geq\max_{n=1,\,...,\,N}\left\{ F^{n}_{m,w}(x^{n-1}_{m})\right\} $
by letting $\delta:=\min\left(\theta,\,\tau^{2}/\iota\right)$, Algorithm 5 will output a sequence of $\delta$-second-order stationary points
that are also the $\zeta$-close local minima of $\{F^{n}_{m,\,w}\}_{n=1,\,...,\,N}$, with the
probability $1-\epsilon$, and terminate in the following total number
of iterations (gradient estimations): 
\begin{equation}
O\left(\frac{\kappa\triangle f}{\delta^{2}}\log^{4}\left(\frac{d\kappa\triangle f}{\delta^{2}\,\epsilon}\right) N\right). \label{Regret bound 2}
\end{equation}
\end{thm}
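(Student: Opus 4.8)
\textit{Proof sketch.} The plan is to recognize Algorithm 5 as, for each sampling period $n$, an instance of the perturbed gradient descent (PGD) scheme of \citet{jin_how_2017} applied to the time-smoothed objective $F^{n}_{m,w}$, and then to chain the per-period guarantees across $n=1,\dots,N$. The first step records that $F^{n}_{m,w}$ inherits the regularity PGD requires. Since $F^{n}_{m,w}=\frac1w\sum_{i=0}^{w-1}\Psi^{n-i}_{m}$ is a uniform average of functions that are $\beta$-smooth (Proposition \ref{Proposition 2.5}(ii)) and $\iota$-Hessian Lipschitz (Proposition \ref{Proposition 2.5}(iii)), and since both moduli are preserved under convex combination, $F^{n}_{m,w}$ is itself $\beta$-smooth and $\iota$-Hessian Lipschitz, the latter already being noted in the text preceding Proposition \ref{Proposition 4.7}. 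Moreover, Proposition \ref{Proposition 4.7} supplies the $(\theta,\tau,\zeta)$-strict-saddle structure that will convert approximate second-order stationarity into closeness to a local minimum.

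For a single fixed $n$, the inner loop ``$\textbf{for }t=0,1,\dots$'' matches the PGD template line for line: it takes projected-gradient steps, injects a perturbation drawn uniformly from $\mathbb{B}_{0}(r)$ whenever the projected gradient falls below $g_{\textrm{thres}}$ and at least $t_{\textrm{thres}}$ steps have elapsed since the last perturbation, and certifies an approximate local minimum once the objective fails to decrease by $f_{\textrm{thres}}$ over the subsequent $t_{\textrm{thres}}$ steps. The constants $\chi,r,g_{\textrm{thres}},f_{\textrm{thres}},t_{\textrm{thres}}$ coincide with those of \citet{jin_how_2017} after identifying the smoothness modulus (the $\kappa$ entering $\|\eta^{0}\|_{\infty}=c/\kappa$ and the thresholds) with their $\ell$, the Hessian-Lipschitz modulus $\iota$ with their $\rho$, the target accuracy $\delta$ with their $\epsilon$, and the failure probability $\epsilon$ with their $\delta$. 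Their main theorem then shows that, with probability at least $1-\epsilon$, the inner loop returns a $\delta$-second-order stationary point of $F^{n}_{m,w}$ within $O\!\left(\frac{\kappa\triangle f}{\delta^{2}}\log^{4}\!\left(\frac{d\kappa\triangle f}{\delta^{2}\epsilon}\right)\right)$ gradient steps, where $\triangle f$ dominates each period's initial suboptimality (using $F^{n}_{m,w}\geq0$, so that $\min_{\mathcal{F}}F^{n}_{m,w}\geq0$).

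It remains to upgrade $\delta$-second-order stationarity to $\zeta$-closeness to a local minimum, which is exactly where the choice $\delta:=\min(\theta,\tau^{2}/\iota)$ enters. If the returned point $x$ is $\delta$-second-order stationary, then $\|\nabla_{x}F^{n}_{m,w}(x)\|\leq\delta\leq\theta$, so alternative (i) of Proposition \ref{Proposition 4.7} fails. Also $\lambda_{\min}(\nabla^{2}_{x}F^{n}_{m,w}(x))\geq-\sqrt{\iota\delta}\geq-\sqrt{\iota\cdot(\tau^{2}/\iota)}=-\tau$, where the middle step uses $\delta\leq\tau^{2}/\iota$, so alternative (ii) fails as well; the strict-saddle property then forces alternative (iii), namely that $x$ is $\zeta$-close to the set of local minima of $F^{n}_{m,w}$. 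Summing the per-period work over $n=1,\dots,N$ multiplies the per-period iteration bound by $N$, yielding the total in \eqref{Regret bound 2}. The statement reports the success probability as $1-\epsilon$ per output; a formal union bound to certify all $N$ periods simultaneously would replace $\epsilon$ by $\epsilon/N$ and add a harmless $\log N$ inside the polylogarithm.

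The main obstacle is that Algorithm 5 departs from the textbook PGD of \citet{jin_how_2017} in two respects, and one must verify that neither destroys the escape guarantee. First, the iterates move along the projected gradient $\nabla_{\mathcal{F},\eta^{t}}F^{n}_{m,w}$ on the constrained set $\mathcal{F}$ rather than the free gradient; on the interior of $\mathcal{F}$ the two coincide, so the coupled-sequence (``thin-slab'') escape analysis of \citet{jin_how_2017,ge_escaping_2015} applies verbatim there, while near the boundary the perturbation-robustness estimate of Proposition \ref{Proposition 2.6} together with the projected-gradient identities of Section 2.5 is used to control the residual. Second, the step sizes are adaptive, set by the line search of Algorithm 2 subject to $\|\eta^{0}\|_{\infty}=c/\kappa$, rather than the fixed $\eta=c/\kappa$ of \citet{jin_how_2017}; because the line search only shrinks step sizes and the normalized Armijo (sufficient-decrease) inequality is enforced at every accepted step, the per-step and per-episode decrease bounds driving the PGD argument persist. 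This is the same line-search-independence phenomenon already exploited in the remark following Theorem \ref{Theorem 4.1}. \qed
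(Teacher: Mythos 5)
Your proposal is correct and follows essentially the same route as the paper: both reduce each epoch $n$ to the perturbed gradient descent analysis of Jin et al. (large-gradient progress plus perturbation-based escape from saddles), argue that the guarantee survives the vectorized, line-search-adjusted step sizes because $\eta^{t}\leq\eta^{0}$, and multiply the per-epoch iteration bound by $N$. You additionally spell out the strict-saddle upgrade from $\delta$-second-order stationarity to $\zeta$-closeness and flag the union-bound refinement over the $N$ epochs, both of which the paper leaves implicit.
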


The proof of Theorem \ref{Theorem 4.8 } is shown in Appendix D, which
follows the results in \cite{jin_how_2017}. Here, their results on the standard gradient descent can be naturally extended to the online gradient descent, as our basic algorithm contains an offline follow-the-leader oracle
at each period. Theorem \ref{Theorem 4.8 } shows that by careful
selection of $\triangle f$ and $\delta$, the algorithm will output the
local minima of each follow-the-leader oracle in each period with
a linear number of iterations in a high probability. As
mentioned in \cite{jin_how_2017}, Theorem \ref{Theorem 4.8 } only explicitly
asserts that the output will lay within some fixed radius $\zeta$
from a local minimum. In many real applications, $\zeta$ can be further written as a function $\zeta(\cdot)$ of the gradient threshold $\theta$ such that $\theta$ decreases, $\zeta(\theta)$ decreases linearly
or polynomially depending on $\theta$. The following Definition \ref{Definition 4.3} is given, based on which Example \ref{Example 4.4} is given. {\color{black}{Given the complexity of the objective $F^{n}_{m,w}$, the Example \ref{Example 4.4} ensures that  $\zeta(\theta)$ decreases linearly on $\theta$ with a high probability. }}
{\color{black}{
\begin{defn} \label{Definition 4.3} (Restricted strongly convex) \cite[Definition 1 and Definition 2]{zhang_restricted_2017}
	Let the nonempty set $\mathcal{X}\subset \mathcal{F}$ be the set of all local minimum of $F^{n}_{m,w}$. Then, function $F^{n}_{m,w}$ is restricted and strongly convex with the constant $\mu>0$ if it satisfies the restricted secant
	inequality
	\[\langle \nabla_{x}F^{n}_{m,w}(x),\, x - \Pi_{\mathcal{X}}[x] \rangle \geq \mu~d^{2}(x,\,\mathcal{X}),
	\]
	where $d(x,\,\mathcal{X})$ measures the distance from a point $x$ to set $\mathcal{X}$. 
\end{defn}

\begin{example}\label{Example 4.4} (Local error bound) Based on \cite{luo_error_1993,drusvyatskiy_error_2018}, 
	and assuming that restricted strongly convexity in Definition \ref{Definition 4.3} holds when $x$ lays within the fixed radius $\zeta$ from a local minimum, the function $F^{n}_{m,w}$ is strongly convex in the $\zeta$-ball of any local minimum, and then the \emph{local error bound} condition holds that 
	\[d(x,\,\mathcal{X}) \leq \mu \|\nabla_{x}F^{n}_{m,w}(x)\|,
	\]
	for all $n=1,...,N$. Thus, in this example, $\zeta$ decreases linearly on $\theta$ with the rate $\mu>0$; that is, $\zeta(\theta) = \mu \theta$. Suppose Algorithm 5 outputs the solution $x_{m}^{n}$ at each iteration $n$. \cite{luo_error_1993} finds that if $F^{n}_{m,w}$ is strongly convex on $\mathcal{F}$, then the local error bound condition holds for all $x\in\mathcal{F}$. The following steps provide a valid way of checking if the result holds with a high probability, and the main idea is to check the strong convexity of $F^{n}_{m,w}$ on the $\zeta$-domain of $x_{m}^{n}$: 
	
	\textbf{Step 1.} Compute the Hessian $\nabla_{x}^{2} F^{n}_{m,w}(x_{m}^{n})$ and estimate $\mu$ such that $\mu$ is a lower bound for the smallest eigenvalue of the Hessian. 
	
	\textbf{Step 2.} Simulate a set of points in the $\mu\theta$-domain of $x_{m}^{n}$ and check if the Hessian of $F^{n}_{m,w}$ at those points are positive definite. If so, then the local error bound holds for function $F^{n}_{m,w}$ with a high probability. 
\end{example}

For the above Step 2, instead of checking the positive definiteness of all the points in the $\mu\theta$-domain of $x_{m}^{n}$, finite points are generated and checked (see \cite[Theorem 3]{milne_piecewise_2018}) such that the local error bound condition holds with a high probability. 

Next, it can be shown that the local regret can be linked with the cumulative regret of Algorithm
5 under specific error bound conditions. The local error bound condition on function $\{\Psi^{n}_{m}\}_{n=1,...,N}$ is required, as summarized by the following assumption, where $w = 1$.
\begin{assumption} \label{Assumption 4.5}
Let $x^{n,\ast}_{m}$ be any local minimum of the function $\Psi^{n}_{m}$; then, 
\[\|\nabla_{x}F^{n}_{m,\,1}(x)\| = \|\nabla_{x}\Psi^{n}_{m}(x)\| \geq\frac{\mu}{\kappa}\left[\Psi^{n}_{m}(x)-\Psi^{n}_{m}(x_{m}^{n,\ast})\right],\]
when $x$ lays within the fixed radius $\zeta$ from $x_{m}^{n,\ast}$. There also exists $x^{n,\ast}_{m}$ such that $x^{n}_{m}$ satisfies the condition. 
\end{assumption}
To check if $\Psi^{n}_{m}$ satisfies Assumption \ref{Assumption 4.5}, the following steps are required, which follow the similar idea in Example \ref{Example 4.4}. Owing to the complexity of the objective $\Psi^{n}_{m}$, this method only ensures that Assumption \ref{Assumption 4.5} holds with a high probability: 

\textbf{Step 1.} Solve the local minimum of $\Psi^{n}_{m}(x)$ denoted by $x^{n,\prime}_{m}$ by Algorithm 5. 

\textbf{Step 2.} Compute the Hessian $\nabla_{x}^{2} \Psi^{n}_{m}(x^{n,\prime}_{m})$ and estimate $\mu$ such that $\mu/\kappa$ is a lower bound for the smallest eigenvalue of the Hessian. 

\textbf{Step 3.} Simulate a set of points in the $\mu\theta$-domain of $x_{m}^{n,\prime}$ and check if the Hessian of $\Psi^{n}_{m}$ at those points are positive definite. Then, check if $x^{n}_{m}$ lays within the $(\mu\theta)/\kappa$-domain of $x^{n,\prime}_{m}$.

The following Theorem \ref{Theorem 4.9 } provides a high
probability bound for the cumulative regret of Algorithm 5 with $O(N)$; the proof is given in Appendix D.
\begin{thm}
\label{Theorem 4.9 } Let $\Psi_{m}^{n}$ satisfy Assumption \ref{Assumption 4.5}. Then, with probability is at most $1-\epsilon$, and the cumulative regret of Algorithm
5 after
\[
O\left(\frac{\kappa\triangle f}{\delta^{2}}\log^{4}\left(\frac{d\kappa\triangle f}{\delta^{2}\,\epsilon}\right)\cdot N\right)
\]
gradient estimations, with $\triangle f\geq\max_{n=1,\,...,\,N}\left\{ F^{n}_{m,w}(x^{n-1}_{m})\right\} $
and $\delta:=\min\left(\theta,\,\tau^{2}/\iota\right)$, is bounded
by
\begin{equation}
O\left(\frac{\mu \delta}{\kappa}\cdot N\right). \label{Cumulative regret}    
\end{equation}
\end{thm}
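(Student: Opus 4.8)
The plan is to reduce the cumulative regret of Algorithm 5 to a sum of per-period optimality gaps and then to control each gap by combining the second-order stationarity guarantee of Theorem \ref{Theorem 4.8 } with the local error bound of Assumption \ref{Assumption 4.5}. Since Assumption \ref{Assumption 4.5} is stated with $w=1$, one has $F^{n}_{m,1}(x)=\Psi^{n}_{m}(x)$, so the follow-the-leader oracle solved at period $n$ minimizes $\Psi^{n}_{m}$ itself. Starting from the definition of cumulative regret and using that the infimum of a sum dominates the sum of the infima,
\[
\inf_{x\in\mathcal{F}}\sum_{n=1}^{N}\Psi^{n}_{m}(x)\geq\sum_{n=1}^{N}\inf_{x\in\mathcal{F}}\Psi^{n}_{m}(x),
\]
I would obtain the termwise upper bound
\[
\mathfrak{R}_{m}^{\mathcal{A}}(N)\leq\sum_{n=1}^{N}\left[\Psi^{n}_{m}(x^{n}_{m})-\inf_{x\in\mathcal{F}}\Psi^{n}_{m}(x)\right].
\]
The gradient-estimation count and the controlling high-probability event are then inherited verbatim from Theorem \ref{Theorem 4.8 }, so no fresh complexity analysis is needed.

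Next I would condition on the probability-$(1-\epsilon)$ event supplied by Theorem \ref{Theorem 4.8 }, on which every iterate $x^{n}_{m}$ is a $\delta$-second-order stationary point of $F^{n}_{m,1}=\Psi^{n}_{m}$ and is $\zeta$-close to a local minimizer $x^{n,\ast}_{m}$; in particular $\|\nabla_{x}\Psi^{n}_{m}(x^{n}_{m})\|\leq\delta$. Because $x^{n}_{m}$ lies within radius $\zeta$ of $x^{n,\ast}_{m}$, the hypotheses of Assumption \ref{Assumption 4.5} are met, and its gradient-domination inequality converts the small gradient into a small value gap: substituting $\|\nabla_{x}\Psi^{n}_{m}(x^{n}_{m})\|\leq\delta$ gives
\[
\Psi^{n}_{m}(x^{n}_{m})-\Psi^{n}_{m}(x^{n,\ast}_{m})\leq O\!\left(\frac{\mu\,\delta}{\kappa}\right).
\]
To promote this per-period \emph{local} bound to a bound against the \emph{global} benchmark appearing in the regret, I would use the strict-saddle structure of Proposition \ref{Proposition 4.7} together with the restricted strong convexity underlying Assumption \ref{Assumption 4.5} (Definition \ref{Definition 4.3}, Example \ref{Example 4.4}): since $\Psi^{n}_{m}$ is restricted-strongly-convex in the $\zeta$-basin of each local minimizer and all saddle points are strict, the minimizer $x^{n,\ast}_{m}$ that the escaping-saddle iterates reach also attains $\inf_{x\in\mathcal{F}}\Psi^{n}_{m}(x)$, whence $\Psi^{n}_{m}(x^{n,\ast}_{m})=\inf_{x\in\mathcal{F}}\Psi^{n}_{m}(x)$.

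Summing the per-period estimate over $n=1,\dots,N$ and inserting it into the termwise decomposition yields
\[
\mathfrak{R}_{m}^{\mathcal{A}}(N)\leq\sum_{n=1}^{N}\left[\Psi^{n}_{m}(x^{n}_{m})-\inf_{x\in\mathcal{F}}\Psi^{n}_{m}(x)\right]\leq O\!\left(\frac{\mu\,\delta}{\kappa}\cdot N\right),
\]
which holds on the good event, i.e.\ with probability at least $1-\epsilon$, after the number of gradient estimations stated in Theorem \ref{Theorem 4.8 }. This is precisely the bound (\ref{Cumulative regret}).

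The step I expect to be the main obstacle is the passage from local to global optimality, that is, justifying $\Psi^{n}_{m}(x^{n,\ast}_{m})=\inf_{x\in\mathcal{F}}\Psi^{n}_{m}(x)$. Theorem \ref{Theorem 4.8 } and Assumption \ref{Assumption 4.5} certify only proximity to, and domination by, a \emph{local} minimizer within its $\zeta$-basin; if $\Psi^{n}_{m}$ admitted several basins with distinct values, a small gradient at $x^{n}_{m}$ would not preclude a large gap to the global infimum. Closing this requires the restricted-strong-convexity/local-error-bound geometry to force a single effective basin, reinforced in practice by the multi-start setting; I would make this dependence explicit and, where global uniqueness cannot be guaranteed a priori, state the bound relative to the best local minimizer reached. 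The remaining high-probability bookkeeping over all $N$ periods is already absorbed into the single probability-$(1-\epsilon)$ event of Theorem \ref{Theorem 4.8 }.
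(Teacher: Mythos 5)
Your proposal is correct and follows essentially the same route as the paper's proof: decompose the cumulative regret termwise, invoke the high-probability $\delta$-stationarity guarantee of Theorem \ref{Theorem 4.8 } so that $\|\nabla_{x}\Psi^{n}_{m}(x^{n}_{m})\|\leq\delta$ at every period, and convert the small gradient into a small value gap via the error bound of Assumption \ref{Assumption 4.5}. The local-to-global obstacle you flag (needing $\Psi^{n}_{m}(x^{n,\ast}_{m})=\inf_{x\in\mathcal{F}}\Psi^{n}_{m}(x)$) is real but is also present, silently, in the paper's own proof, which replaces $\inf_{x\in\mathcal{F}}\sum_{n}\Psi^{n}_{m}(x)$ by $\sum_{n}\Psi^{n}_{m}(x^{n,\ast}_{m})$ without justification; your explicit acknowledgment of that step is if anything more careful than the original.
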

}}
\begin{rem}
For practical implementation, the multi-start version algorithms can be developed for both ATGD and APTGD. The main idea is to create several solution paths when
implementing the algorithms given different starting points. In each iteration, the square of the difference between the ADE value and real concentration data for each sensor is recorded. Following the idea from \cite{jiang_integration_2015}, the minimizer of the average of the square of the difference across all sensors, among all
solution paths, is chosen, and we treat it as the estimated optimal solution at that iteration. Practically, the multi-start implementation can increase the chance of finding more accurate source information estimation. The explicit structures of multi-start algorithms are shown as Algorithm 7 (Multi-start time-smoothed online gradient descent, or MTGD) and Algorithm 8 (Multi-start perturbed time-smoothed online gradient descent, or MPTGD) in Appendix A. 
\end{rem}

\section{Experiments and Application}
In this section, the experiments validating the theoretical result of our online learning algorithms are conducted. Our algorithms are also applied to real-life river pollution source identification problems. 
For the experiments, the Rhodamine WT dye concentration data from a travel time study on
the Truckee River between Glenshire Drive near Truckee, California and
Mogul, Nevada \citep{crompton_traveltime_2008} are used. Figure 2 presents locations of a pollutant site,
marked by a black star, and four sampling sites (sensors), marked by red stars.
{\color{black}{The data include the real data: the true pollutant source information
$(s,\,l,\,t) = [1300,-22106,-215]$ and 82 pairs of concentration data detected in the four downstream sampling sites, and 1000 pairs of artificial data generated by the ADE model under the the true pollutant source information. }} The parameters in the ADE model are obtained from \cite{crompton_traveltime_2008} and \cite{jiang_applicability_2017}; namely, $v=80m/min, D = 2430m^2/min, A = 60m^2, $ and $ k = 10^{-8}min^{-1}$. 

\begin{figure}[H]
\protect\begin{centering}
\protect\includegraphics[scale=0.5]{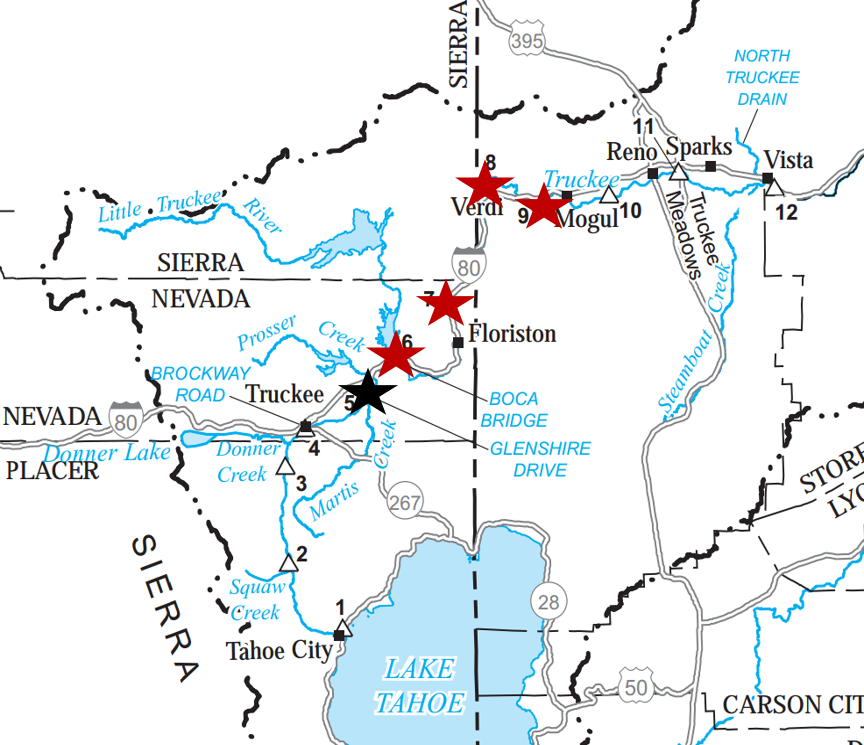}\protect
\par\end{centering}
\protect\caption{Locations of the pollutant site and sampling sites on the Truckee River}
\end{figure}

\subsection{Regret and Optimality Analysis}
In this experiment, {\color{black}{1000 pairs of artificially generated concentration data}} are used to compute the local regret bound (\ref{Local regret}) and the cumulative regret bound (\ref{Cumulative}) for various online algorithms, including: time-smoothed online gradient descent \cite[Algorithm 1]{hazan_efficient_2017} (called ``TGD'' in short), ATGD, APTGD, MTGD, and MPTGD. TGD is explicitly presented as Algorithm 6. {\color{black}{The step size is chosen to be $\eta = 10$ throughout all the experiments in the following sections.}}

\begin{algorithm}
\caption{TGD}

\textbf{Input}: Sensor $m\in M$; window size $w\geq1$, learning rate $0<\eta\leq \beta/2$; tolerance $\delta>0$, constant $K>0$, and a convex set $\mathcal{F}$; 

\textbf{Set} $x^{1}_{m}\in\mathcal{F}$ arbitrarily

\textbf{for} $n=1,\,...,\,N$ \textbf{do}

$\quad$ Predict $x^{n}_{m}$. Observe the cost function $\Psi^{n}_{m}:\,\mathcal{F}\rightarrow\mathbb{R}$.

$\quad$ Initialize $x^{n+1}_{m}:=x^{n}_{m}$. 

$\quad\quad$ \textbf{while} $\|\nabla_{\mathcal{F},\eta} F^{n}_{m,\,w}(x^{n+1}_{m})\|_{2}>\delta/w$
\textbf{do}

$\quad\quad\quad$ Update $x^{n+1}_{m}:=x^{n+1}_{m}-\eta\nabla_{\mathcal{F},\eta} F^{n}_{m,\,w}(x^{n+1}_{m})$.

$\quad\quad$\textbf{end while}

\textbf{end for}
\end{algorithm}}}

Choose the {\color{black}{window size $w=1$}}. The group size (number of multi-starts) is chosen as $I = 30$ in the multi-start algorithms. The initial step sizes are chosen by the method summarized in Section 3.1. The experimental results are recorded from Figure 3. Figure 3(a) validates that all of those algorithms have the local regret bound $O(N)$ as the inequality (\ref{Regret Bound}), and they also show experimentally that the cumulative regret of all those algorithm has the complexity $O(N)$, which potentially conforms to the bound (\ref{Cumulative regret}). {\color{black}{In addition, when all the algorithms are implemented, an extreme small tolerance $\delta=5\times 10^{-6}$ is set. Thus, in each iteration, the algorithms always stop close to a stationary point (can be either the saddle point or local minimum) for the current follow-the-leader iteration. So the local regrets of all the algorithms will be extremely low (within $2\times 10^{-5}$). {\color{black}{Figure 3(a) shows that the vectorized and adaptive step-size algorithms slightly improve the accuracy of the stationary point search in each iteration compared to the fixed and univariate step-size algorithms. Particularly for MTGD and MPTGD, the local regret increases slightly as the number of iterations increases.}}

From the plots of cumulative regret, it can be observed that adding the ``escaping from saddle points'' module can reduce the growth rate of the cumulative regrets, which means that as the size of data grows, the algorithms output a solution closer to the global minimizer, that is, the true source information. Figure 3(b) shows that multi-start algorithms (MTGD and MPTGD) can further reduce the cumulative regrets. {\color{black}{MPTGD has a significantly better cumulative regret than all other algorithms. Intuitively, the potential reason is the aggregated effect of the ``perturbed'' module and the multi-start module. The ``perturbed'' module increases the probability of convergence to a local minimum. The multi-start module ensures that the probability that the global minimum is among the multiple local minimums will be high. }}

\begin{figure}[!h]
\subfloat[]{
\begin{minipage}[c][1\width]{
0.5\textwidth}
\centering
\includegraphics[scale=.45]{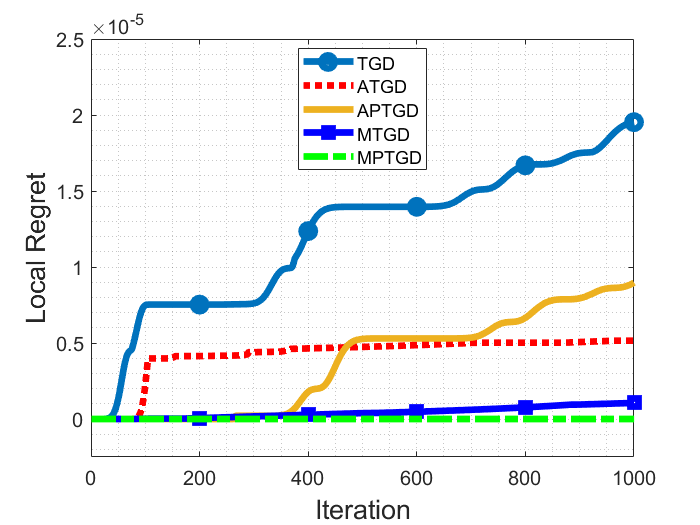}
\end{minipage}}
\subfloat[]{
\begin{minipage}[c][1\width]{
0.5\textwidth}
\centering
\includegraphics[scale=.45]{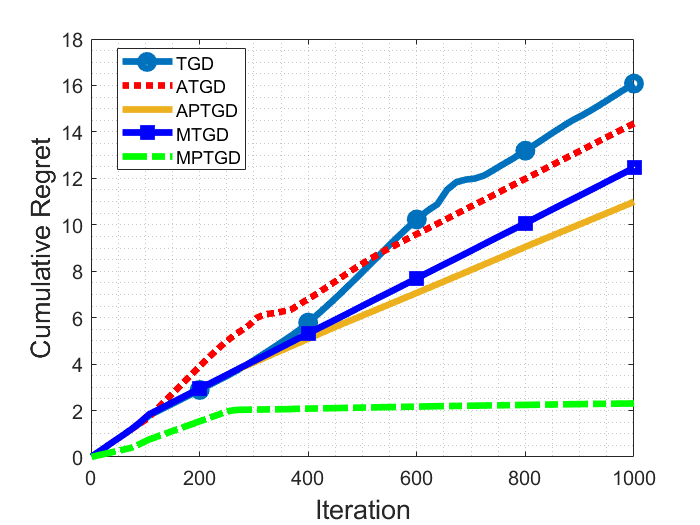}
\end{minipage}}
\caption{(a) Local regrets and (b) Cumulative regrets}
\end{figure}

{\color{black}{The next experiment is conducted to show the faster convergence of the proposed algorithms compared to TGD. In each iteration $n$, all of these algorithms require iteratively updating the $x_{m}^{n}$ until the norm of gradient $\|\nabla_{\mathcal{F},\eta}F^{n}_{m,\,w}(x^{n+1}_{m})\|_{2}$ (where $\eta$ is the updated step size at iteration $n$ and varies from different algorithms) is bounded by a certain threshold (for instance, the ``while'' loop in ATGD describes this process, and the threshold is $\delta$). Alternatively, this experiment implements the algorithms in a reverse setting. It compares the local regrets of all algorithms at the final iteration $N=1000$, given that in each iteration, the number of gradient evaluations is fixed. For MTGD and MPTGD, the number of gradient evaluations for each start $i\in[I]$ equals those of TGD, ATGD, and APTGD. Figure 4 validates that our proposed algorithms have a better local regret than TGD, given the same fixed number of gradient evaluations. For those algorithms to reach a similar local regret, TGD apparently needs more gradient evaluations. As gradient evaluation is normally the most time-consuming step in gradient-based algorithms, our proposed algorithms thus have better computational efficiency. It can also be observed that the perturbed algoritm (MPTGD and APTGD) have a higher local regret than their unperturbed versions (MTGD and ATGD). The possible reason is that the perturbed algorithms are escaping from saddle points when the gradient evaluations stops, so the norm of gradient is relatively high at that moment. 

\begin{figure}
\begin{center}
\includegraphics[scale=0.5]{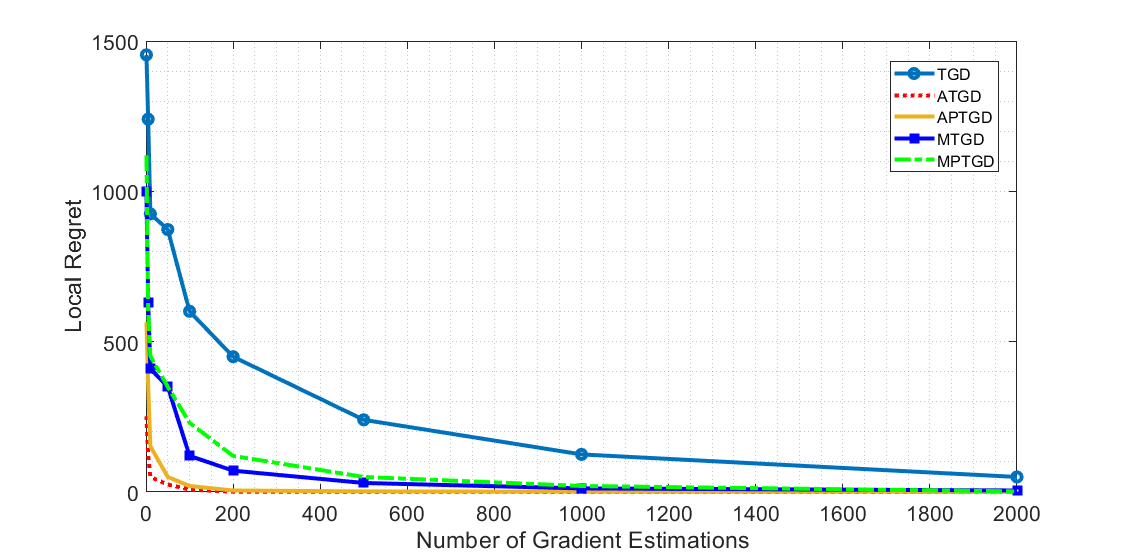}
\end{center}
\caption{Local regret comparison with fixed gradient estimation}
\end{figure} 

}}

{\color{black}{\subsection{Sequential Sensor Installation Analysis}}}

{\color{black}{In this section, the statistical analysis from Section 3.2 is implemented for the ATGD algorithm. Here, the true pollution source information is $(s,\,l,\,t) = [1300,-22106,-215]$, and the 1000 pairs of artificially generated data are used from sensors in 50 locations. Each round of Algorithm 4 contains 10 iterations, so there will be a total of 100 rounds. Assume that the measurement errors for concentration ($s$), location ($l$), and time ($t$) follow Normal distributions $\mathcal{N}(0,\,50)$, $\mathcal{N}(0,\,500)$ and $\mathcal{N}(0,\,5)$, respectively. The distribution is unknown to the decision maker when implementing Algorithm 4 and ATGD. To compare the performance of Algorithm 4, the ``true'' required number of sensors is computed by the identification results at the final iteration from $50$ sensors. Here the random measurement errors can be simulated from their given distributions. In the experiments, set $d^{s} = d^{t} = 200$ and $d^{l} = 500$. Figure 5 shows the minimal number of sensors required in each round and how the number is affected by the initial installation $A$. When $A=3$, the required number already becomes 20 in all later rounds. When $A>=19$, the required number will just be $A$ in later rounds. The optimal policy is to chose a moderate start number (e.g., $A = 7, 11, 13$), and the required number of sensors will finally become $19$. In this experiment, the ``true'' required number of sensors can be computed and equals 13. So the number of sensors determined by Algorithm 4 is less than two times the ``true'' number. In addition, the result is insensitive to $A$. By choosing a small number of sensors initially (e.g., $A=3,7$), the required number of sensors determined by Algorithm 4 will already be close to the optimum.
\begin{figure}
\begin{center}
\includegraphics[scale = .6]{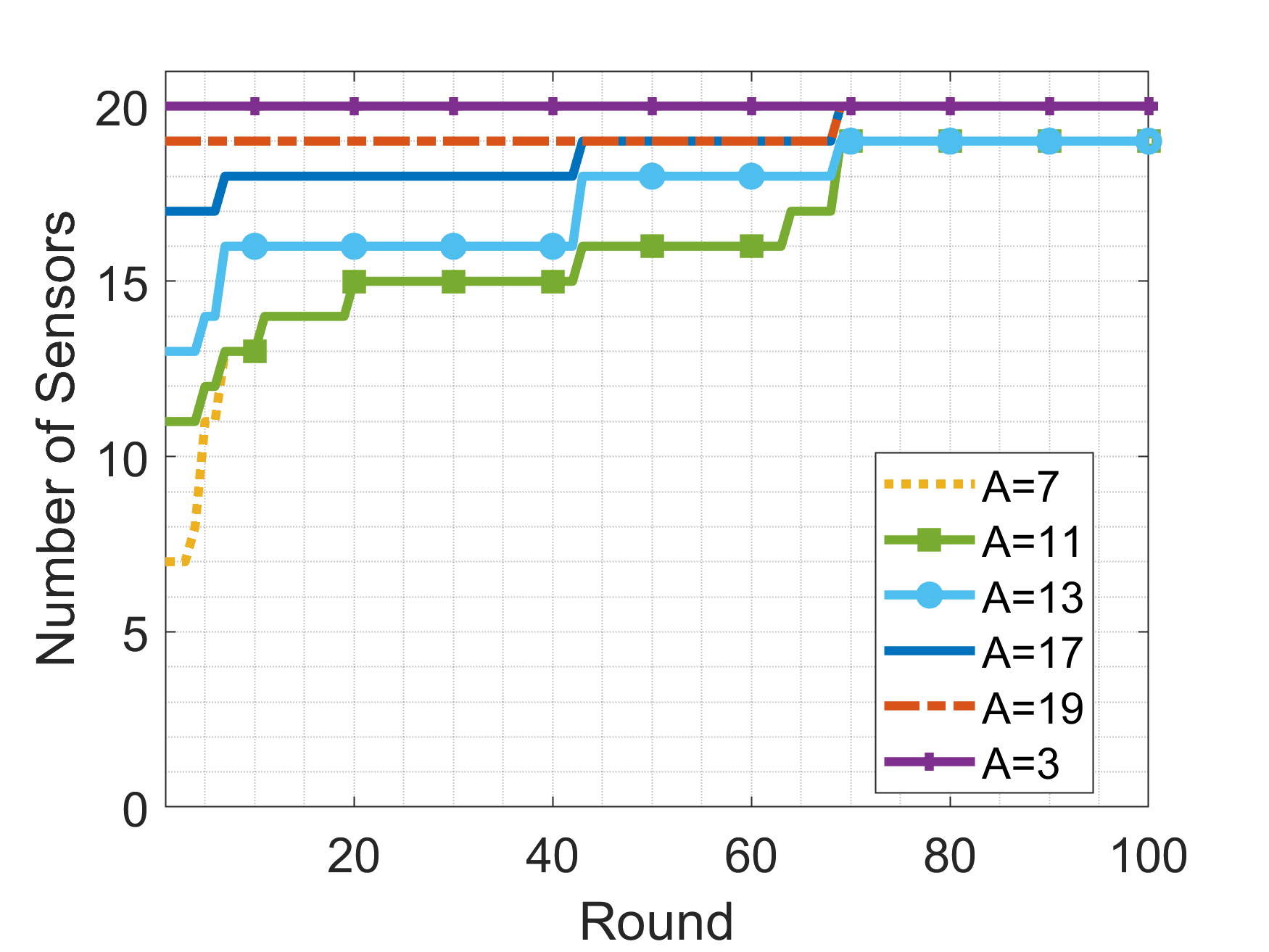}
\caption{Sequential Sensor installation}
\end{center}
\end{figure}
}}

\subsection{Identification Accuracy and Efficiency}
In this section, the computational results and the settings for various online algorithms on the real data are recorded (Table 1). The contents recorded include the estimation result for the pollution source, the estimation error to the true source information, and the average computation time. \textcolor{black}{The initial step size and the parameters in the Backtracking line search are recorded in Table 1. The number of multi-start paths is set to 30.} The result shows that \textcolor{black}{the relative error of estimation on each dimension of ATGD and MTGD is (3.46\%, 2.79\%, and 11.63\%). Compared with the estimation result of TGD (3.69\%, 4.63\%, and 14.42\%), ATGD and MTGD improve the estimation results on every dimension.} By incorporating the ``escaping from saddle points'' module, the error is reduced significantly on the released mass estimation, from 3.46\% to 1.31\%, and the error on the released time estimation is slightly reduced. However, the compensation is that the estimation error on the location estimation is increased to 11.35\%. The common shortcoming of TGD, ATGD, APTGD, and MTGD is the high estimation error on the released time estimation. In contrast, MPTGD overcomes this shortcoming by reducing the error significantly, to 1.40\%. MPTGD also has the longest computation time, as both the ``escaping from saddle points'' and multi-start modules are incorporated. 
Here ATGD, APTGD, MTGD, and MPTGD are all conducted under the initial step sizes $\eta=[110000,11000000,75000]$, which are computed based on the method introduced in Section 3. {\color{black}{In terms of computational time, ATGD is more than 10 times faster than TGD, and APTGD is $37\%$ faster than TGD.}}

Although from Table 1, it seems that there does not exist an algorithm in ATGD, APTGD, MTGD, and MPTGD that ``dominates'' all others in terms of estimation accuracy in each dimension, the decision-maker can choose to use each algorithm based on their own targets. However, for a real-life problem, the decision maker is recommended to use ATGD and MTGD if the decision maker has a high estimation accuracy requirement on the location estimation, APTGD if the decision maker has a high estimation accuracy requirement on the released mass estimation, and MPTGD if the decision maker has a high estimation accuracy requirement on the released time estimation. Thus, options and guidelines for the decision maker to choose the proper variant of algorithms are provided when they have emphasis on identification accuracy in different dimensions. 

\begin{table}[H]
\footnotesize
\caption{Computational results and settings}
\label{Computational Results}
\begin{center}
\renewcommand\arraystretch{0.8}
\begin{tabular}{p{2cm}p{3cm}p{3.5cm}p{3cm}p{2cm}}
\toprule
& $(s,l,t)$ & Relative Error &  Parameter Setting & Time (Second)\\
\midrule
TGD	& (1348, -23130, -184)	& (3.69\%, 4.63\%, 14.42\%)	&  $\beta=8\times 10^{-6}$ &1.1872\\
ATGD	& (1345, -22722, -190)	& (3.46\%, 2.79\%, 11.63\%)	& $\beta=8\times 10^{-6}$ & 0.1023
\\
APTGD	& (1317, -19597, -191) & (1.31\%, 11.35\%, 11.16\%) & $\beta=2\times 10^{-6}$ &
0.7471
\\
MTGD & (1345, -22722, -190) & (3.46\%, 2.79\%, 11.63\%) & $\beta=8\times 10^{-6}$, $I = 30$ & 6.0764\\
MPTGD & (1392, -20994, -212) & (7.00\%, 4.78\%, 1.40\%) & $\beta=2\times 10^{-6}$, $I = 30$ & 21.1709 \\
\bottomrule
\end{tabular}
\end{center}
\end{table}

In addition, the empirical analysis for the number of multi-starts is conducted. Table 2 shows that the relative error on the estimation of the released location ($l$) will decrease significantly as $I$ increases, and the computation time increases linearly as $I$ increases. Table 3 shows that the relative error on the estimation of the released mass ($s$) and location ($l$) decreases as $I$ increases, and the computation time also increases linearly as $I$ increases.

\begin{table}[H]
\footnotesize
\caption{Multi-start for MTGD}
\label{Computational Results}
\begin{center}
\renewcommand\arraystretch{0.8}
\begin{tabular}{p{2cm}p{3.5cm}p{3.5cm}p{2cm}}
\toprule
& $(s,l,t)$ & Relative Error & Time (Second)\\
\midrule
$I = 5$ & (2105, -29761, -166)  & (61.92\%, 34.63\%, 22.79\%) & 2.46 \\
$I = 10$  & (1315, -29493, -364) & (1.15\%, 33.41\%, 69.30\%) & 3.31 \\
$I = 15$  & (1000, -29463, -166) & (23.08\%, 33.28\%, 22.79\%) & 4.17 \\
$I = 20$  & (1221, -28061, -146)  & (6.08\%, 26.94\%, 32.09\%) & 5.36 \\
$I = 25$ & (1234, -17366, -171) & (5.08\%, 21.44\%, 20.47\%) & 6.05\\
$I = 30$ & (1200, -19686, -200)  & (7.69\%, 10.95\%, 6.98\%) & 7.09\\
\bottomrule
\end{tabular}
\end{center}
\end{table}

\begin{table}[H]
\footnotesize
\caption{Multi-start for MPTGD}
\label{Computational Results}
\begin{center}
\renewcommand\arraystretch{0.8}
\begin{tabular}{p{2cm}p{3.5cm}p{3.5cm}p{2cm}}
\toprule
& $(s,l,t)$ & Relative Error & Time (Second)\\
\midrule
$I = 5$  & (1394, -20913, -214) & (7.23\%, 5.40\%, 0.47\%) & 5.19 \\
$I = 10$  & (1394, -20916, -214) & (7.23\%, 5.38\%, 0.47\%) & 8.79 \\
$I = 15$  & (1394, -20925, -214) & (7.23\%, 5.34\%, 0.47\%) & 12.11\\
$I = 20$ & (1394, -20927, -213)  & (7.23\%, 5.33\%, 0.93\%) & 16.66\\
$I = 25$ & (1393, -20930, -213) & (7.15\%, 5.32\%, 0.93\%) & 19.49\\
$I = 30$ & (1392, -21026, -212) & (7.08\%, 4.87\%, 1.40\%)  & 22.35\\
\bottomrule
\end{tabular}
\end{center}
\end{table}

Figure 6 shows that the cumulative regrets of MTGD and MPTGD decrease as the number of multi-starts $I$ increases. For MPTGD, when $I = 5,10,15,20,25$, there is a noticeable increase in the cumulative regret, which shows that it is necessary to choose a large $I$ to ensure that there is a high probability of convergence to the global optimum. 

\begin{figure}[!h]
\subfloat[]{
\begin{minipage}[c][1\width]{
0.5\textwidth}
\centering
\includegraphics[scale=.45]{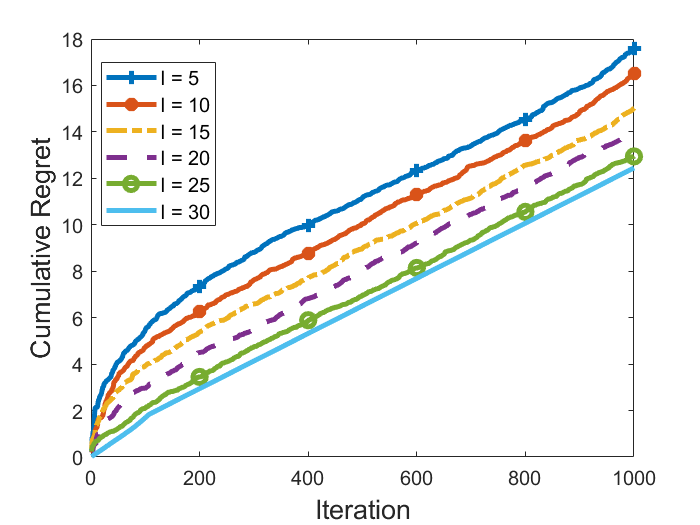}
\end{minipage}}
\subfloat[]{
\begin{minipage}[c][1\width]{
0.5\textwidth}
\centering
\includegraphics[scale=.45]{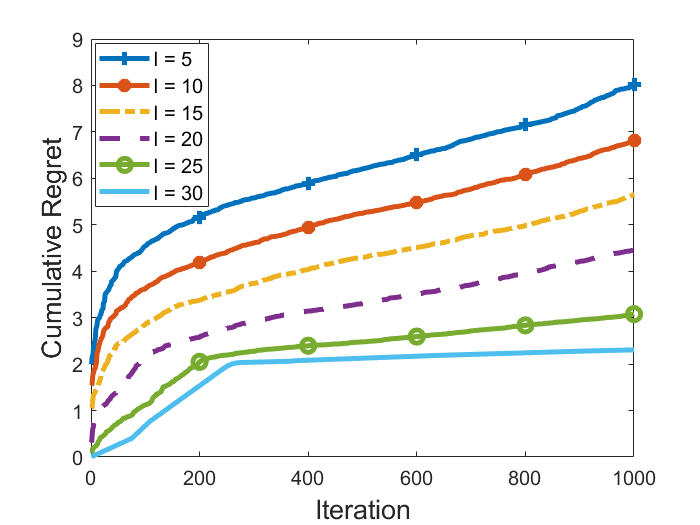}
\end{minipage}}
\caption{Empirical analysis of the cumulative regret for multi-start algorithms: (a) MTGD and (b) MPTGD}
\end{figure}
}}
\section{Conclusion}
In this paper, novel online non-convex learning algorithms for real-time river pollution source identification problem were developed and analyzed. {\color{black}{The identification problem studied in this paper had an instantaneously released pollution source with an initial value on source information identification.}} Our basic algorithm has vectorized and adaptive step sizes such that it ensured high estimation accuracy in dimensions with different magnitudes (released mass, location and time). In addition, the ``escaping from saddle points'' module was implemented to form the perturbed algorithm to further improve the estimation accuracy. Our basic algorithm has the local regret $O(N)$, and the perturbed algorithm has the local regret $O(N)$ with a high probability. The high probability cumulative regret bound $O(N)$ under particular condition on loss functions is also shown.

The experiments with artificially generated data validated all our theoretical regret bounds and showed that our algorithms were superior to existing online algorithms in all dimensions (e.g., released mass ($s$), location ($l$), and time ($t$)). Our algorithms were also implemented in Rhodamine WT dye concentration data from a travel time study on the Truckee River between Glenshire Drive near Truckee, California and Mogul, Nevada \cite{crompton_traveltime_2008}. We showed that the multi-start module and ``escaping from saddle point'' module could achieve a significantly low estimation error in certain dimensions (from 3.69\% to 1.21\% and from 14.42\% to 1.40\%). Thus, variants of online algorithms were provided for the decision maker to choose and implement, based on their own estimation accuracy requirement. 

For future research, observing that estimating the
gradients and project gradients would be computationally demanding
especially if the ADE functions are complicated. One possible
solution is to develop a ``bandit'' algorithm. Instead of observing the
full information of losses and computing their gradients on the entire
domain, only the function value queried by the decision
made would be observed, and the limited information would be used to construct an effective estimation
of the gradient.
\begin{acknowledgement*}
This research was supported by the National Research
Foundation (NRF), Prime Minister\textquoteright's Office, Singapore,
under its Campus for Research Excellence and Technological Enterprise
(CREATE) program. Wenjie Huang's research was also supported by the National Natural Science Foundation of China (Grants 72150002), and “HKU-100 Scholars” Research Start-up Funds. The authors are grateful to Professor William B. Haskell and Professor Yao Chen for their valuable comments and suggestions during the preparation of this paper. 
\end{acknowledgement*}
\bibliographystyle{apa}
\bibliography{SourceIdentification}

\begin{appendices}
\section*{Appendix}
The supplements of the paper are provided in this section.
\section{Multi-start Algorithms}

In this section, the multi-start version algorithms (Algorithm 7 and Algorithm 8) are developed
based on Algorithm 1 and 5, respectively.

\begin{algorithm}[H]
\caption{MTGD}

\textbf{Input}: sensor $m\in M$, window size $w\geq1$, tolerance
$\delta>0$, constant $W,\,K>0$, and a convex set $\mathcal{F}$ and group size $I$;

\textbf{Set} $x^{1}_{m},\,x_{m}^{1, (i)}\in\mathcal{F}$ for $i=1,\,...,\,I$
arbitrarily 

\textbf{for} $n=1,\,...,\,N$ \textbf{do}

$\quad$ Observe the cost function $\Psi^{n}_{m}:\,\mathcal{F}\rightarrow\mathbb{R}$.

$\quad$ Compute the initial step size: $\eta^{0} = \mathfrak{S}(F^{n}_{m,w},\,K)$. 

$\quad$\textbf{ for} $i=1,\,....,\,I$ \textbf{do}

$\quad$$\quad$Initialize $x_{m}^{n+1, (i)}:=x_{m}^{n, (i)}$.

$\quad$$\quad$Determine $\eta^{n,(i)}$ using Normalized Backtracking-Armijo
line search (Algorithm 2) for $x^{n+1, (i)}_{m}$.

$\quad$$\quad$$\quad$\textbf{while} $\|\nabla_{\mathcal{F},\eta^{n,(i)}}F^{n}_{m,\,w}(x_{m}^{n+1, (i)})\|>\delta$
\textbf{do}

$\quad\quad$$\quad$$\quad$Update $x_{m}^{n+1,(i)}=x_{m}^{n+1,(i)}-\eta^{n,(i)}\otimes\nabla_{\mathcal{F},\eta^{n,(i)}}F^{n}_{m,\,w}(x_{m}^{n+1, (i)})$.

$\quad$$\quad$$\quad$\textbf{end while}

\textbf{$\quad$Return} $x^{n+1}_{m}\in\arg\min_{x\in\left\{x_{m}^{n+1,(i)}\right\} {}_{i=1,...,I}}\left[\frac{1}{|N_m|}\sum_{n\in N_m}(C(x|l_{m},\,t_{m}^{n})-c_{m}^{n}) \right]^2.$

\textbf{end for}
\end{algorithm}

\begin{algorithm}[H]
\caption{MPTGD}

\textbf{Input}: sensor $m\in M$, window size $w\geq1$, constant $c\leq1$, constant $W,\,K>0$, and a convex set $\mathcal{F}$ and group size
$I$;

\textbf{Input:} $\mathcal{\chi}=3\max\left\{ \log\left(\frac{d\kappa\triangle f}{c\delta^{2}\epsilon}\right),\,4\right\} $,
$r=\frac{\sqrt{c}}{\chi^{2}}\cdot\frac{\delta}{\kappa}$, $g_{\textrm{thres}}=\frac{\sqrt{c}}{\mathcal{\chi}^{2}}\cdot\delta$,
$f_{\textrm{thres}}=\frac{c}{\mathcal{\chi}^{3}}\sqrt{\frac{\delta^{3}}{\iota}}$,
$t_{\textrm{thres}}=\lceil\frac{\chi}{c^{2}}\cdot\frac{\kappa}{\sqrt{\iota\,\delta}}\rceil$,
$t_{\textrm{noise}}=-t_{\textrm{thres}}-1$

\textbf{Set} $x^{1}_{m},\,x_{m}^{1,(i)}\in\mathcal{F}$ for $i=1,\,...,\,I$
arbitrarily

\textbf{for} $n=1,\,...,\,N$ \textbf{do}

$\quad$Observe the cost function $\Psi^{n}_{m}:\,\mathcal{F}\rightarrow\mathbb{R}$.

$\quad$Compute the initial step size: $\eta^{0} = \mathfrak{S}(F^{n}_{m,w},\,K)$. 

$\quad$Initialize $t_{\textrm{noise}}$; 

$\quad$\textbf{for} $i=1,\,....,\,I$ \textbf{do}

$\quad$$\quad$Initialize $x_{m}^{n+1,(i)}:=x_{m}^{n,(i)}$.

$\quad\quad$\textbf{for} $t=0,\,1,\,...$ \textbf{do}

$\quad\quad\quad$Determine $\eta^{t}$ using Normalized Backtracking-Armijo
line search (Algorithm 2).

$\quad\quad\quad$Set $x_{m}^{n+1,(i),\:t}=x_{m}^{n,(i)}$.

$\quad\quad\quad$\textbf{if} $\|\nabla_{\mathcal{F},\eta^{t}}F^{n}_{m,\,w}(x_{m}^{n+1,(i),\,t})\|\leq\delta$
and $t-t_{\textrm{noise}}>t_{\textrm{thres}}$ \textbf{then}

$\quad\quad\quad\quad$$t_{\textrm{noise}}=t$, $\tilde{x}_{m}^{n+1,(i),\,t}=x_{m}^{n+1,(i),\,t}$,
$x_{m}^{n+1,(i),\,t}=\tilde{x}_{m}^{n+1,(i),\,t}+\omega$, where $\omega$
uniformly sampled from $\mathbb{B}_{0}(r)$. 

$\quad\quad\quad$\textbf{if} $t-t_{\textrm{noise}}=t_{\textrm{thres}}$
and $F^{n}_{m,\,w}(x_{m}^{n+1, (i),\,t})-F^{n}_{m,\,w}(x_{m}^{n+1, (i),\,t_{\textrm{noise}}})>-f_{n}^{\textrm{thres}}$\textbf{
then}

$\quad\quad\quad\quad$\textbf{Return} $x_{m}^{n+1,(i)}=x_{m}^{n+1,(i),\,t_{\textrm{noise}}}$

$\quad\quad\quad$Update $x_{m}^{n+1,(i),\,t+1}=x_{m}^{n+1,(i),\,t}-\eta_{t}\otimes\nabla_{\mathcal{F},\eta_{t}}F^{n}_{m,\,w}(x_{m}^{n+1,(i),\,t})$.

$\quad\quad$ \textbf{end for}

\textbf{$\quad$end for}

\textbf{$\quad$Return} $x^{n+1}_{m}\in\arg\min_{x\in\left\{x_{m}^{n+1,(i)}\right\} {}_{i=1,...,I}}\left[\frac{1}{|N_m|}\sum_{n\in N_m}(C(x|l_{m},\,t_{m}^{n})-c_{m}^{n}) \right]^2$

\textbf{end for}
\end{algorithm}

\section{Supplements for Section 2}
{\color{black}{
\noun{Proof of Proposition \ref{prop_C_lipschitz_continuous}:} This proposition is proved by first giving the following proposition.
\begin{prop}
\label{Proposition A1} The first and second derivatives of $\tilde{C}_{n,m}$
on the interior of $\mathcal{F}$ are bounded.
\end{prop}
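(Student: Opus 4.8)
The plan is to reduce everything to the observation that, on $\mathcal{F}$, the function $\tilde{C}_{n,m}$ has no singularities and is in fact $C^\infty$, after which boundedness of its derivatives follows from continuity on a compact set. The one feature of Eq.~(\ref{IM}) that could cause trouble is the factor $t_m^n - t$ appearing both in the denominator $\sqrt{4\pi D(t_m^n-t)}$ and inside the exponent; the key is to rule out $t_m^n - t = 0$.

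First I would set $\Delta := t_m^n - t$ and use Assumption~\ref{Assumption_1} to show that $\Delta$ is bounded below by a positive constant. Since $\mathcal{T}$ is compact (Assumption~\ref{Assumption_1}(i)) and $\min_{n\in N,\,m\in M}\{t_m^n\} > t$ for every $t\in\mathcal{T}$ (Assumption~\ref{Assumption_1}(ii)), the continuous map $t \mapsto t_m^n - t$ attains a strictly positive minimum, so there is a constant $\Delta_{\min} > 0$ with $\Delta \geq \Delta_{\min}$ for all $x=(s,l,t)\in\mathcal{F}$ and all relevant $n,m$. This is the crucial step: it keeps the prefactor $\Delta^{-1/2}$ and the exponent $-(l_m-l-v\Delta)^2/(4D\Delta)$ finite and smooth, so that $\tilde{C}_{n,m}$ is a finite product and composition of the $C^\infty$ building blocks $s$, $\Delta^{-1/2}$, $\exp(\cdot)$, and polynomials in $(s,l,t)$ on an open neighborhood of $\mathcal{F}$.

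Next I would verify that each first- and second-order partial derivative is continuous on $\mathcal{F}$. The $s$-derivatives are immediate because $\tilde{C}_{n,m}$ is affine in $s$; the $l$-derivatives only introduce polynomial factors in $(l_m-l-v\Delta)$ multiplying the same exponential. The most laborious case is differentiation in $t$, since $t$ enters through $\Delta$ in the prefactor, the exponent, and the decay term $e^{-k\Delta}$ simultaneously; repeated differentiation produces finitely many terms carrying negative powers of $\Delta$ up to a fixed order together with polynomials in $\Delta$ and $(l_m-l-v\Delta)$. In every case the result is a finite sum of products of functions that are continuous on $\mathcal{F}$, because $\Delta \geq \Delta_{\min} > 0$ controls every $\Delta^{-k}$ factor.

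Finally, since each entry of $\nabla_x\tilde{C}_{n,m}$ and of $\nabla_x^2\tilde{C}_{n,m}$ is continuous on the compact set $\mathcal{F}$, the extreme value theorem gives a finite bound on each in absolute value; taking the maximum over the finitely many entries yields uniform bounds on the interior of $\mathcal{F}$, as claimed. I expect the only real obstacle to be the bookkeeping of the $t$-derivatives and tracking the highest negative power of $\Delta$ that appears; once $\Delta_{\min} > 0$ is established, no term can blow up and the remainder is routine.
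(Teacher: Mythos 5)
Your proof is correct and follows essentially the same route as the paper's: both arguments hinge on Assumption~\ref{Assumption_1}(ii) guaranteeing $t_m^n - t$ stays away from zero on the compact set $\mathcal{F}$, so that all first- and second-order partial derivatives are continuous there and hence bounded. If anything, your version is slightly more careful than the paper's, since you explicitly extract a uniform lower bound $\Delta_{\min}>0$ and invoke the extreme value theorem, whereas the paper writes out the first-order partials explicitly and then appeals to compactness in the same way.
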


\begin{proof}
First, prove the boundedness of first derivative of $\tilde{C}_{n,m}(s,l,t)$
on $(s,\,l,\,t)$. There exist
\[
\frac{\partial\tilde{C}_{n,m}(s,l,t)}{\partial s}=\frac{1}{A\sqrt{4\pi D\left(t_{m}^{n}-t\right)}}\exp\left[-\frac{\left(l_{m}-l-v\left(t_{m}^{n}-t\right)\right)^{2}}{4D\left(t_{m}^{n}-t\right)}-k\left(t_{m}^{n}-t\right)\right],
\]
\[
\frac{\partial\tilde{C}_{n,m}(s,l,t)}{\partial l}=\frac{s}{A\sqrt{4\pi D\left(t_{m}^{n}-t\right)}}\cdot\frac{2\left(l_{m}-l-v\left(t_{m}^{n}-t\right)\right)}{4D\left(t_{m}^{n}-t\right)}\exp\left[-\frac{\left(l_{m}-l-v\left(t_{m}^{n}-t\right)\right)^{2}}{4D\left(t_{m}^{n}-t\right)}-k\left(t_{m}^{n}-t\right)\right],
\]
\begin{align*}
& \frac{\partial\tilde{C}_{n,m}(s,l,t)}{\partial t}=\frac{4\pi Ds}{2A[4\pi D\left(t_{m}^{n}-t\right)]^{3/2}}\exp\left[-\frac{\left(l_{m}-l-v\left(t_{m}^{n}-t\right)\right)^{2}}{4D\left(t_{m}^{n}-t\right)}-k\left(t_{m}^{n}-t\right)\right]\\
& \quad+\frac{s}{A\sqrt{4\pi D\left(t_{m}^{n}-t\right)}}\cdot\left[-\frac{(l_{m}-l)^{2}}{4D(t_{m}^{n}-t)^{2}}+\frac{v^{2}}{4D}+k\right]\exp\left[-\frac{\left(l_{m}-l-v\left(t_{m}^{n}-t\right)\right)^{2}}{4D\left(t_{m}^{n}-t\right)}-k\left(t_{m}^{n}-t\right)\right].
\end{align*}
Based on Assumption \ref{Assumption_1} (ii), for any $t\in\mathcal{T},$
set $t_{m}^{n}-t\neq0$. It is then clear that the first derivative
of $\tilde{C}_{n,m}(s,l,t)$ is continuous. Recall that the feasible
set $\mathcal{F}$ is a convex and compact set. By the boundedness of
compact sets in a metric space, it can be concluded that the first
derivative of $\tilde{C}_{n,m}(s,l,t)$ on $(s,\,l,\,t)$ is bounded. Following the same idea, the boundedness of the second derivatives
of $\tilde{C}_{n,m}$ on the interior of $\mathcal{F}$ can be proved. 
\end{proof}

To prove Proposition \ref{prop_C_lipschitz_continuous} (i), it is known from Proposition \ref{Proposition A1}, that the first derivative of $\tilde{C}_{n,m}(x)$
on $\mathcal{F}$ is bounded. From \cite{sohrab_topology_2014}, it can be shown that
$\tilde{C}_{n,m}(x)$ is Lipschitz continuous on $\mathcal{F}$ with
a certain modulus. As $\tilde{C}_{n,m}(x)$ is twice
differentiable on $\mathcal{F}$, then the boundedness
of the second derivatives from Proposition \ref{Proposition A1} leads to the Lipschitz continuity of $\nabla_{x}\tilde{C}_{n,m}$ on
$\mathcal{F}$.}}\\
\\
\noun{Proof of Proposition \ref{Proposition 2.5}:} Recall that $\tilde{C}_{n,m}(s,l,t)$
is Lipschitz continuous on the set $\mathcal{F}$ based on Proposition
\ref{prop_C_lipschitz_continuous}. Thus, given any $x,\,x^{\prime}\in\mathcal{F}$, it can be shown that $\left\Vert \tilde{C}_{n,m}(x)-\tilde{C}_{n,m}(x^{\prime})\right\Vert \leq\sigma\|x-x^{\prime}\|$.
As the square function on $\mathcal{F}$ is also Lipschitz continuous, it can be concluded that there
exists
\[
\left\Vert (\tilde{C}_{n,m}(x))^{2}-(\tilde{C}_{n,m}(x^{\prime}))^{2}\right\Vert \leq\kappa_{m}\|\tilde{C}_{n,m}(x)-\tilde{C}_{n,m}(x^{\prime})\|,\forall m\in M.
\]
Then, based on the triangle inequality, it can be shown that
\begin{align*}
\left\Vert \Psi_{m}^{n}(x)-\Psi_{m}^{n}(x^{\prime})\right\Vert  & =\left\Vert (\tilde{C}_{n,m}(x))^{2}-(\tilde{C}_{n,m}(x^{\prime}))^{2}\right\Vert \\
& \leq \kappa_{m}\|\tilde{C}_{n,m}(x)-\tilde{C}_{n,m}(x^{\prime})\|\\
& \leq \kappa_{m}\,\sigma\|x-x^{\prime}\|.
\end{align*}
Thus, there exists $\kappa=\kappa_{m}\,\sigma,\,\forall m\in M$ such
that $\left\Vert \Psi_{n}(x)-\Psi_{n}(x^{\prime})\right\Vert \leq\kappa\|x-x^{\prime}\|.$
For (ii) Recall that $\nabla_{x}f$ with $f(x) = x^{2}$ is Lipschitz continuous. For any
$x,\,x^{\prime}\in\mathcal{F}$,
\[
\|\nabla_{x}(\tilde{C}_{n,m}(x))^{2}-\nabla_{x}(\tilde{C}_{n,m}(x^{\prime}))^{2}\|\leq\tau_{m}\|x-x^{\prime}\|.
\]
Based on the triangle inequality,
\[
\|\nabla_{x}(\tilde{C}_{n,m}(x))^{2}\|\leq\|\nabla_{x}(\tilde{C}_{n,m}(x^{\prime}))^{2}\|+\tau_{m}\|x-x^{\prime}\|.
\]
Fix the value $x^{\prime}\in\mathcal{F}$. As the feasible set of
$\mathcal{F}$ is a convex and compact set, $\|\nabla_{x}(\tilde{C}_{n,m}(x))\|$
is bounded; that is, there exists $K_{1m}>0$ such that $\|\nabla_{x}(\tilde{C}_{n,m}(x))\|\leq K_{1m}$.
Besides, based on Proposition \ref{prop_C_lipschitz_continuous},
it can be shown that there exists $K_{2m}>0$ such that $\|\nabla_{x}\tilde{C}_{n,m}(x^{\prime})\|\leq K_{2m}$, $\left\Vert \nabla_{x}\tilde{C}_{n,m}(x)-\nabla_{x}\tilde{C}_{n,m}(x^{\prime})\right\Vert \leq\gamma\|x-x^{\prime}\|$ and 
\begin{align*}
& \left\Vert \nabla_{x}\Psi_{m}^{n}(x)-\nabla_{x}\Psi_{m}^{n}(x^{\prime})\right\Vert \\
= & \left\Vert \nabla_{x}(\tilde{C}_{n,m}(x))^{2}-\nabla_{x}(\tilde{C}_{n,m}(x^{\prime}))^{2}\right\Vert \\
= & \left\Vert \nabla_{x}(\tilde{C}_{n,m}(x))^{2}\nabla_{x}\tilde{C}_{n,m}(x)-\nabla_{x}(\tilde{C}_{n,m}(x^{\prime}))^{2}\nabla_{x}\tilde{C}_{n,m}(x^{\prime})\right\Vert \\
\leq & \left\Vert \nabla_{x}f(\tilde{C}_{n,m}(x))\nabla_{x}\tilde{C}_{n,m}(x)-\nabla_{x}f(\tilde{C}_{n,m}(x^{\prime}))\nabla_{x}\tilde{C}_{n,m}(x^{\prime})\right\Vert \\
= & \|\nabla_{x}(\tilde{C}_{n,m}(x))^{2}\nabla_{x}\tilde{C}_{n,m}(x)-\nabla_{x}(\tilde{C}_{n,m}(x))^{2}\nabla_{x}\tilde{C}_{n,m}(x^{\prime})\\
& +\nabla_{x}(\tilde{C}_{n,m}(x))^{2}\nabla_{x}\tilde{C}_{n,m}(x^{\prime})-\nabla_{x}(\tilde{C}_{n,m}(x^{\prime}))^{2}\nabla_{x}\tilde{C}_{n,m}(x^{\prime})\|\\
\leq & \left\Vert \nabla_{x}(\tilde{C}_{n,m}(x))^{2}\right\Vert \left\Vert \nabla_{x}\tilde{C}_{n,m}(x)-\nabla_{x}\tilde{C}_{n,m}(x^{\prime})\right\Vert \\
& +\left\Vert \nabla_{x}\tilde{C}_{n,m}(x^{\prime})\right\Vert \left\Vert \nabla_{x}(\tilde{C}_{n,m}(x))^{2}-\nabla_{x}(\tilde{C}_{n,m}(x^{\prime}))^{2}\right\Vert \\
\leq & (K_{1m}\,\gamma+K_{2m}\tau_{m})\|x-x^{\prime}\|.
\end{align*}
Let $\beta=\sum_{m\in M}(K_{1m}\,\gamma+K_{2m}\tau_{m})$.
Thus, it can be concluded that there exists $\beta$ such that
\[
\left\Vert \nabla_{x}\Psi_{m}^{n}(x)-\nabla_{x}\Psi_{m}^{n}(x^{\prime})\right\Vert \leq\beta\left\Vert x-x^{\prime}\right\Vert .
\]
For (iii), given the results in (ii) and implementing the same arguments
in (ii), the desired
results are derived.
\\
\\
{\color{black}{\noun{Proof of Proposition \ref{Proposition 2.7}:}} The proof follows from the proof of \cite[Proposition 2.3 ]{hazan_efficient_2017}. As $\nabla_{x}\Psi_{m}^{n}(x)$ is a continuous function on $\mathcal{F}$,
then the composition function $g(x):=x-\eta\otimes\nabla_{\mathcal{F},\eta}\Psi_{m}^{n}(x) = \Pi_{\mathcal{F}}\left[x-\eta\otimes\nabla_{x}\Psi_{m}^{n}(x) \right]$
is therefore continuous. Thus, $g$ satisfies the conditions for Brouwer's
fixed-point theorem (see \cite{brattka2016brouwer}), implying that there exists some $x^{\ast}\in\mathcal{F}$
for which $g(x^{\ast})=x^{\ast}$. At this point, the projected gradient
vanishes. 
}
\\
\\
\noun{Proof of Proposition \ref{Proposition 2.6}:} See \cite[Proposition 2.4]{hazan_efficient_2017}.
Let $u:=x+\eta\otimes\nabla\Psi(x)$, and $v:=u+\eta\otimes\nabla\Phi(x)$.
Define their respective projection $u^{\prime}=\Pi_{\mathcal{F}}[u]$,
and $v^{\prime}=\Pi_{\mathcal{F}}[v]$ so that $u^{\prime}=x-\eta\otimes\nabla_{\mathcal{F},\eta}\Psi(x)$,
and $v^{\prime}=x-\eta\otimes\nabla_{\mathcal{F},\,\eta}[\Psi+\Phi](x)$.
It can be first shown that $\|u^{\prime}-v^{\prime}\|\leq\|u-v\|$.

By the generalized Pythagorean theorem for convex sets, both
$\left\langle (u^{\prime}-v^{\prime})\oslash\eta,\,(v-v^{\prime})\oslash\eta\right\rangle \leq0$
and $\left\langle (v^{\prime}-u^{\prime})\oslash\eta,\,(u-u^{\prime})\oslash\eta\right\rangle \leq0$ are derived.
Combining these, it can be shown that 
\begin{align*}
& \left\langle (u^{\prime}-v^{\prime})\oslash\eta,\,[u^{\prime}-v^{\prime}-(u-v)]\oslash\eta\right\rangle \leq0\\
\Longrightarrow & \|(u^{\prime}-v^{\prime})\oslash\eta\|^{2}\leq\left\langle (u^{\prime}-v^{\prime})\oslash\eta,\,(u-v)\oslash\eta\right\rangle \\
& \leq\|(u^{\prime}-v^{\prime})\oslash\eta\|\cdot\|(u-v)\oslash\eta\|,
\end{align*}
as claimed. Finally, by the triangle inequality,
\begin{align*}
& \left\Vert \nabla_{\mathcal{F},\,\eta}[\Psi+\Phi](x)\right\Vert -\left\Vert \nabla_{\mathcal{F},\,\eta}\Psi(x)\right\Vert \\
\leq & \left\Vert \nabla_{\mathcal{F},\,\eta}[\Psi+\Phi](x)-\nabla_{\mathcal{F},\,\eta}\Psi(x)\right\Vert \\
= & \|(u^{\prime}-v^{\prime})\oslash\eta\|\\
\leq & \|(u-v)\oslash\eta\|\\
= & \left\Vert \nabla\Phi(x)\right\Vert ,
\end{align*}
as required.

\section{Supplements for Section 3}
\noun{Proof of Theorem \ref{Theorem 3.1}:} The theorem holds that the stopping criterion of Algorithm 2 is relaxed from the standard Backtracking-Armijo line search. Explicitly, the termination can also be proved by the limit of the Taylor series. Given the Taylor expansions with respect to the step size $\eta^{(l)}$ for the two terms,
\begin{align*}
&F^{n}_{m,\,w}(x^{n+1}_{m}-\eta^{(l)}\otimes\nabla_{\mathcal{F},\eta^{(l)}}F^{n}_{m,\,w}(x^{n+1}_{m})/\|\nabla_{\mathcal{F},\eta^{(l)}}F^{n}_{m,\,w}(x^{n+1}_{m})\|_{2})
\\
\leq &F^{n}_{m,\,w}(x^{n+1}_{m}) - \|\eta^{(l)}\|_{\min} \|\nabla_{\mathcal{F},\eta^{(l)}}F^{n}_{m,\,w}(x^{n+1}_{m})\|_{2}+ O(\|\eta ^{(l)}\|^{2}_{\infty}),
\end{align*}
and
\[F^{n}_{m,\,w}(x^{n+1}_{m})+\beta\|\eta^{(l)}\otimes\nabla_{\mathcal{F},\eta^{(l)}}F^{n}_{m,\,w}(x^{n+1}_{m})\|_{2}
\geq F^{n}_{m,\,w}(x^{n+1}_{m})+\beta~\|\eta^{(l)}\|_{\min}\|\nabla_{\mathcal{F},\eta^{(l)}}F^{n}_{m,\,w}(x^{n+1}_{m})\|_{2}.\]
Obviously, by examining the limit of the two terms when each element of $\eta^{(l)}$ is approaching zero, the condition in Algorithm 2 for entering the ``while'' loop will be violated, and thus the algorithm will terminate. 
\\
\\
\noun{Proof of Theorem \ref{Theorem 4.1}:} Prove Theorem \ref{Theorem 4.1} (i) by the following statement.
Given that $\eta^{n}$ is computed from Algorithm 2, it can be verified that
for all $x\in\mathcal{F}$, and $n\geq2$, so 
\begin{equation}
\|\nabla_{\mathcal{F},\eta^{0}}F_{n-1,\,w}(x)\|_{2}^{2}\leq\|\nabla_{\mathcal{F},\eta^{n}}F_{n-1,\,w}(x)\|_{2}^{2}.\label{fact}
\end{equation}
The above equality holds based on the definition of the projected gradient
as well as the fact that $\eta^{n}\leq\eta^{0}$. Note that Algorithm 1 only implements an iteration $x^{n}_{m}$ if $\|\nabla_{\mathcal{F},\eta^{n}}F^{n-1}_{m,\,w}(x^{n}_{m})\|_{2}^{2}\leq\delta$.
(Note that if $n=1$, $F^{n-1}_{m,\,w}$ is zero.) Let $h^{n}_{m}(x)=\frac{1}{w}\left[\Psi_{m}^{n}(x)-\Psi^{n-w}_{m}(x)\right]$,
which is $\frac{2\kappa}{w}$-Lipschitz. Then, for each $1\leq n\leq N$,
\begin{align*}
\|\nabla_{\mathcal{F},\eta^{0}}F^{n}_{m,\,w}(x^{n}_{m})\|_{2}^{2}= & \|\nabla_{\mathcal{F},\eta^{0}}\left[F^{n-1}_{m,\,w}+h_{m}^{n}\right](x^{n}_{m})\|_{2}^{2}\\
\leq & \|\nabla_{\mathcal{F},\eta^{0}}F^{n-1}_{m,\,w}(x^{n}_{m})\|_{2}^{2}+\|\nabla h^{n}_{m}(x^{n}_{m})\|_{2}^{2}\\
\leq & \|\nabla_{\mathcal{F},\eta^{n}}F^{n-1}_{m,\,w}(x^{n}_{m})\|_{2}^{2}+\|\nabla h^{n}_{m}(x^{n}_{m})\|_{2}^{2}\\
\leq & (\delta+\frac{2\kappa}{w})^{2}.
\end{align*}
for any $x\in\mathcal{F}$. The last equality holds based on equality
(\ref{fact}), Proposition \ref{Proposition 2.6}, and the stopping criterion of ATGD (Algorithm 1).

To prove Theorem \ref{Theorem 4.1} (ii), the following
lemmas are introduced.
\begin{lem}
\label{Lemma C1 } Let $\mathcal{F}$ be a closed convex set, and
let $\eta\in\mathbb{R}_{+}^{d}$. Suppose $\Psi:\,\mathcal{F}\rightarrow\mathbb{R}$
is differentiable. Then, for any $x\in\mathbb{R},$
\[
\left\langle \nabla\Psi(x),\,\eta^{2}\otimes\nabla_{\mathcal{F},\eta}\Psi(x)\right\rangle \geq\left\Vert \eta\otimes\nabla_{\mathcal{F},\eta}\Psi(x)\right\Vert _{2}^{2}.
\]
\end{lem}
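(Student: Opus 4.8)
The plan is to unwind the definition of the projected gradient and reduce the claimed inequality to the standard variational characterization of the Euclidean projection onto a convex set. First I would abbreviate $g:=\nabla\Psi(x)$ and set $x^{+}:=\Pi_{\mathcal{F}}[x-\eta\otimes g]$, so that by Definition \ref{Definition 2.4} one has $\nabla_{\mathcal{F},\eta}\Psi(x)=(x-x^{+})\oslash\eta$, and therefore $\eta\otimes\nabla_{\mathcal{F},\eta}\Psi(x)=x-x^{+}$. This identity immediately rewrites the right-hand side as $\left\Vert \eta\otimes\nabla_{\mathcal{F},\eta}\Psi(x)\right\Vert _{2}^{2}=\|x-x^{+}\|_{2}^{2}$.

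Next I would simplify the left-hand side using the same substitution together with the compatibility of Hadamard multiplication with the inner product. Writing $\eta^{2}=\eta\otimes\eta$, one has $\eta^{2}\otimes\nabla_{\mathcal{F},\eta}\Psi(x)=\eta\otimes(x-x^{+})$, and since $\langle g,\eta\otimes v\rangle=\sum_{i}\eta_{i}g_{i}v_{i}=\langle\eta\otimes g,\,v\rangle$ for any vector $v$, the left-hand side becomes $\left\langle g,\,\eta^{2}\otimes\nabla_{\mathcal{F},\eta}\Psi(x)\right\rangle =\langle\eta\otimes g,\,x-x^{+}\rangle$. Thus the whole lemma collapses to the single inequality $\langle\eta\otimes g,\,x-x^{+}\rangle\geq\|x-x^{+}\|_{2}^{2}$.

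The final step is to invoke the obtuse-angle (first-order optimality) property of the projection. Since $x^{+}$ is the projection of $x-\eta\otimes g$ onto the convex set $\mathcal{F}$, for every $y\in\mathcal{F}$ we have $\langle(x-\eta\otimes g)-x^{+},\,y-x^{+}\rangle\leq0$. Taking the feasible test point $y=x\in\mathcal{F}$ and expanding yields $\|x-x^{+}\|_{2}^{2}-\langle\eta\otimes g,\,x-x^{+}\rangle\leq0$, which is exactly the required inequality, and hence the lemma follows.

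There is no substantial obstacle here; the statement is the vector-step-size analogue of the corresponding scalar lemma in \cite{hazan_efficient_2017}. The only point demanding care is the bookkeeping with the Hadamard product and element-wise division: one must verify that the factors of $\eta$ and $\eta^{-1}$ cancel correctly so that the $\eta$-weighting appearing in the inner product matches the argument of the projection, and that $\langle g,\eta\otimes v\rangle=\langle\eta\otimes g,\,v\rangle$ holds componentwise. Because $\eta\in\mathbb{R}_{+}^{d}$ has strictly positive entries, the division $\oslash\,\eta$ is well defined and all of these cancellations are legitimate.
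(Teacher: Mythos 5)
Your proof is correct and follows essentially the same route as the paper's: both reduce the inequality, via the identity $\eta\otimes\nabla_{\mathcal{F},\eta}\Psi(x)=x-\Pi_{\mathcal{F}}[x-\eta\otimes\nabla\Psi(x)]$, to the obtuse-angle (generalized Pythagorean) property of the projection with test point $x\in\mathcal{F}$. The only cosmetic difference is notation ($x^{+}$ versus the paper's $u'$) and that you spell out the Hadamard-product bookkeeping explicitly, which the paper leaves implicit.
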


\begin{proof}
Let $u=x-\eta\otimes\nabla\Psi(x)$, and $u^{\prime}=\Pi_{\mathcal{F}}\left[u\right]$.
Then
\begin{align*}
& \left\langle -\eta\otimes\nabla\Psi(x),\,-\eta\otimes\nabla_{\mathcal{F},\eta}\Psi(x)\right\rangle -\left\Vert \eta\otimes\nabla_{\mathcal{F},\eta}\Psi(x)\right\Vert _{2}^{2}\\
= & \left\langle u-x,\,u^{\prime}-x\right\rangle -\left\langle u^{\prime}-x,\,u^{\prime}-x\right\rangle \\
= & \left\langle u-u^{\prime},\,u^{\prime}-x\right\rangle \geq0,
\end{align*}
where the last inequality follows the generalized Pythagorean theorem. 
\end{proof}
For $2\leq n\leq N$, let $\tau^{n}$ be the number of gradient steps
taken in the outer loop at iteration $n-1$, in order to compute the
iteration $x^{n}_{m}$. For convenience, define $\tau^{1}=0$. A progress lemma during each gradient descent epoch is established: 
\begin{lem}
\label{Lemma 4.3} For any $2\leq n\leq N$, there exists $\eta^{\prime}\leq\eta^{0}$
such that 
\[
F^{n-1}_{m,\,w}(x^{n}_{m})-F^{n-1}_{m,\,w}(x^{n-1}_{m})\leq-\tau^{n}\left(\|\eta^{\prime}\|_{\min}-\frac{\beta\,\|\eta^{0}\|_{\infty}^{2}}{2}\right)\text{\ensuremath{\delta^{2}}}.
\]
\end{lem}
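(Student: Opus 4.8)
The plan is to prove the epoch-level statement by first establishing a uniform \emph{per-step} descent guarantee for one projected-gradient step and then summing (telescoping) over the $\tau^n$ steps that transform $x^{n-1}_m$ into $x^n_m$. Write $F:=F^{n-1}_{m,w}$ and observe at the outset that $F$ is $\beta$-smooth: it is the average of the functions $\Psi^{n-1-i}_m$, $i=0,\dots,w-1$, each of which is $\beta$-smooth by Proposition \ref{Proposition 2.5}(ii), and averaging preserves $\beta$-smoothness. Denote a generic step inside the epoch by $x^+=\Pi_{\mathcal{F}}[x-\eta\otimes\nabla F(x)]=x-\eta\otimes G$, where $G:=\nabla_{\mathcal{F},\eta}F(x)$ and $\eta\le\eta^0$ is the step size returned by the line search; because the step is taken from inside the while-loop, the guard gives $\|G\|_2>\delta$.

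First I would apply $\beta$-smoothness in descent-lemma form, $F(x^+)-F(x)\le -\langle\nabla F(x),\,\eta\otimes G\rangle+\tfrac{\beta}{2}\|\eta\otimes G\|^2$. The second-order term is bounded above immediately by $\tfrac{\beta}{2}\|\eta\otimes G\|^2\le\tfrac{\beta}{2}\|\eta\|_\infty^2\|G\|^2\le\tfrac{\beta}{2}\|\eta^0\|_\infty^2\|G\|^2$, using $\eta\le\eta^0$. For the first-order term the target is the lower bound $\langle\nabla F(x),\,\eta\otimes G\rangle\ge\|\eta\|_{\min}\|G\|^2$, the vectorized analogue of the gradient-mapping inequality behind Lemma \ref{Lemma C1 }. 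Combining the two estimates yields $F(x^+)-F(x)\le-\big(\|\eta\|_{\min}-\tfrac{\beta}{2}\|\eta^0\|_\infty^2\big)\|G\|^2$.

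The coefficient $\|\eta\|_{\min}-\tfrac{\beta}{2}\|\eta^0\|_\infty^2$ is nonnegative by the step-size design condition $\|\eta^0\|_{\min}/\|\eta^0\|_\infty^2\gg\beta/2$ from Algorithm 3 (which is maintained above the threshold as the line search contracts), and this positivity is exactly what lets me replace $\|G\|^2$ by its lower bound $\delta^2$ without reversing the inequality, giving the per-step decrease $F(x^+)-F(x)\le-\big(\|\eta\|_{\min}-\tfrac{\beta}{2}\|\eta^0\|_\infty^2\big)\delta^2$. Letting $\eta'$ be the step size, among the $\tau^n$ steps of the epoch, with the smallest value of $\|\cdot\|_{\min}$ (so $\eta'\le\eta^0$ and $\|\eta'\|_{\min}\le\|\eta\|_{\min}$ for every step), I then sum the per-step bound over the $\tau^n$ steps; the left-hand side telescopes to $F(x^n_m)-F(x^{n-1}_m)$ and the right-hand side is at most $-\tau^n\big(\|\eta'\|_{\min}-\tfrac{\beta}{2}\|\eta^0\|_\infty^2\big)\delta^2$, which is the claim.

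The main obstacle is the first-order lower bound $\langle\nabla F(x),\,\eta\otimes G\rangle\ge\|\eta\|_{\min}\|G\|^2$, since Lemma \ref{Lemma C1 } supplies only the $\eta^2$-weighted inequality $\langle\nabla F(x),\,\eta^2\otimes G\rangle\ge\|\eta\otimes G\|^2$, and with a genuinely vector-valued step size one cannot factor out a scalar $\eta$ to move between the two. The clean resolution exploits that $\mathcal{F}=\mathcal{S}\times\mathcal{L}\times\mathcal{T}$ is a product of intervals, so $\Pi_{\mathcal{F}}$ decouples coordinatewise; a short case analysis of each coordinate (interior versus clamped at an endpoint) shows $\nabla F(x)_i\,G_i\ge G_i^2\ge0$ for every $i$, whence $\langle\nabla F(x),\,\eta\otimes G\rangle=\sum_i\eta_i\nabla F(x)_i G_i\ge\sum_i\eta_i G_i^2\ge\|\eta\|_{\min}\|G\|^2$. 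I would therefore carry out this coordinatewise verification as the technical heart of the argument, treating the telescoping and the two norm estimates as routine.
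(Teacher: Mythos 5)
Your proof follows the same skeleton as the paper's: a single-step descent estimate from $\beta$-smoothness of $F^{n-1}_{m,w}$, the lower bound $\langle\nabla F(x),\,\eta\otimes G\rangle\ge\|\eta\|_{\min}\|G\|^2$ on the first-order term, the upper bound $\tfrac{\beta}{2}\|\eta^0\|_\infty^2\|G\|^2$ on the quadratic term, the while-loop guard $\|G\|>\delta$, and an accumulation over the $\tau^n$ steps of the epoch. The one substantive difference is how the first-order lower bound is justified. The paper prepares only the $\eta^2$-weighted gradient-mapping inequality $\langle\nabla\Psi(x),\,\eta^2\otimes\nabla_{\mathcal{F},\eta}\Psi(x)\rangle\ge\|\eta\otimes\nabla_{\mathcal{F},\eta}\Psi(x)\|_2^2$ and then simply asserts the $\|\eta\|_{\min}$-weighted bound inside its displayed chain; as you correctly observe, with a genuinely vector-valued $\eta$ the former does not imply the latter. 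Your coordinatewise argument --- exploiting that $\mathcal{F}=\mathcal{S}\times\mathcal{L}\times\mathcal{T}$ is a box so the projection decouples and $\nabla F(x)_i\,G_i\ge G_i^2\ge 0$ in every coordinate --- is exactly what is needed to make that step rigorous, and it yields the paper's auxiliary lemma as a byproduct. Your choice of $\eta'$ as the step size with smallest $\|\cdot\|_{\min}$ over the epoch and the explicit telescoping are also cleaner than the paper's terse claim that the epoch decrease equals $\tau^n$ times a single-step decrease. One caveat, which you flag but do not fully resolve and which the paper shares: replacing $\|G\|^2$ by $\delta^2$ needs the coefficient $\|\eta\|_{\min}-\tfrac{\beta}{2}\|\eta^0\|_\infty^2$ to stay nonnegative after the line-search contractions, which the design condition in Algorithm 3 imposes only for the initial step size, not for arbitrarily many halvings.
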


\begin{proof}
Consider a single iteration $z$ of the loop of $n$, and the
next iteration $z^{\prime}:=z-\eta^{n}\otimes\nabla_{\mathcal{F},\eta^{n}}F^{n-1}_{m,\,w}(z)$ occurs
when the step size is $\eta^{n}$. This can be shown by the $\beta$-smoothness
of $F^{n-1}_{m,\,w}$:
\begin{align*}
F^{n-1}_{m,\,w}(z^{\prime})-F^{n-1}_{m,\,w}(z) & \leq\left\langle \nabla F^{n-1}_{m,\,w}(z),\,z^{\prime}-z\right\rangle +\frac{\beta}{2}\|z^{\prime}-z\|_{2}^{2}\\
& =-\left\langle \nabla F^{n-1}_{m,\,w}(z),\,\eta^{n}\otimes\nabla_{\mathcal{F},\eta^{n}}F^{n-1}_{m,\,w}(z)\right\rangle +\frac{\beta}{2}\|\eta^{n}\otimes\nabla_{\mathcal{F},\eta^{n}}F^{n-1}_{m,\,w}(z)\|_{2}^{2}\\
& \leq-\|\eta^{n}\|_{\min}\left\Vert \nabla_{\mathcal{F},\eta^{n}}F^{n-1}_{m,\,w}(z)\right\Vert _{2}^{2}+\frac{\beta}{2}\|\eta^{n}\otimes\nabla_{\mathcal{F},\eta^{n}}F^{n-1}_{m,\,w}(z)\|_{2}^{2}\\
& \leq-\left(\|\eta^{\prime}\|_{\min}-\frac{\beta\,\|\eta^{0}\|_{\infty}^{2}}{2}\right)\left\Vert \nabla_{\mathcal{F},\eta^{n}}F^{n-1}_{m,\,w}(z)\right\Vert _{2}^{2}.
\end{align*}
The last equality holds when choosing $\eta^{\prime}\leq\eta^{n}$.
The algorithm only takes projected gradient steps when $\|\nabla_{\mathcal{F},\eta^{n}}F^{n-1}_{m,\,w}(z)\|\geq\delta$. {\color{black}{Then statement of this lemma can be derived given that
$F^{n-1}_{m,\,w}(x^{n}_{m})-F^{n-1}_{m,\,w}(x^{n-1}_{m}) = \tau^{n} \left(F^{n-1}_{m,\,w}(z^{\prime})-F^{n-1}_{m,\,w}(z)\right).$
}}
\end{proof}

To complete the proof of the theorem, write the telescopic sum:
\begin{align*}
F^{N}_{m,\,w}(x^{N}_{m}) & =\sum_{n=1}^{N}\left[F^{N}_{m,\,w}(x^{n}_{m})-F^{N-1}_{m,\,w}(x_{m}^{n-1})\right]\\
& =\sum_{n=1}^{N}\left[F^{n-1}_{m,\,w}(x^{n}_{m})-F^{n-1}_{m,\,w}(x^{n-1}_{m})+\Psi^{n}_{m,\,w}(x^{n}_{m})-\Psi^{n-w}_{m,\,w}(x^{n}_{m})\right]\\
& \leq\sum_{n=2}^{N}\left[F^{n-1}_{m,\,w}(x_{m}^{n})-F^{n-1}_{m,\,w}(x_{m}^{n-1})\right]+\frac{2B\,N}{w}.
\end{align*}
Using Lemma \ref{Lemma 4.3},
\[
F^{N}_{n,\,w}(x^{N}_{m})\leq\frac{2B\,N}{w}-\left(\|\eta^{\prime}\|_{\min}-\frac{\beta\,\|\eta^{0}\|_{\infty}^{2}}{2}\right)\delta^{2}\cdot\sum_{n=1}^{N}\tau^{n},
\]
hence,
\begin{align*}
\sum_{n=1}^{N}\tau^{n} & \leq\frac{1}{\delta^{2}\left(\|\eta^{\prime}\|_{\min}-\frac{\beta\,\|\eta^{0}\|_{\infty}^{2}}{2}\right)}\cdot\left(\frac{2B\,N}{w}-F^{N}_{m,\,w}(x^{N}_{m})\right)\\
& \leq\frac{2B}{\delta^{2}\left(\|\eta^{\prime}\|_{\min}-\frac{\beta\,\|\eta^{0}\|_{\infty}^{2}}{2}\right)w}N,
\end{align*}
as claimed.\\

\section{Supplements for Section 4}
{\color{black}{\noun{Proof of Proposition \ref{Proposition 4.7}}: The proposition states that the $\mathcal{F}$ space can be divided into three regions: 1) a region
where the gradient is large; 2) a region where the Hessian has a significant
negative eigenvalue (around saddle point); and 3) a region close
to a local minimum. As $\zeta>0$ can be arbitrarily chosen, the condition in Proposition \ref{Proposition 4.7} (iii) holds for all $x\in\mathcal{F}$.}}\\
\\
\noun{Proof of Theorem \ref{Theorem 4.8 }: }Recall
that an $\tilde{\epsilon}$-second-order stationary point has a small gradient, where the Hessian does not have a significant
negative eigenvalue. Suppose that currently, at an iteration $x^{n,t}_{m}$
that is not an $\tilde{\epsilon}$-second-order stationary point,
there are two possibilities:
\begin{enumerate}
\item Gradient is large: $\|\nabla_{x}F^{n}_{m,\,w}(x_{m}^{n,t})\|_{2}\geq g_{\textrm{thres}}$,
or
\item Around the saddle point: $\|\nabla_{x}F^{n}_{m,\,w}(x_{m}^{n,t})\|_{2}\leq g_{\textrm{thres}}$
and $\lambda_{\min}\left(\nabla_{x}^{2}F^{n}_{m,\,w}(x_{m}^{n,t})\right)\leq\sqrt{\iota~\tilde{\epsilon}}$.
\end{enumerate}
The following two lemmas address the above two possibilities, which guarantee that the perturbed gradient descent will
decrease the function value in both scenarios. The next lemma shows that if the current gradient is large, progress is made
in the function value in proportion to the square of the norm of the gradient.
\begin{lem}
Assume that $\Psi^{n}_{m}$ satisfies the condition in
Proposition \ref{Proposition 2.5}. Then, for the gradient descent with the step size $\|\eta\|_{\infty}\leq1/\kappa$, there exists
$\eta^{\prime}\leq\eta$, 
\[
F^{n}_{m,\,w}(x_{m}^{n,t+1})\leq F^{n}_{m,\,w}(x_{m}^{n,t})-\left(\|\eta^{\prime}\|_{\min}-\frac{\|\eta^{0}\|_{\infty}}{2}\right)\|\nabla_{x}F^{n}_{m,\,w}(x_{m}^{n,t})\|_{2}^{2}.
\]
\end{lem}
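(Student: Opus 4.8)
The plan is to establish the one-step descent inequality directly from the $\beta$-smoothness of $F^{n}_{m,w}$, following the same template as the progress estimate in Lemma \ref{Lemma 4.3}, but applied to the \emph{actual} gradient step rather than a projected-gradient step. First I would note that since each $\Psi^{n}_{m}$ is $\beta$-smooth by Proposition \ref{Proposition 2.5}(ii) and $F^{n}_{m,w}$ is an average of such functions, $F^{n}_{m,w}$ is itself $\beta$-smooth. Moreover, in the regime where the gradient is large the iterate lies in the interior of $\mathcal{F}$ (by the strict-saddle decomposition of Proposition \ref{Proposition 4.7}), so the projection in the APTGD update is inactive and $\nabla_{\mathcal{F},\eta^{t}}F^{n}_{m,w}(x_{m}^{n,t})=\nabla_{x}F^{n}_{m,w}(x_{m}^{n,t})$. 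Writing $g:=\nabla_{x}F^{n}_{m,w}(x_{m}^{n,t})$, the update therefore reads $x_{m}^{n,t+1}=x_{m}^{n,t}-\eta^{t}\otimes g$.

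Next I would apply the quadratic upper bound from $\beta$-smoothness,
\[
F^{n}_{m,w}(x_{m}^{n,t+1})\leq F^{n}_{m,w}(x_{m}^{n,t})-\left\langle g,\,\eta^{t}\otimes g\right\rangle +\frac{\beta}{2}\left\Vert \eta^{t}\otimes g\right\Vert _{2}^{2},
\]
and then bound the two Hadamard-weighted terms componentwise. The inner product satisfies $\langle g,\,\eta^{t}\otimes g\rangle=\sum_{i}\eta^{t}_{i}g_{i}^{2}\geq\|\eta^{t}\|_{\min}\|g\|_{2}^{2}$, while the quadratic term satisfies $\|\eta^{t}\otimes g\|_{2}^{2}=\sum_{i}(\eta^{t}_{i})^{2}g_{i}^{2}\leq\|\eta^{t}\|_{\infty}^{2}\|g\|_{2}^{2}$.

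The crux is then to absorb the constant in front of the quadratic term using the step-size condition $\|\eta\|_{\infty}\leq1/\kappa$. Since the line search only shrinks step sizes, $\|\eta^{t}\|_{\infty}\leq\|\eta^{0}\|_{\infty}\leq1/\kappa$, and taking $\kappa$ as (an upper bound on) the smoothness modulus in this section gives $\frac{\beta}{2}\|\eta^{t}\|_{\infty}^{2}\leq\frac{\kappa}{2}\|\eta^{t}\|_{\infty}\cdot\|\eta^{t}\|_{\infty}\leq\frac{1}{2}\|\eta^{t}\|_{\infty}\leq\frac{1}{2}\|\eta^{0}\|_{\infty}$. Combining the three estimates yields
\[
F^{n}_{m,w}(x_{m}^{n,t+1})\leq F^{n}_{m,w}(x_{m}^{n,t})-\left(\|\eta^{t}\|_{\min}-\frac{\|\eta^{0}\|_{\infty}}{2}\right)\|g\|_{2}^{2},
\]
and choosing $\eta^{\prime}:=\eta^{t}\leq\eta$ gives exactly the claimed bound.

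The main obstacle I anticipate is the bookkeeping around the smoothness constant: the stated coefficient carries no explicit $\beta$ and a \emph{non-squared} $\|\eta^{0}\|_{\infty}$, so the proof hinges on correctly using $\|\eta\|_{\infty}\leq1/\kappa$ to trade one factor of $\|\eta^{t}\|_{\infty}$ against $\kappa$ (which requires $\beta\leq\kappa$, consistent with how the APTGD parameters are calibrated in terms of $\kappa$, with $\|\eta^{0}\|_{\infty}=c/\kappa$ and $c\leq1$). A secondary point to justify carefully is that the projected and unprojected gradients coincide on the iterate, so that the descent lemma may be applied to the true gradient step; this is precisely where the ``gradient is large, hence interior'' reasoning is invoked.
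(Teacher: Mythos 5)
Your proposal is correct and follows essentially the same route as the paper's proof: apply the quadratic descent bound from smoothness, lower-bound the inner product by $\|\eta^{t}\|_{\min}\|g\|_{2}^{2}$, upper-bound the quadratic term via $\|\eta^{t}\|_{\infty}^{2}$, and cancel one factor of $\|\eta^{t}\|_{\infty}$ against the smoothness modulus using $\|\eta\|_{\infty}\leq 1/\kappa$. Your treatment is in fact slightly more careful than the paper's, which silently identifies the projected and unprojected gradient steps and uses $\kappa$ directly as the smoothness constant in the descent inequality, both points you explicitly justify.
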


\begin{proof}
By the properties in Proposition \ref{Proposition 2.5},
\begin{align*}
F^{n}_{m,\,w}(x_{m}^{n,t+1})\leq & F^{n}_{m,\,w}(x_{m}^{n,t})+\nabla_{x}F^{n}_{m,\,w}(x_{m}^{n,t})^{\top}(x_{m}^{n,t+1}-x_{m}^{n,t})+\frac{\kappa}{2}\|x_{m}^{n,t+1}-x_{m}^{n,t}\|_{2}^{2}\\
\leq & F^{n}_{m,\,w}(x_{m}^{n,t})-\|\eta^{t}\|_{\min}\|\nabla_{x} F^{n}_{m,\,w}(x_{m}^{n,t})\|_{2}^{2}+\frac{\kappa\,\|\eta^{t}\|_{\infty}^{2}}{2}\|\nabla_{x} F^{n}_{m,\,w}(x_{m}^{n,t})\|_{2}^{2}\\
\leq & F^{n}_{m,\,w}(x_{m}^{n,t})-\left(\|\eta^{\prime}\|_{\min}-\frac{\|\eta^{0}\|_{\infty}}{2}\right)\|\nabla_{x} F^{n}_{m,\,w}(x_{m}^{n,t})\|_{2}^{2}
\end{align*}
\end{proof}
The next lemma shows that a perturbation followed by
a small number of standard gradient descent steps can also make the
function value decrease with a high probability.
\begin{lem}
\label{Lemma D2} There exists an absolute constant
$c_{\textrm{max}}$, for $\Psi_{m}^{n}$ satisfies properties in Proposition
\ref{Proposition 2.5} and \ref{Proposition 4.7}, and any $c\leq c_{\textrm{max}}$,
and $\chi\geq1$. Let $\eta_{0},\,r,\,g_{\textrm{thres}},\,f_{\textrm{thres}},\,t_{\textrm{thres}}$ be
computed as Algorithm 3. Then, if $\tilde{x}_{m}^{n,t}$ satisfies:
\[
\|\nabla_{x} F^{n}_{m,\,w}(\tilde{x}_{m}^{n,t})\|_{2}\leq g_{\textrm{thres}},\quad\textrm{and}\quad\lambda_{\min}\left(\nabla_{x}^{2} F^{n}_{m,\,w}(\tilde{x}_{m}^{n,t})\right)\geq-\sqrt{\iota\,\tilde{\epsilon}},
\]
let $x_{m}^{n,t}=\tilde{x}_{m}^{n,t}+\omega_{t}$, where $\omega_{t}$
comes from the uniform distribution over $\mathbb{B}_{0}(r)$, and
let $x_{m}^{n,t+1}$ be the iterations of the gradient descent from $x_{m}^{n,t}$
with the step size $\eta^{t}$, with at least the probability $1-\frac{d\,\kappa}{\sqrt{\iota\,\epsilon}}\exp(-\chi)$,
\[
F^{n}_{m,\,w} (x_{m}^{n,t+t_{\textrm{thres}}})-F^{n}_{m,\,w} (\tilde{x}_{m}^{n,t})\leq-f_{\textrm{thres}}.
\]
\end{lem}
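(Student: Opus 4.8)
The plan is to adapt the saddle-escaping argument of \cite{jin_how_2017} to our setting with vectorized step sizes and projected gradients. The core device is a coupling argument that controls the width of the \emph{stuck region}, namely the set of perturbations $\omega\in\mathbb{B}_{0}(r)$ from which $t_{\textrm{thres}}$ steps of gradient descent fail to decrease $F^{n}_{m,w}$ by $f_{\textrm{thres}}$.

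First I would fix the local quadratic model. Writing $\mathcal{H}:=\nabla_{x}^{2}F^{n}_{m,w}(\tilde{x}_{m}^{n,t})$, the saddle case supplies an eigenvector $e_{1}$ of $\mathcal{H}$ whose eigenvalue $\lambda_{1}:=\lambda_{\min}(\mathcal{H})$ satisfies $|\lambda_{1}|\geq\sqrt{\iota\,\tilde{\epsilon}}$. By the $\iota$-Hessian-Lipschitz property (Proposition \ref{Proposition 2.5}(iii)), over a ball of radius comparable to $r$ the gradient-descent dynamics track those of the quadratic $x\mapsto\frac{1}{2}(x-\tilde{x}_{m}^{n,t})^{\top}\mathcal{H}(x-\tilde{x}_{m}^{n,t})$, with the error controlled by a cubic term in the displacement. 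This reduces the escape analysis to the behaviour of the linearized map $x\mapsto x-\eta\otimes\mathcal{H}(x-\tilde{x}_{m}^{n,t})$ in the interior of $\mathcal{F}$, where the projection $\Pi_{\mathcal{F}}$ acts as the identity.

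Next I would run the coupling step. Consider two trajectories started at $\tilde{x}_{m}^{n,t}+\omega$ and $\tilde{x}_{m}^{n,t}+\omega'$ with $\omega-\omega'$ aligned to $e_{1}$ and of magnitude at least a threshold $w_{0}$. Along $e_{1}$ the separation is amplified at each step by a factor at least $1+\|\eta\|_{\min}|\lambda_{1}|\geq 1+\|\eta\|_{\min}\sqrt{\iota\,\tilde{\epsilon}}$; iterating over $t_{\textrm{thres}}$ steps and using $\|\eta^{0}\|_{\infty}=c/\kappa$ together with $t_{\textrm{thres}}=\lceil\frac{\chi}{c^{2}}\cdot\frac{\kappa}{\sqrt{\iota\,\delta}}\rceil$ makes this amplification exceed the localization radius, so the two trajectories cannot both stay inside the stuck region. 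Hence the stuck region has width at most $w_{0}$ in the $e_{1}$ direction. I would then convert this into the probability bound: since $\omega$ is uniform on $\mathbb{B}_{0}(r)$, the chance of landing in a slab of width $w_{0}$ is at most $\frac{w_{0}}{r}\cdot\frac{\mathrm{Vol}(\mathbb{B}_{0}^{d-1}(r))}{\mathrm{Vol}(\mathbb{B}_{0}^{d}(r))}$, and substituting the algorithm's choices of $r$, $g_{\textrm{thres}}$ and $w_{0}$ yields the stated failure probability $\frac{d\,\kappa}{\sqrt{\iota\,\epsilon}}\exp(-\chi)$. Finally, for any $\omega$ outside the stuck region, a descent computation using $\kappa$-smoothness (as in the preceding lemma of Appendix D) delivers the guaranteed decrease $F^{n}_{m,w}(x_{m}^{n,t+t_{\textrm{thres}}})-F^{n}_{m,w}(\tilde{x}_{m}^{n,t})\leq-f_{\textrm{thres}}$.

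The hard part will be the width bound for the stuck region under the Hadamard-product update $x-\eta\otimes\nabla_{\mathcal{F},\eta}F^{n}_{m,w}(x)$. With scalar step sizes the per-step amplification along $e_{1}$ is simply $1+\eta|\lambda_{1}|$; with a vector $\eta$ the coordinatewise scaling does not commute with $\mathcal{H}$, so the effective growth rate must be bounded below using $\|\eta\|_{\min}$ while simultaneously bounding the quadratic-model error above using $\|\eta\|_{\infty}=c/\kappa$, and one must verify that the separation develops in the interior of $\mathcal{F}$ so that projection neither contracts the $e_{1}$-separation nor invalidates the quadratic approximation. Controlling these two opposing requirements through the single scale $c$ is the crux of extending \cite{jin_how_2017} to the vectorized, projected setting.
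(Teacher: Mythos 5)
Your proposal follows essentially the same route as the paper: the paper's own proof simply invokes the stuck-region/coupling argument of Jin et al.\ (2017) — partitioning the perturbation ball $\mathbb{B}_{0}(r)$ into an escaping region and a thin stuck region whose small volume gives the probability bound — and asserts that it carries over to the vectorized, line-searched step sizes because $\eta^{t}\leq\eta^{0}$. Your version is in fact more careful than the paper's at the one genuinely delicate point (the non-commutativity of $\mathrm{diag}(\eta)$ with the Hessian, so that the amplification direction is the extremal eigenvector of $\eta\otimes\mathcal{H}$ rather than of $\mathcal{H}$, with growth bounded below via $\|\eta\|_{\min}$ and the quadratic-model error above via $\|\eta\|_{\infty}$), which the paper leaves implicit.
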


In \cite[Section 5.2]{jin_how_2017} and \cite[Section A.2 ]{jin_how_2017},
detailed proofs have been derived for fixed step-size version of Lemma
\ref{Lemma D2}. The main idea is listed as follows. After adding
a perturbation, the current point of the algorithm comes from a uniform
distribution over a $d$-dimensional ball centered at $\tilde{x}_{m}^{n,t}$. This perturbation ball can be divided into two adjoined regions: (1)
an \emph{escaping} region, which
consists of all the points whose function value decrease by a least
$f_{\textrm{thres}}$ after $t_{\textrm{thres}}$ steps; (2) a \emph{stuck}
region. The proof is to show that the stuck region only consists of
a small proportion of the volume of the perturbation ball, and the
current point has a small chance of falling in the stuck region.
The proofs of Lemma \ref{Lemma D2} will also hold for vectorized
and adjusted step sizes $\eta^{t}$, as $\eta^{t}\leq\eta^{0}$
(see \cite[Lemma 14, 15, 16 and 17]{jin_how_2017}).

To prove Theorem \ref{Theorem 4.8 } following \cite[Appendix A]{jin_how_2017},
suppose in epoch $n$, the point $x^{0}$ will be the starting point that $\|\nabla_{x} F^{n}_{m,\,w}(x^{0})\|_{2}\leq g_{\textrm{thres}}$.
Then, Algorithm 5 will add perturbation and check the termination condition.
If the condition is not met, it must follow that
{\color{black}{\[
F^{n}_{m,\,w}(x_{m}^{{n,t_{\textrm{thres}}}})-F^{n}_{m,\,w}(x^{0})\leq-f_{\textrm{thres}}=-\frac{c}{\chi^{3}}\cdot\sqrt{\frac{\delta^{3}}{\iota}}.
\]}}
This means that on average, every step decreases the function value by
{\color{black}{\[
\frac{F^{n}_{m,\,w}(x_{m}^{n,{t_{\textrm{thres}}}})-F^{n}_{m,\,w}(x^{0})}{t_{\textrm{thres}}}\leq-\frac{c^{3}}{\chi^{4}}\cdot\frac{\delta^{2}}{\iota},
\]}}which further means that in each epoch $n$, Algorithm 5 will terminate
within the following number of iterations:
\[
\frac{\Delta f}{\frac{c^{3}}{\chi^{4}}\cdot\frac{\delta^{2}}{\iota}}=\frac{\chi^{4}}{c^{3}}\cdot\frac{\iota\,\Delta f}{\delta^{2}}=O\left(\frac{\iota\,\Delta f}{\delta^{2}}\log^{4}\left(\frac{d\,\iota\,\Delta f}{\delta^{2}\epsilon}\right)\right).
\]
It can also be shown that in each epoch, when Algorithm 5 terminates,
the point it finds is actually an $\delta$-second-order stationary
point of $F^{n}_{m,w}$. However, during the entire run of Algorithm
5 in each epoch, the number of times the perturbations are added is at most
\[
\frac{1}{t_{\textrm{thres}}}\cdot\frac{\chi^{4}}{c^{3}}\cdot\frac{\iota\,\Delta f}{\delta^{2}}=\frac{\chi^{3}}{c}\frac{\sqrt{\iota\,\delta}\Delta f}{\delta^{3}}.
\]
{\color{black}{Finally, the probability that Algorithm 5 terminates and finds a $\delta$-second-order stationary point is at least}}
\[
1-\frac{d\,\kappa}{\sqrt{\iota\,\epsilon}}\exp(-\chi)\frac{\chi^{3}}{c}\frac{\sqrt{\iota\,\delta}\Delta f}{\delta^{3}}\geq1-\delta,
\]
given the formulation of $\chi$.\\
\noun{}\\
\noun{Proof of Theorem \ref{Theorem 4.9 }: }With probability $1-\epsilon$,
and after \[O\left(\frac{\kappa\triangle f\,w^{2}}{\delta^{2}}\log^{4}\left(\frac{d\kappa\triangle fw^{2}}{\delta^{2}\epsilon}\right)\,N\right),\]
gradient estimations, the {\color{black}{long-run average cumulative regret}}, denoted
by $\mathfrak{R}^{G}_{m}$, is as follows until time $N$ is
\begin{align*}
\frac{\mathfrak{R}^{G}_{m}(N)}{N} = & \frac{1}{N}\left[\sum_{n=1}^{N}\Psi_{m}^{n}(x_{m}^{n})-\inf_{x\in\mathcal{F}}\sum_{n=1}^{N}\Psi_{m}^{n}(x)\right]\\
\leq & \frac{1}{N}\left[\sum_{n=1}^{N}\Psi_{m}^{n}(x_{m}^{n})-\sum_{n=1}^{N}\Psi_{m}^{n}(x_{m}^{n,\ast})\right]\\
\leq & \frac{\mu}{\kappa} \frac{1}{N} \sum_{n=1}^{N} \|\nabla_{x} \Psi^{n}_{m}(x_{m}^{n})\|\\
= & \frac{\mu\delta}{\kappa}.
\end{align*}
{\color{black}{Assumption \ref{Assumption 4.5} can be ensured with a high probability. The probability of attaining the cumulative regret bound is thus no higher than $1-\epsilon$.}}
\end{appendices}
\end{document}